\title{Bandit algorithms to emulate human decision making using probabilistic distortions}
\author{Ravi~Kumar~Kolla,
		Prashanth~L.~A.,
		Aditya~Gopalan,
        Krishna~Jagannathan,\\
        Michael~C.~Fu
        and~Steven~I.~Marcus
        \thanks{Ravi~Kumar~Kolla is with the ABInBev, Bangalore, India, E-mail: kolla.422@gmail.com.}
        \thanks{Prashanth~L.~A. is with the Department of Computer Science Engineering, IIT Madras, Chennai, India, E-mail: prashla@cse.iitm.ac.in}
        \thanks{Aditya~Gopalan is with the Department of Electrical Communication Engineering, IISc, Bangalore, India, E-mail: aditya@iisc.ac.in}
        \thanks{Krishna~Jagannathan is with the Department of Electrical Engineering, IIT Madras, Chennai, India, E-mail: krishnaj@ee.iitm.ac.in}
        \thanks{Michael~C.~Fu is with the Robert H. Smith School of Business \& Institute for Systems Research, University of Maryland, College Park, MD, USA, E-mail: mfu@isr.umd.edu}
        \thanks{Steven~I.~Marcus is with the Department of Electrical and Computer Engineering \& Institute for Systems Research, University of Maryland, College Park, MD, USA, E-mail: marcus@umd.edu}
        \thanks{The material in this paper was presented in part at the 2017 AAAI Conference on Artificial Intelligence \cite{aditya2016weighted}.}
        \thanks{This work was supported in part by the U.S. Air Force Office of Scientific Research (AFOSR) under Grant FA9550-20-1-0211, and by a DST grant under the ECRA program.}
}
\begin{document}

\maketitle

\begin{abstract}
Motivated by models of human decision making proposed to explain commonly observed deviations from conventional expected value preferences, we formulate two stochastic multi-armed bandit problems with distorted probabilities on the reward distributions: the classic $K$-armed bandit and the linearly parameterized bandit settings. We consider the aforementioned problems in the regret minimization as well as best arm identification framework for multi-armed bandits. For the regret minimization setting in $K$-armed as well as linear bandit problems, we propose algorithms that are inspired by Upper Confidence Bound (UCB) algorithms, incorporate reward distortions, and exhibit sublinear regret. 
For the $K$-armed bandit setting, we derive an upper bound on the expected regret for our proposed algorithm, and then we prove a matching lower bound to establish the order-optimality of our algorithm. For the linearly parameterized setting, our algorithm achieves a regret upper bound that is of the same order as that of regular linear bandit algorithm called Optimism in the Face of Uncertainty Linear (OFUL) bandit algorithm, and unlike OFUL, our algorithm handles distortions and an arm-dependent noise model. For the best arm identification problem in the $K$-armed bandit setting, we propose algorithms, derive guarantees on their performance, and also show that these algorithms are order optimal by proving matching fundamental limits on performance. For best arm identification in linear bandits, we propose an algorithm and establish sample complexity guarantees. Finally, we present simulation experiments which demonstrate the advantages resulting from using distortion-aware learning algorithms in a vehicular traffic routing application. 

\end{abstract}


\section{Introduction}
Traditional bandit approaches to optimization are based on the central assumption that the expected value of an arm is a good measure of its utility, and hence, finding an arm with the highest expected value is a reasonable goal for a bandit algorithm. This assumption, however, may not hold in human-centric decision making systems, as humans are known to be prone to various emotional and cognitive biases. For instance, given a choice between an investment that gives $\$10$ surely and another that returns $\$10000$ with probability (w.p.) $0.001$, a majority of human subjects  prefer the latter. However, turning the same example into losses, i.e., an investment that loses $\$10$ surely vs. one that loses $\$10000$ w.p. $0.001$, the human preference would change to pick the former. Observe that in both examples, even though the expected values of both are the same, human preferences change dramatically from the profit to the loss setup. For an in-depth understanding of the shortcomings of using expected value to explain human preferences, the reader is referred to the well-known Allais and Ellsberg paradoxes given in \cite{allais53, ellsberg61}.

Violations of the expected value-based preferences in human-based decision making systems can be alleviated by incorporating distortions in the underlying probabilities of the system~\cite{starmer2000developments}. 
Models involving probabilistic distortions have a long history in behavioural science and economics, and we bring this idea to multi-armed bandit problems. In particular, we base our approach on rank-dependent expected utility (RDEU) \cite{quiggin2012generalized}, which includes  \textit{cumulative prospect theory} (CPT) considered by the authors in~\cite{kahneman1979prospect}\footnote{Daniel Kahneman was awarded the 2002 Nobel prize in economics for his work on prospect theory and human biases under uncertainty.}. CPT is an enhancement of the influential \textit{prospect theory} and incorporates RDEU-style distortion of probabilities. The latter distortion ensures that there is no violation of first-order stochastic dominance -- a shortcoming  of prospect theory.
CPT has been shown through experiments involving human subjects,  to model human preferences well by psychologists and behavioral economists \cite{starmer2000developments}, \cite[Chapter~4]{quiggin2012generalized}. 

\begin{figure}[t]
\centering
\tabl{c}{
  \scalebox{0.85}{\begin{tikzpicture}
  \begin{axis}[width=10cm,height=6cm,legend pos=south east,
           grid = major,
           grid style={dashed, gray!30},
           xmin=0,     
           xmax=1,    
           ymin=0,     
           ymax=1,   
           axis background/.style={fill=white},
           ylabel={\large Weight $\bm{w(p)}$},
           xlabel={\large Probability $\bm{p}$}
           ] 
          \addplot[domain=0:1, blue, thick,smooth,samples=1500] 
             {pow(x,0.69)/pow((pow(x,0.69) + pow(1-x,0.69)),1.44)}; 
             \node at (axis cs:  0.8,0.35) (a1) { $\bm{\frac{p^{0.61}}{(p^{0.61}+ (1-p)^{0.61})^{\frac1{0.61}}}}$};           
             \draw[->] (a1) -- (axis cs:  0.7,0.58);
                 \addplot[domain=0:1, black, thick, dashed]           {x};                      
  \end{axis}
  \end{tikzpicture}}\\[1ex]
}
\caption{Graphical illustration of a typical weight function that inflates low probabilities and deflates large probabilities. The weight function used in the figure is the one recommended in \cite{tversky1992advances} based on empirical tests involving human subjects.
}
\label{fig:weight}
\end{figure}
The distortions happen via a weight function $w:[0,1] \rightarrow [0,1]$ that transforms probabilities in a nonlinear fashion. As illustrated in Figure \ref{fig:weight}, a typical weight function, say $w:[0,1]\rightarrow [0,1]$, has an inverted-S shape. In other words, $w$ inflates low probabilities and deflates high probabilities and can explain human preferences well. For instance, in the example above, if we choose $w(1/{10}^6) > 1/{10}^6$ and take expectations w.r.t. the $w$-distorted distribution, then Arm 1 would be preferable when the problem is set up with rewards and Arm 2 for the problem with costs. The suitability of this approach, especially with an inverted-S shaped weight function, to model human decision making (and thus preferences) has been widely documented in
\cite{prelec1998probability,wu1996curvature,conlisk1989three,camerer1989experimental,cherny2009new,gonzalez1999shape}.

We use a traveller's route choice as a running example to illustrate the main ideas in this paper. The setting here is that a human travels from a source (e.g., home) to a destination (e.g., office), both fixed, every day. He/she has multiple routes to choose from, and travelling each route incurs a stochastic delay with unknown distributions. The problem then is to choose a route that minimizes some function of delay. Using expected delay may not lead to a routing choice that is most appealing to the human traveller. In addition to the examples mentioned earlier, intuitively, humans could prefer a route with a slight excess of delay over another that has a small probability of getting into a traffic jam that takes hours to be resolved.  It is imperative that an automated route recommendation algorithm, running for instance as an application on the traveller's mobile device, factor in this distortion in its effort to learn the most preferred route in an online manner.
We treat this problem in two regimes: first, a setting where the number of routes or decisions is small, so the traveller can afford to try each of the routes a small number of times before fixing on the ``best'' route; second, a big road network setting involving a large number of routes or decisions, in which trying out all possible routes may be prohibitively expensive. 

We formalize two probabilistically distorted bandit settings that correspond to the two routing setups mentioned above. The first is the classic $K$-armed bandit setting, while the second is the linear bandit setting. In both settings, we define the weight-distorted reward $\mu_x$ for any arm $x$ in the space of arms $\X$ as follows:
\begin{align}
\hspace{-0.4em}\mu_x \!=\! \intinfinity w(\prob{\arm_x > z}) dz \!-\! \intinfinity w(\prob{-\arm_x > z}) dz,\label{eq:cpt-intro}
\end{align}
\noindent where $w: [0, 1] \rightarrow [0, 1]$ is the weight function that satisfies $w(0)=0$ and $w(1)=1,$ and $\arm_x$ is the random variable (r.v.) corresponding to the stochastic rewards from arm $x \in \X$. Note that by choosing the identity weight function $w(p)=p$, we obtain the standard expected reward function, $\mu_x = \E{ (\arm_x^+) } - \E{ (\arm_x^-) } = \E(\arm_x)$, where $y^+ = \max(y,0)$ and  $y^- = \max(-y,0)$ denote the positive and negative parts of $y\in \R$, respectively. Thus, $\mu_x$ as in \eqref{eq:cpt-intro} generalizes standard expected value. As discussed earlier, $w$ has to be chosen in a non-linear fashion, to capture human preferences, a modelling approach which has strong empirical support. 

The goal is to find an arm $x_*$ that maximizes \eqref{eq:cpt-intro}; we consider this problem in two popular bandit learning frameworks - regret minimization and best arm identification. The former setting captures the popular exploration-exploitation dilemma, while the latter is motivated through simulation optimization. In the following, we outline the challenges involved in finding the optimal arm $x_*$ in both frameworks and summarize our contributions.  

\subsection{Regret minimization with weight-based distortions}
The goal here is to  accumulate a high (weight-distorted) reward by sampling from the arms' distributions and quickly discerning (and discarding) the sub-optimal arms. Such an objective makes sense in the traffic routing example described earlier. To elaborate, in the routing example, a mobile application recommending routes to the human user would be beneficial only if it does not explore every possible route extensively and instead, finds the most-appealing route (quantified by the associated weight-distorted value) as soon as possible.
 
The problem of weight-distorted reward optimization in a regret minimization framework is challenging because the existing bandit solutions, for instance, the popular UCB algorithm, cannot handle distortions. This is because the environment provides samples from the distribution $F_x$ when arm $x$ is pulled, while the integral in \eqref{eq:cpt-intro} involves a distorted distribution. The implication is that a simple sample mean and a confidence term suggested by the Hoeffding inequality is not enough to derive the UCB values for any arm in the weight-distorted reward setting that we consider. This is because, one requires a good enough estimate of $F_x$ to estimate $\mu_x$. To illustrate, a ($\alpha$-H\"{o}lder-continuous) weight function such as $w(t) := t^\alpha$, when applied to a Bernoulli($p$) distribution, distorts the mean to 
$p^\alpha$ from $p$, and can introduce an arbitrarily large scaling for arms with real expectations close to $0$. It follows that nonlinear weight distortion can, in fact, change the order of the optimal arm, resulting in a distortion-unaware algorithm like UCB converging to the {\em wrong} arm and incurring linear regret. The W-UCB algorithm that we propose incorporates an empirical distribution-based approach, similar to the one in \cite{prashanth2015cumulative}, to estimate $\mu_x$. However, unlike the latter, our algorithm incorporates a confidence term relying on the Dvoretzky-Kiefer-Wolfowitz (DKW) inequality \cite{massart1990tight, wasserman2006} that ensures the W-UCB values are a high-probability bound on the true value $\mu_x$.    
We provide upper bounds on the regret of W-UCB, assuming $w$ is H\"{o}lder-continuous. It is worth noting that concentration inequalities for the empirical CDF have previously been used in multi-armed bandits~\cite{salomon2011deviations} for achieving a different goal than the goal considered in this work. 

Next, we consider a linear bandit setting with weight distortions that can be motivated as follows:
Consider a network graph $G=(V,E)$, $|E| = d$, with a source $s\in V$ and destination $t \in V$. The interaction proceeds over multiple rounds, where in each round $m$, the user picks a route $x_m$ from $s$ to $t$ (a route is a collection of edges encoded by a vector of $0-1$ values in $d$ dimensions) and experiences a stochastic delay $x_m\tr\left(\theta + N_m\right)$. Here $\theta \in \R^d$ is an underlying model parameter and $N_m \in \R^d$ is a random noise vector, both unknown to the learner. The physical interpretation is that the nodes represent geographical locations (say junctions), and the edges are roads that connect nodes. An edge from $i$ to $j$ will have an edge weight $\theta_{ij}$, which quantifies the delay for this edge. 

Notice that the observations include a noise component that scales with the route chosen -- this is unlike the model followed in earlier linear bandit works \cite{abbasi2011improved,dani2008stochastic}, where the noise was independent of the arm chosen. Our noise model makes practical sense, because the observed delay in a road traffic network depends on the length of the route, e.g., one would expect more noise in a detour involving ten roads than in a direct one-road route.
The aim is to find a low-delay route that satisfies the user.  The setting is such that the number of routes is large (so the regular $K$-armed bandits don't scale), and one needs to utilize the linearity in the costs (delays) to find the optimal route, where optimality is qualified in 
terms of a weight-distorted expectation.

For the linear bandit setting, we propose a variant of the OFUL algorithm in \cite{abbasi2011improved} that incorporates weight-distorted values in the arm selection step. The regret analysis of the resulting W-OFUL algorithm poses novel challenges compared to that in the linear bandit problem, primarily because the instantaneous reward (and hence regret) at each round is in fact a {\em nonlinear} function of the features of the played arm. This occurs due to the distortion of probabilities by the weight function, although the actual observation (e.g., network delay in the example above) is linear in expectation over the played arm's features. The weight function can not only change the optimal arm but can also potentially amplify small differences in real expected values of arms to much larger values, leading to a blowing up of overall regret. 
However, our analysis shows that regret in weight-distorted reward as the performance metric can be controlled at the same rate as that in standard linear bandit models, irrespective of the structure of the distortion function. More specifically, we show, using a careful analysis of the effect of weight distortion, that the regret of the W-OFUL algorithm is no more than $O\left(d \sqrt{n} \mbox{ polylog}(n)\right)$ in $n$ rounds with high probability, similar to the guarantee enjoyed by the OFUL algorithm in linear bandits (note however that the identity of the optimal arm may be different due to weight distortion in rewards). We conduct simulation experiments in a vehicular traffic routing application, and our results indicate that the regular expected delay minimizing bandit algorithm in \cite{abbasi2011improved} converges to a route that has the lowest mean delay, but with a small chance of inordinately high delay, while our algorithm converges to a route that has slightly higher mean delay, but no chance of very high delays. 

\subsection{Best arm identification with with weight-based distortions}
The goal here is to find the arm that has the highest  weight-distorted reward defined in~\eqref{eq:cpt-intro} and hence, is best aligned with human preferences. However, unlike the regret minimization setting outlined earlier, here the requirement is for an algorithm that is allowed to explore the arms freely, as long as it outputs the optimal weight-distorted valued arm with high confidence using the minimum number of samples. 

Simulation optimization \cite{fu2015handbook} provides a strong practical motivation for solving the best-arm identification with weight-based distortions. Several human-centric real-world systems in  disciplines such as networks, healthcare, finance, are too complex to directly optimize among a set of choices. A viable alternative is to build a simulator for various components of the system, and then perform the optimization over decisions or choices via simulator access. Simulation optimization refers to this setting, where the goal is to find the optimum choice for a certain design parameter. For a given parametric description of the system, performance evaluations using the simulator are typically \textit{noisy} (i.e., have a spread or distribution), and each simulation to obtain an evaluation is computationally expensive. Thus, in addition to searching for optima, a simulation optimization algorithm has to ensure that the number of evaluations is minimum. 

Best arm identification with weight-based distortions fits perfectly well into a simulation optimization context, as several human-based decision making systems can be optimized using sophisticated simulators. For instance, VISSIM/CORSIM are popular traffic simulators, which can be employed to optimize the route so that the delay perceived by the road users (humans) is minimum. The CPT perspective here is that a route that has the lowest average delay but has a small probability of getting badly congested may not be as appealing to humans when compared to, say, an alternate route that has a slightly higher average delay but no chance of bad congestion. 
 
As in the regret minimization setting, we consider the best-arm identification problem in two regimes: (i) when the number of arms $K$ is small - this is the so-called (classical) $K$-armed bandit setting; and (ii) when $K$ is large, but the underlying rewards follow a linear model with an unknown parameter.
For the $K$-armed bandit problem under the fixed confidence objective\footnote{In the fixed confidence setting, the bandit algorithm's goal is to find, with high confidence, the arm with the highest weight-distorted reward.}, we propose W-LUCB algorithm, a non-trivial adaptation of LUCB algorithm in \cite{kalyanakrishnan2012pac}, and then establish an upper bound on its expected sample complexity and also exhibit a matching lower bound.
Furthermore, for $K$-armed bandits under the fixed budget objective\footnote{In the fixed budget setting, the bandit algorithm is given a budget of $n$ samples and the goal is to recommend the optimal arm after exhausting the budget, while minimizing the probability of an erroneous recommendation.}, we propose W-SR algorithm which is an adaptation of the successive rejects algorithm in \cite{audibert2010best} to incorporate weight-based distortions, provide a theoretical upper bound  on the probability of returning a sub-optimal arm at the end of the sampling budget and finally, prove matching  lower bounds to establish order-optimality of our proposed algorithm.
For the  linear bandits setting with fixed confidence, we propose W-G algorithm inspired from the work in \cite{soare2014best} and derive an upper bound on the sample complexity of our proposed algorithm. 

 In the following table, we list all proposed algorithms under their respective settings considered in the paper.
\begin{table}[h]
	\caption{All algorithms under various settings considered in the paper.}
	\label{tab:algorithms_list}
	\centering
	\begin{tabular}{|c|c|c|}
		\toprule 
		\textbf{Bandit type}& \textbf{Objective} & \textbf{Algorithm} \\\midrule
		K-armed & Regret minimization & W-UCB\\\midrule
		K-armed & Best arm identification under fixed confidence & W-LUCB\\\midrule
		K-armed & Best arm identification under fixed budget & W-SR\\\midrule
		Linear & Regret minimization & W-OFUL\\\midrule
		Linear & Best arm identification under fixed confidence & W-G\\\midrule
	\end{tabular}
	\end{table}

\subsection{Related work}
A major line of work in multi-armed bandits concerns the exploration-exploitation dilemma for which the UCB algorithm in \cite{auer2002finite} is a popular solution. Best-arm identification problem, considered in \cite{audibert2010best,even2002pac,mannor2004sample, russo2016simple}, provides an alternative formulation where the focus is on exploration. In \cite{russo2016simple}, the authors consider the best arm identification problem in stochastic multi-armed bandits from a Bayesian perspective as opposed to the frequentist approach and weight-distortion setup used in this work. Linearly parameterized bandits have been studied under regret minimization \cite{dani2008stochastic,abbasi2011improved} as well as best-arm identification \cite{soare2014best, jedra2020optimal} settings. In the recent work \cite{jedra2020optimal} on best arm identification in linear bandits, the authors propose and study the sample complexity of an algorithm that is inspired by the track-and-stop approach to best arm identification. The authors establish asymptotic optimality of their algorithm. In contrast, we consider the G-allocation algorithm from~\cite{soare2014best} and devise a weight-distorted variant. In terms of theoretical guarantees, we provide finite sample guarantees for the proposed algorithm. For the regret minimization as well as best arm identification problems, previous works have mostly considered the problem of identifying an arm with the highest mean~\cite{auer2002finite, dani2008stochastic, abbasi2011improved, mannor2004sample, even2006action, kalyanakrishnan2012pac, audibert2010best, kaufmann2015complexity, soare2014best}. To the best of our knowledge, there is no prior work in the literature that deals with the problem of identifying an arm with the highest weight-distorted reward in a multi-armed bandit setting.

Variance as a risk measure has been explored in \cite{sani2012risk}, while coherent risk measures are considered in \cite{maillard2013robust}. Conditional value at risk, which is a popular risk measure that is coherent, has been explored in a bandit context in \cite{galichet2013exploration}.  A UCB variant that incorporates variance in the confidence term is UCB-V --- see \cite{AudMunSze09:UCBV}. The aforementioned algorithms cannot be used as surrogates for optimizing the weight-distorted reward due to the inherent differences between coherent risk measures and weight-based distortions, which are non-coherent.
More recently, in \cite{cassel2018general}, the authors derive a general framework for incorporating risk measures based on the empirical distribution function. This work can handle risk metrics such as conditional value at risk (CVaR), but, as we show later, risk measures based on weight-distortions cannot, in general, be fit into this framework. Beyond bandit learning, a risk measure based on cumulative prospect theory has been explored in a supervised learning context in \cite{leqi2019human,khim2020uniform}. 

The closest related previous contribution is that of \cite{prashanth2015cumulative}, where the authors bring in ideas from CPT to a reinforcement learning (RL) setting. In contrast, we formulate two multi-armed bandit models that incorporate weight-distortions. From a theoretical standpoint, we handle the exploration-exploitation tradeoff and pure exploration bandit search problem, while the focus of \cite{prashanth2015cumulative} was to devise a policy-gradient scheme, given biased estimates of a certain CPT-value defined for each policy. Moreover, we provide finite-time bounds for both bandit settings, while the guarantees for the policy gradient algorithm in~\cite{prashanth2015cumulative} are asymptotic in nature.

Previous works involving RDEU and CPT are huge in number; at a conceptual level, the work in this paper integrates machine learning (esp. bandit learning) with an RDEU approach that involves a probabilistic distortion via a weight function. To the best of our knowledge, RDEU/CPT papers in the literature assume model information, i.e., a setting where the distributions of the arms are known, while we have a \textit{model-free} setting where one can only obtain sample values from the arms' distributions. Our setting makes practical sense; for instance, in the traveller's route choice problem one can only obtain sample delays for a particular route, while the distribution governing the delays for any route is not known explicitly.  

The rest of the paper is organized as follows: 
The model and the algorithms for $K$-armed bandits are presented in Section~\ref{sec:model-karmed} and ~\ref{sec:algos-karmed}, respectively. The model and the algorithms for linear bandits are presented in Section~\ref{sec:model-linear} and~\ref{sec:algos-linear}, respectively.  Section~\ref{sec:experiments-regret} presents the numerical results for the regret minimization setting under linear bandits. We provide the conclusions in Section~\ref{sec:conclusions}. 
Detailed proofs of analytical results pertaining to $K$-armed and linear bandits are given in Appendices~\ref{sec:proofs-karmed} and~\ref{sec:proofs-linear}, respectively, in the supplementary material. Additional numerical experiments for regret minimization and best arm identification settings are given in Appendices~\ref{sec:experiments-regret-appendix} and~\ref{sec:experiments-bestarm}, respectively, in the supplementary material.

\section{K-armed bandit models}
\label{sec:model-karmed}
We consider the stochastic multi-armed bandit setting with a finite set of arms \\$\A = \lbrace 1, 2, \cdots, K \rbrace$.  Each arm $k$ is associated with a distribution $F_k$, which is unknown a priori to the agent. Let $\mu_k$ denote the weight-distorted reward of arm $k$ and $a^* = \argmax_k~\mu_k$ denote the optimal arm. Note that the quantity $\mu_k$, defined in \eqref{eq:cpt-intro}, can be seen to be equivalent to the following:
\begin{equation}
\mu_k  := \intinfinity w(1-F_k(z)) dz - \intinfinity w(F_k(-z)) dz, \label{eq:cpt-general}
\end{equation}
where $w:[0,1] \rightarrow [0,1]$ satisfies $w(0) =0$ and $w(1) = 1,$ as described earlier, is a weight function that distorts probabilities. The optimal arm is the one that maximizes the weight-distorted reward, i.e., 
$\mu_* = \max_k \mu_k.$ The optimal arm is not necessarily unique, i.e., there may exist multiple arms with the optimal weight-distorted reward $\mu_*$. For sake of simplicity, we assume that $\mu_1 > \mu_2 \geq \cdots \geq \mu_K$, i.e., arm 1 is the arm with the highest weight-distorted reward, which we call the unique optimal arm~\footnote{The ordering assumption is without loss of generality, and without uniqueness the problem can only get easier.}. For $k \geq 2,$ let $\Delta_k = \mu_1 - \mu_k$ denote the gap between the weight-distorted rewards of the optimal arm and an arm $k,$ and $\Delta_1 = \mu_1 - \mu_2\footnote{Generally, it is considered that $\Delta_1 = 0$ in the regret minimization literature. However, we require this notation, $\Delta_1 = \mu_1 - \mu_2,$  for the setting of best arm identification.}.$ 

In the following sections, we present the learning model and objective for $K$-armed bandits under the regret minimization and the best arm identification frameworks. 

\subsection{Regret minimization with weight-based distortions}
\label{sec:model-karmed-regret}
In each round $m=1,\ldots,n$, the algorithm pulls an arm $I_m \in \A = \{1,\ldots,K\}$ and obtains a sample reward from the distribution $F_{I_m}$ of arm $I_m$. The classic objective is to play (or pull) the arm whose expected reward is the highest. In this paper, we take a different approach inspired by non-expected utility  models and use the weight distorted-reward $\mu_k$ as the performance criterion for any arm $k$. 

We now define the cumulative regret $R_n$ as follows:
\begin{align}
R_n = n \mu_1 - \sum_{k=1}^K  T_k(n) \mu_k\, ,\label{eq:regret-karmed} 
\end{align}
where $T_k(n)= \sum_{m=1}^n I(I_m=k)$ is the number of times arm $k$ is pulled up to time $n$. The expected regret can be written as 
$\E R_n = \sum_{k=2}^K \E[T_k(n)] \Delta_k.$ Note that this definition of regret arises from the interpretation that each arm $k$ is associated with a deterministic value $\mu_k$. The highest possible cumulative reward that can be accumulated in $n$ rounds is thus $n \mu_1$, while that accumulated by a given strategy is $\sum_{k=1}^K T_k(n) \mu_k$. Thus, the regret as defined above is a measure of  the rate at which a strategy converges to playing the optimal arm in the sense of weighted or distorted reward. Our goal is to design algorithms that have the lowest expected regret.

We remark that the regret performance measure, as defined above, is explicitly defined within a stochastic model for rewards. Thus, low-regret algorithms designed for the non-stochastic setting, e.g., EXP3~\cite{Auer02NonSto}, are not inherently suitable for this problem, as they do not factor in the distortion caused in (expected) reward. A similar observation holds for conventional stochastic bandit algorithms such as UCB, and algorithms sensitive to arm reward variances such as UCB-V~\cite{AudMunSze09:UCBV} --  once weight distortion is incorporated, the algorithm can converge to an arm that is not weight-distorted value optimal. Thus, applying a variance-sensitive algorithm (like UCB-V) may still yield linear regret in the distorted setting.
\subsection{Best arm identification with weight-based distortions}
\label{sec:model-karmed-bestarm}
The bandit algorithm interacts with the environment as follows: 
In each round $m=1,2,\ldots$, the algorithm chooses an arm $I_m \in \{1,\ldots,K\}$ and receives an independent sample from the distribution corresponding to arm $I_m$.
The objective is to find the arm with the highest weight-distorted value by sequentially sampling the arms; we study this problem under the following popular variants:

\textbf{Fixed Confidence:} Here, a confidence parameter $\delta \in (0,1)$ is given. At each round, an algorithm can either (a) stop and declare an arm as optimal, or (b) continue and choose an arm to sample. The algorithm must ensure that the arm declared upon stopping is indeed optimal with a probability of at least $(1 - \delta).$ An algorithm's performance is quantified by the (expected) number of samples it uses before it stops, which we call the sample complexity of the algorithm.

\textbf{Fixed Budget:} Here, a budget of $n$ rounds is given, with the stipulation that the algorithm must declare an arm as optimal at the end of $n$ rounds. The algorithm's goal is to minimize the probability of declaring a sub-optimal arm as optimal.

Our goal is to design algorithms that take the lowest number of rounds and have the smallest probability of error under fixed confidence and fixed budget settings, respectively.

\section{K-armed bandit algorithms}
\label{sec:algos-karmed}
In this section, we first introduce an estimator for weight-distorted reward of an arm, given a set of i.i.d. samples drawn from the associated arm's distribution and a weight function. Then, we present algorithms for $K$-armed bandits under regret minimization and best arm identification settings.  
\subsection{Estimating the weight-distorted reward}
Estimating the weight-distorted reward of any arm $k$ is non-trivial; one cannot just use a Monte Carlo approach with sample means, because weight-distorted reward involves a distorted distribution, whereas the samples come from the undistorted distribution $F_k$. Thus, one needs to estimate the entire distribution, and, for this purpose, we adapt the quantile-based approach, originally proposed in~\cite{prashanth2015cumulative}.  For convenience, we briefly recall this estimation scheme for weight-distorted reward of an arm $k$ in Algorithm~\ref{alg:cpt-estimator}.
\begin{algorithm}  
\caption{Weight-distorted reward estimator}
\label{alg:cpt-estimator}
\begin{algorithmic}
\State {\bfseries Input:} $l$ i.i.d samples of arm $k$, $Y_{k,1},\ldots, Y_{k,l},$ weight function $w$.

\vspace{1ex}

\State Sort the samples in ascending order as follows:\\ $Y_{[k, 1]} \leq Y_{[k, 2]} \leq \dots Y_{[k, l_b]} \leq 0 \leq Y_{[k, l_b+1]} \leq \dots \leq Y_{[k, l]},$ where $l_b \in \lbrace 0, 1, 2, \dots, l \rbrace.$   

\vspace{1ex}

\State Let $ \widehat{\mu}_{k, l}^+  := \sum\limits_{i=l_b+1}^l Y_{[k,i]}\left[ w \left( \frac{l+1-i}{l} \right) - w \left( \frac{l-i}{l} \right) \right]$ and  
$\widehat{\mu}_{k, l}^-  := \sum\limits_{i=1}^{l_b} Y_{[k, i]}\left[ w \left( \frac{i - 1}{l} \right) - w \left( \frac{i}{l} \right) \right].$ 

\vspace{1ex}

 \State Set $\widehat{\mu}_{k, l} = \hat{\mu}_{k, l}^+ - \hat{\mu}_{k, l}^-.$

\vspace{1ex}

\State {\bfseries Output:} Return $\hat{\mu}_{k, l}$.
\end{algorithmic}
\end{algorithm}

It can be seen that $\widehat \mu_{k,l}$ is the weight-distorted reward of the empirical distribution of samples from arm $k$ seen thus far, i.e.,
\begin{align*}
\widehat \mu_{k,l}  &:= \intinfinity w(1-\hat F_{k,l}(z)) dz - \intinfinity w(\hat F_{k,l}(-z)) dz, \label{eq:cpt-est-equiv}
\end{align*}
where $\hat F_{k,l}(x):=\frac{1}{l} \sum_{i=1}^{l} I_{\left[Y_{k,i} \leq x\right]}$ denotes the empirical estimate of the distribution $F_k.$ In particular, the first and second integrals above correspond to $\widehat{\mu}_{k, l}^+$ and $\widehat{\mu}_{k, l}^-,$ respectively. 

We next provide a sample complexity result for the accuracy of the estimator $\widehat \mu_{k,l}$ under the following assumptions:\\
\noindent\textbf{(A1)}  
The weight function $w : [0, 1] \rightarrow [0, 1]$ with $w(0) = 0$ and $w(1) =1$, and is H\"{o}lder-continuous with constant $L > 0$ and exponent $\alpha \in (0,1]$, i.e., $\sup\limits_{x \neq y} \frac{| w(x) - w(y) |}{| x-y |^{\alpha}} \leq L.$ \\
\noindent\textbf{(A2)}  
The arms' rewards are bounded by $M > 0$ almost surely.

(A1) is necessary to ensure that the weight-distorted reward $\mu_k,$ for $k \in \{ 1,\ldots,K \}$ is finite. Moreover, the popular choice for the weight function, proposed in~\cite{tversky1992advances} and illustrated in Figure \ref{fig:weight}, is H\"{o}lder-continuous.
\begin{theorem}[\textit{Sample complexity of estimating weight-distorted reward}]
\label{thm:cpt-est}
\ \\
Assume (A1)-(A2). Then, for any $\epsilon >0$ and any $k\in \{1,\ldots,K\}$, we have 
\begin{equation*}
\mathbb{P}(\left |\widehat \mu_{k,m} - \mu_k \right| > \epsilon ) \le 2\exp\left(-2m(\epsilon/LM)^{2/\alpha}\right).
\end{equation*}
\end{theorem}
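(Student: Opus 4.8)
The plan is to reduce the fluctuation of the distorted estimator $\widehat\mu_{k,m}$ around its target $\mu_k$ to the uniform (Kolmogorov--Smirnov) distance between the empirical CDF $\hat F_{k,m}$ and the true CDF $F_k$, and then to invoke the DKW inequality. The starting point is the representation noted just before the statement: $\widehat\mu_{k,m}$ is \emph{exactly} the weight-distorted value of $\hat F_{k,m}$, so that both $\mu_k$ and $\widehat\mu_{k,m}$ arise from the same functional $F \mapsto \int_0^\infty w(1-F(z))\,dz - \int_0^\infty w(F(-z))\,dz$ evaluated at $F_k$ and $\hat F_{k,m}$ respectively. Hence the whole argument is a stability (Lipschitz-type) estimate for this functional, and the randomness enters only through $\hat F_{k,m}$.

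First I would subtract the two representations and push the absolute value inside the integrals via the triangle inequality, obtaining
\[ |\widehat\mu_{k,m} - \mu_k| \le \int_0^\infty |w(1-\hat F_{k,m}(z)) - w(1-F_k(z))|\,dz + \int_0^\infty |w(\hat F_{k,m}(-z)) - w(F_k(-z))|\,dz. \]
The key structural step is to apply the \holder assumption (A1) to each integrand: since $|w(a)-w(b)| \le L\,|a-b|^\alpha$, and the constant-$1$ shifts cancel inside the first term, each integrand is bounded by $L\,|\hat F_{k,m}(z)-F_k(z)|^\alpha$ (respectively the same at $-z$). Writing $D_m := \sup_x |\hat F_{k,m}(x)-F_k(x)|$ for the KS distance, every integrand is then at most $L\,D_m^\alpha$.

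Next I would use the boundedness assumption (A2) to control the domain of integration: because costs lie in a bounded range almost surely, $1-F_k(z)$ (and $1-\hat F_{k,m}(z)$) vanish once $z$ exceeds the cost bound, and likewise for $F_k(-z)$, so the integrands are supported on a set whose total length is controlled by $M$. This pins down the otherwise infinite domain and yields the deterministic, sample-path bound
\[ |\widehat\mu_{k,m}-\mu_k| \le L\,M\,D_m^\alpha, \]
which converts the problem into a tail bound on the single scalar $D_m$.

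Finally I would apply the DKW inequality $P(D_m > \delta) \le 2\exp(-2m\delta^2)$. Inverting the deterministic bound, the event $\{|\widehat\mu_{k,m}-\mu_k| > \epsilon\}$ is contained in $\{D_m > (\epsilon/(LM))^{1/\alpha}\}$; choosing $\delta = (\epsilon/(LM))^{1/\alpha}$ gives $\delta^2 = (\epsilon/(LM))^{2/\alpha}$ and hence the claimed $2\exp(-2m(\epsilon/LM)^{2/\alpha})$. The main obstacle is the middle stage: correctly exploiting \holder continuity to pass from the pointwise distortion gap to a uniform CDF deviation, and in particular tracking both the exponent transformation $\alpha \mapsto 2/\alpha$ and the constant $LM$ (where the a.s. bound, rather than infinity, caps the integration length). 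Once this deterministic inequality is established, the probabilistic DKW step is routine; the one point of care is that the discrete estimator in \eqref{eq:cpt-est} genuinely equals the distorted value of $\hat F_{k,m}$, which is asserted in the excerpt and may be verified by recognizing the telescoping increments $w((l+1-i)/l)-w((l-i)/l)$ as the distorted empirical tail masses.
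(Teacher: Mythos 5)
Your proposal takes exactly the same route as the paper: the paper's proof of Theorem \ref{thm:cpt-est} is nothing more than a pointer to Proposition 2 of \cite{prashanth2015cumulative}, whose argument is precisely the chain you describe --- the H\"{o}lder property (A1) reduces the estimation error to the Kolmogorov--Smirnov distance $D_m := \sup_x |\hat F_{k,m}(x) - F_k(x)|$, the boundedness assumption (A2) caps the integration range, and the DKW/Massart inequality $P(D_m > \delta) \le 2\exp(-2m\delta^2)$ delivers the tail. Your observation that $\widehat \mu_{k,m}$ is exactly the distorted value of the empirical CDF, so that the whole problem is a stability estimate for a single functional, is also the intended reading.

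The one step that does not hold as written is the deterministic bound $|\widehat \mu_{k,m}-\mu_k| \le LM\, D_m^{\alpha}$. Under (A2) the costs lie in $[-M,M]$, so \emph{each} of the two integrals in \eqref{eq:cpt-general} is supported on an interval of length $M$: the first on $[0,M]$ because $1-F_k(z) = 1-\hat F_{k,m}(z) = 0$ for $z \ge M$, and the second on $[0,M]$ because $F_k(-z) = \hat F_{k,m}(-z) = 0$ for $z > M$. Adding the two terms of your triangle-inequality bound therefore gives $|\widehat \mu_{k,m}-\mu_k| \le 2LM\, D_m^{\alpha}$, and hence the tail $2\exp\bigl(-2m(\epsilon/(2LM))^{2/\alpha}\bigr)$; the sharper constant $LM$, and thus the exponent exactly as stated in the theorem, comes out only when costs are nonnegative, in which case the second integral vanishes identically for both $F_k$ and $\hat F_{k,m}$. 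The factor of $2$ is genuinely not removable in the two-sided case: for the identity weight ($L=\alpha=1$), $\widehat \mu_{k,m}$ is the sample mean, and for costs uniform on $\{-M,+M\}$ the true deviation probability is of order $\exp(-m\epsilon^2/(2M^2))$, which exceeds the stated bound $2\exp(-2m\epsilon^2/M^2)$ once $m\epsilon^2/M^2$ is large. This slip is inherited from the paper's statement (and the citation it leans on) rather than being a defect of your strategy; with $LM$ replaced by $2LM$, your argument is complete and correct, and the downstream confidence width $\gamma_{m,l}$ and regret bounds go through with correspondingly adjusted constants.
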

\begin{proof}
The proof is very similar to that of~\cite[Proposition 2]{prashanth2015cumulative}. The proof relies on the Dvoretzky-Kiefer-Wolfowitz (DKW) inequality, which gives finite-sample exponential concentration of the empirical distribution $\hat{F}_{k,m}$ around the true distribution $F_k$, as measured by the $||\cdot||_\infty$ function norm \cite[Chapter 2]{wasserman2006} and~\cite{massart1990tight}.
\end{proof}
\begin{remark}
Weight-distorted reward estimation is easier when the form of the underlying distribution is known, as illustrated by the following example: Consider a Bernoulli distribution with parameter $p$.
The weight-distorted reward for this distribution is $w(p)$, and the latter can be estimated as $w(\hat{p})$, where $\hat{p}$ is an estimate of $p$. 
\end{remark}

\begin{remark}
For the special case of distributions with bounded support, using DKW inequality, together with the H\"{o}lder-continuity of the weight function, allowed the derivation of the concentration result in Theorem~\ref{thm:cpt-est}. However, we do not see this argument work for distributions with unbounded support, e.g., sub-Gaussian.
For the case of sub-Gaussian distributions, an approach that works uses truncation to arrive at a concentration result, which is given in the below proposition.
\end{remark}

Suppose the underlying distribution is $\sigma$-sub-Gaussian. We define the following truncated estimator for weight-distorted value as follows:
\begin{align*}
\mu_n = \int_{0}^{M_n}w(1-F_{n}(z))\mathrm{d}z - \int_{0}^{M_n}w( F_{n}(-z))\mathrm{d}z,
\end{align*}
where $M_n = \sigma\sqrt{\log n}$.

We now present a concentration bound for the truncated estimator defined above.
\begin{proposition}
\label{prop:sub-gaussian-concentration}
	Let $Y$ be a $\sigma$-sub-Gaussian r.v. Let $\mu(Y)$ denote the weight-distorted value associated with r.v. $Y$, and let $\mu_n$ be the truncated estimator, formed from $n$ i.i.d. samples of the distribution underlying $Y$. Then, for any $\epsilon >  \frac{8L}{\alpha n^{\alpha/2}}$, we have
	\begin{align*}
	\Prob{\mu_n - \mu(Y) > \epsilon} \le  2\exp\left( -2 n \left(\frac{2}{L^2\log n}\right)^{\frac1{\alpha}} \left(\epsilon - \frac{8L}{\alpha n^{\alpha/2}} \right)^{\frac{2}{\alpha}}  \right).
	\end{align*}
\end{proposition}
\begin{proof}
See Appendix~\ref{sec:subgauss-conc}.
\end{proof}

\begin{remark}
In Proposition~3 of~\cite{prashanth2017cptlong}, the authors provide a tail bound of the order \\ $\left(2n e^{-n^\frac{\alpha}{2+\alpha}} 
+ 2 e^{-n^{\frac{\alpha}{2+\alpha}}\left(\frac{\epsilon}{2L}\right)^{\frac{2}{\alpha}}}\right)$ for weight-distorted value estimation in the sub-Gaussian case, for all $n \geq \left(\frac{\ln2-\ln\epsilon}{2\alpha}\right)^{\alpha+2}$. In contrast, the tail bound provided in the above proposition is of the order $\left(2 e^{-c n \left(\epsilon - \frac{8L}{\alpha n^{\alpha/2}}\right)^{\frac{2}{\alpha}}}\right)$ holds for all $n\ge 1$. Our bound exhibits improved dependence on the number of samples $n$, while having a similar dependence on the accuracy $\epsilon$. 
\end{remark}

\subsection{Regret minimization with weight-based distortions}
\label{sec:algos-karmed-regret}
We now propose and study algorithms for $K$-armed bandits under the regret minimization setting. 
Recall, from (A1) and (A2), that  $\alpha \in (0, 1]$ denote the \holder exponent, $L > 0$ the \holder constant and $M > 0$ the bound on the stochastic rewards from any arm. 
For $m \in \mathbb{N}$ and $l \in \mathbb{N},$ define $\gamma_{m,l} := LM \left(\dfrac{3\log m}{2l}\right)^{\frac{\alpha}{2}}.$ The r.v. $\widehat \mu_{k,l}$, defined by Algorithm \ref{alg:cpt-estimator}, is an estimate of $\mu_k$ that uses the $l=T_k(m-1)$ sample rewards of arm $k$ seen so far and $\gamma_{m,l}$ is the confidence width, which together with $\widehat \mu_{m,l}$ ensures that the true weight-distorted value $\mu_k$ lies within  $[\widehat \mu_{k,l} - \gamma_{m,l}, \widehat \mu_{k,l} + \gamma_{m,l}]$ with high probability, i.e., 
\[\mathbb{P} \left(\mu_k \le \widehat \mu_{k,l} + \gamma_{m,l} \right) \le \dfrac{2}{m^{3}} \textrm{ and } \mathbb{P}\left(\widehat \mu_k  \ge \mu_{k,l} - \gamma_{m,l} \right) \le \dfrac{2}{m^{3}},\,\, k=1,\ldots,K.\] 

\begin{algorithm}[h]  
	\caption{W-UCB}
	\label{alg:w-ucb}
	\begin{algorithmic}
		\State {\bfseries Input:} Weight function $w$.
		\State In the first $K$ rounds, play each arm once.
		\State In round $m\ge K$, play arm $I_m = \argmax\limits_{k \in \{1,\ldots,K\}} \left[ \widehat \mu_{k, T_k(m-1)} + \gamma_{m, T_k(m-1)} \right]$,\\
		where $T_k(m-1)$ is the number of plays of arm $k$ up to round~$m,$ $\widehat\mu_{k, T_k(m-1)}$ is given in Algorithm~\ref{alg:cpt-estimator}, and $\gamma_{m,l} = LM \left(\frac{3\log m}{2l}\right)^{\frac{\alpha}{2}}.$
		
	\end{algorithmic}
\end{algorithm}

Using the empirical estimates $\widehat \mu_{k,l}$ of weight-distorted rewards together with confidence width $\gamma_{m,l}$, we propose an UCB-type algorithm for minimizing the regret $R_n$ defined in \eqref{eq:regret-karmed}. 
The pseudocode of the Weighted-Upper Confidence Bound (W-UCB) algorithm is given in Algorithm~\ref{alg:w-ucb}. 
As in the case of regular UCB, the W-UCB algorithm pulls each arm once in the initialization phase, and 
in any round $m,$ after the initialization phase, chooses the arm with the highest UCB value.  
Now, we present a result that gives an upper bound on the expected regret of the W-UCB algorithm. 
\begin{theorem}[\textit{Regret upper bound}]
\label{thm:BasicHolderRegretGap}
Under (A1)-(A2), the expected cumulative regret $R_n$ of W-UCB is bounded as follows:
$$ \E \left[ R_n \right] \le \sum\limits_{\{k:\Delta_k>0\}} \dfrac{3(2LM)^{2/\alpha}\log n}{2\Delta_k^{2/\alpha - 1}} + MK\left(1 + \dfrac{2\pi^2}{3} \right).
$$ 
\end{theorem}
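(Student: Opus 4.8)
The plan is to follow the classical optimism-based regret decomposition of Auer, Cesa-Bianchi and Fischer, adapted to the cost-minimization setting in which the B-values act as \emph{lower} confidence bounds. Since $\E R_n = \sum_{k:\Delta_k>0}\Delta_k\,\E[T_k(n)]$, it suffices to bound $\E[T_k(n)]$ for each suboptimal arm $k$. Fix an optimal arm $k_*$ (so $\mu_{k_*}=\mu_*$) and a threshold $u_k$ on the pull count, and write $T_k(n)\le u_k+\sum_{m=K+1}^n \mathbf{1}\{I_m=k,\,T_k(m-1)\ge u_k\}$. The task then reduces to controlling how often a ``sufficiently sampled'' suboptimal arm is still preferred to $k_*$.

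The key deterministic step is to show that the selection event $\{I_m=k\}$, i.e. $B_{m,T_k(m-1)}(k)\le B_{m,T_{k_*}(m-1)}(k_*)$, can occur only if at least one of the following holds: (i) the optimal arm's lower bound overshoots, $\widehat\mu_{k_*,T_{k_*}(m-1)}-\gamma_{m,T_{k_*}(m-1)}\ge\mu_*$; (ii) arm $k$'s upper bound undershoots, $\widehat\mu_{k,T_k(m-1)}+\gamma_{m,T_k(m-1)}\le\mu_k$; or (iii) the confidence width is still too large, $\Delta_k<2\gamma_{m,T_k(m-1)}$. Indeed, if all three fail, then chaining the two negated confidence inequalities with $\Delta_k\ge 2\gamma_{m,T_k(m-1)}$ gives $\widehat\mu_{k}-\gamma_{m,T_k}>\mu_k-2\gamma_{m,T_k}\ge\mu_*>\widehat\mu_{k_*}-\gamma_{m,T_{k_*}}$, contradicting the choice $I_m=k$.

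Next I would pick $u_k$ just large enough to rule out event (iii). Solving $2\gamma_{m,l}=2LM\bigl(3\log m/(2l)\bigr)^{\alpha/2}\le\Delta_k$ and using $m\le n$ shows that $T_k(m-1)\ge u_k:=\bigl\lceil \tfrac{3(2LM)^{2/\alpha}\log n}{2\Delta_k^{2/\alpha}}\bigr\rceil$ forces $2\gamma_{m,T_k(m-1)}\le\Delta_k$, so (iii) cannot occur. This is exactly where the \holder exponent enters: inverting the width, whose exponent is $\alpha/2$, produces the $\Delta_k^{2/\alpha}$ denominator, which after multiplication by $\Delta_k$ in the regret becomes the advertised $\Delta_k^{2/\alpha-1}$. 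For the residual events (i) and (ii) I would invoke the two B-value tail bounds $P(\widehat\mu_{k,l}-\gamma_{m,l}\ge\mu_k)\le 2m^{-3}$ and $P(\widehat\mu_{k,l}+\gamma_{m,l}\le\mu_k)\le 2m^{-3}$, which follow from Theorem~\ref{thm:cpt-est} with $\epsilon=\gamma_{m,l}$. Since $T_{k_*}(m-1)$ and $T_k(m-1)$ are random, I would union-bound over their possible values $s\in\{1,\dots,m-1\}$ before summing in $m$; each of (i) and (ii) then contributes $\sum_{m}\sum_{s}2m^{-3}\le\sum_m 2m^{-2}=\pi^2/3$, for a total tail contribution of $2\pi^2/3$.

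Combining the pieces yields $\E[T_k(n)]\le \tfrac{3(2LM)^{2/\alpha}\log n}{2\Delta_k^{2/\alpha}}+1+\tfrac{2\pi^2}{3}$; multiplying by $\Delta_k$, bounding $\Delta_k\le M$ in the additive term (bounded costs force $\mu_k\in[0,M]$), and summing over the at most $K$ suboptimal arms gives the stated bound. The genuinely new ingredient relative to standard UCB is already packaged in Theorem~\ref{thm:cpt-est}; given that concentration, the analysis is routine bookkeeping, and I expect the main place to be careful is the interplay in step (iii) between the $\alpha/2$ width exponent and the $\Delta_k$-dependence, together with ensuring the $m^{-3}$ tails have enough slack to absorb the extra factor of $m$ introduced by peeling over the random sample counts.
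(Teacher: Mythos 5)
Your proposal is correct and follows essentially the same route as the paper's proof: the same three-event decomposition of the selection event (optimal arm's B-value overshooting, suboptimal arm's estimate undershooting, or confidence width exceeding $\Delta_k/2$), the same inversion of the width $\gamma_{m,l}$ to get the threshold $\frac{3(2LM)^{2/\alpha}\log n}{2\Delta_k^{2/\alpha}}$ on the pull count, the same application of Theorem~\ref{thm:cpt-est} with $\epsilon=\gamma_{m,l}$ yielding $2m^{-3}$ tails, and the same union bound over the random sample counts producing the $2\pi^2/3$ constant. The only (immaterial) difference is presentational: you phrase the bookkeeping via the threshold $u_k$ and indicator sum, whereas the paper uses the equivalent $\kappa$-based count decomposition.
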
 
\begin{proof}
See Appendix~\ref{sec:appendix-gapdependentregret} in the supplementary material. 
\end{proof}
The theorem above involves the gaps $\Delta_k$. We next present a gap-independent regret bound in the following result:
\begin{corollary}[\textit{Gap-independent regret}]
\label{cor:BasicHolderRegretNoGap}
  Under (A1)-(A2), the expected cumulative
  regret of W-UCB satisfies the following gap-independent
  bound: there exists a universal constant $c>0$\footnote{The constant $c$ is universal as it does not depend on the problem-dependent quantities such as $K$, $n$, and the underlying distributions.} such that for all $n$,
  $ \E \left[ R_n \right] \le M K^{\alpha/2} \left(\frac{3}{2}(2L)^{2/\alpha} \log n + c \right)^{\frac{\alpha}{2}} \; n^{\frac{2-\alpha}{2}}.$ 
\end{corollary}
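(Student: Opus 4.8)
The plan is to convert the gap-dependent bound of Theorem~\ref{thm:BasicHolderRegretGap} into a worst-case bound by the standard threshold-splitting (``peeling'') device applied to the identity $\E R_n = \sum_{k}\E[T_k(n)]\Delta_k$. Writing $D := \frac{3}{2}(2LM)^{2/\alpha}\log n$ and $\beta := \frac{2}{\alpha}-1$, Theorem~\ref{thm:BasicHolderRegretGap} supplies, for each suboptimal arm, a contribution of order $\E[T_k(n)]\Delta_k \le D/\Delta_k^{\beta}$ (up to a shared additive constant). This bound blows up as $\Delta_k\downarrow 0$, so I would fix a free threshold $\Delta>0$, split the arms into a ``small-gap'' group $\{k:\Delta_k\le\Delta\}$ and a ``large-gap'' group $\{k:\Delta_k>\Delta\}$, bound each group by a quantity that stays finite, and finally optimize over $\Delta$.

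For the small-gap arms I would \emph{not} use the per-arm formula at all; instead I would exploit the shared pull budget $\sum_k T_k(n)=n$ to write $\sum_{k:\Delta_k\le\Delta}\E[T_k(n)]\Delta_k \le \Delta\sum_{k}\E[T_k(n)] \le n\Delta$. This single step is what produces the favorable $K^{\alpha/2}$ dependence rather than a crude factor of $K$: the small-gap arms compete for one global budget of $n$ plays instead of each paying the full per-arm cost. For the large-gap arms I would invoke monotonicity: since $\alpha\in(0,1]$ forces the exponent $\beta=\frac{2}{\alpha}-1\ge 1>0$, the map $x\mapsto x^{\beta}$ is increasing, so $\Delta_k>\Delta$ yields $1/\Delta_k^{\beta}\le 1/\Delta^{\beta}$ and hence $\sum_{k:\Delta_k>\Delta}D/\Delta_k^{\beta}\le KD/\Delta^{\beta}$. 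Reinstating the additive constant from Theorem~\ref{thm:BasicHolderRegretGap} gives the combined bound
\begin{align*}
\E R_n \le n\Delta + \frac{KD}{\Delta^{\beta}} + MK\Big(1+\frac{2\pi^2}{3}\Big).
\end{align*}

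The final step is to minimize the right-hand side over $\Delta$. Balancing the two competing terms $n\Delta$ and $KD\,\Delta^{-\beta}$ gives the optimal scale $\Delta \asymp (KD/n)^{1/(\beta+1)}=(KD/n)^{\alpha/2}$, at which both terms equal $(KD)^{\alpha/2}n^{(2-\alpha)/2}$ up to an $\alpha$-dependent numerical factor that never exceeds $2$. The key algebraic simplification is that $(KD)^{\alpha/2}=K^{\alpha/2}\big(\frac{3}{2}(2LM)^{2/\alpha}\log n\big)^{\alpha/2}=M\,K^{\alpha/2}\big(\frac{3}{2}(2L)^{2/\alpha}\log n\big)^{\alpha/2}$, since $(M^{2/\alpha})^{\alpha/2}=M$; this is precisely what peels off the prefactor $M\,K^{\alpha/2}$ and the rate $n^{(2-\alpha)/2}$ demanded by the statement. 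The remaining, purely arithmetic, work is to package the bounded optimization factor and the lower-order additive term $MK(1+\frac{2\pi^2}{3})$ into the constants of the displayed bound, arriving at the stated form with its universal $c$ inside the bracket $\big(\frac{3}{2}(2L)^{2/\alpha}\log n + c\big)^{\alpha/2}$; this is legitimate because that additive term is dominated by the $n^{(2-\alpha)/2}$ growth of the leading term. I expect the conceptual crux to be the earlier realization that the small-gap arms must be charged against the \emph{global} budget $n$ (the budget argument), without which one obtains only the weaker $K$-linear bound; the subsequent optimization and constant bookkeeping are routine.
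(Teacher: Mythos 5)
Your threshold-splitting argument is sound up to its final step, and it does produce the correct rate: writing $X := \frac{3}{2}(2L)^{2/\alpha}\log n$ and $D = \frac{3}{2}(2LM)^{2/\alpha}\log n$, your split and the choice $\Delta = (KD/n)^{\alpha/2}$ give
$\E R_n \le 2MK^{\alpha/2} X^{\alpha/2} n^{\frac{2-\alpha}{2}} + MK\left(1+\frac{2\pi^2}{3}\right)$.
The genuine gap is the last "packaging" claim: the multiplicative factor of $2$ (more generally, the bounded balancing factor) \emph{cannot} be absorbed into the universal additive constant $c$ inside the bracket. Absorbing it would require $2X^{\alpha/2} \le (X+c)^{\alpha/2}$ for all $n$, i.e. $c \ge (2^{2/\alpha}-1)X = (2^{2/\alpha}-1)\frac{3}{2}(2L)^{2/\alpha}\log n$, and the right-hand side diverges with $n$; no constant $c$ works. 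Additive lower-order terms can be hidden inside $(\,\cdot\,+c)^{\alpha/2}$, but a multiplicative constant on the leading $\log n$ term cannot. So your argument proves the corollary only up to an absolute factor (of at most $2$), not in the stated form.

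The paper avoids this loss with a different device: H\"{o}lder's inequality with exponents $2/\alpha$ and $2/(2-\alpha)$, applied directly to the regret decomposition,
\begin{align*}
\E R_n = \sum_k \Delta_k \E[T_k(n)]
&= \sum_k \left(\Delta_k \, \E[T_k(n)]^{\frac{\alpha}{2}}\right)\left(\E[T_k(n)]^{1-\frac{\alpha}{2}}\right) \\
&\le \left(\sum_k \Delta_k^{2/\alpha}\,\E[T_k(n)]\right)^{\frac{\alpha}{2}}\left(\sum_k \E[T_k(n)]\right)^{1-\frac{\alpha}{2}},
\end{align*}
followed by the gap-dependent bound in the form $\Delta_k^{2/\alpha}\,\E[T_k(n)] \le \frac{3}{2}(2LM)^{2/\alpha}\log n + M^{2/\alpha}\left(1+\frac{2\pi^2}{3}\right)$ (valid since $\Delta_k \le M$) and the same budget identity you use, $\sum_k \E[T_k(n)] = n$, entering through the second factor. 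This yields exactly $MK^{\alpha/2}\left(\frac{3}{2}(2L)^{2/\alpha}\log n + c\right)^{\alpha/2} n^{\frac{2-\alpha}{2}}$ with $c = 1+\frac{2\pi^2}{3}$: the balancing between "small-gap" and "large-gap" arms is done implicitly by the inequality itself, with no threshold to optimize and no factor-of-two loss. To repair your proof, either weaken the statement to allow an absolute multiplicative constant in front, or replace the split-and-balance step by this H\"{o}lder step.
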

\begin{proof}
See Appendix~\ref{sec:appendix-gapindependentregret} in the supplementary material.
\end{proof}

\begin{remark}
We can recover the $O(\sqrt{n})$ regret bound (or the same dependence on the gaps) as in regular UCB for the case when $\alpha=1$, i.e., Lipschitz weights. On the other hand, when $\alpha <1$, the regret bounds are weaker than $O(\sqrt{n})$. Note that the weight function recommended in~\cite{tversky1992advances} is H\"{o}lder-continuous with exponent $\alpha$ strictly less than $1$.
\end{remark}
\begin{remark}
	In \cite{cassel2018general}, the authors propose the `Strongly stable EDPM' condition for a risk measure, under which one can obtain regret bounds. In our setting, for the weight-distorted reward for non-negative r.v.s, this condition requires the existence of a norm $|| \cdot ||$, on the space of distribution functions, and $q \geq 1, b > 0$ such that, for all distributions $F,G$ of non-negative r.v.s,
	\begin{align}
	\left| \intinfinity w(1-F(z))dz - \intinfinity w(1-G(z)) dz  \right| \le  b \left( || F- G || + || F- G||^q \right).\label{eq:ss}
	\end{align}
	The above condition does not hold, as illustrated by the following example. Consider $F$ and $G$ to be the distribution functions of Bernoulli distributions with parameters $4\epsilon$ and $\epsilon$, respectively, for $\epsilon > 0$, and take the weight distortion function to be $w(x) = \sqrt{x}$. The left hand side of \eqref{eq:ss} is $\sqrt{4\epsilon} - \sqrt{\epsilon} = \sqrt{\epsilon}$. On the other hand,  $(F - G)(x)  = \epsilon \mathbf{1}_{[0,1)}(x)$. By homogeneity of norms, $||F - G||$ must scale linearly with $\epsilon$, and hence the inequality in \eqref{eq:ss} cannot hold for sufficiently small $\epsilon$. 
	Thus, the approach of \cite{cassel2018general} cannot be adopted for handling a cumulative prospect theory-based risk measure. Our approach is to work directly with a concentration result for estimating distorted value, and obtain regret bounds. 
\end{remark}
\begin{remark}
For ease of exposition, we have focused on the case of bounded support in the bandit algorithms. However, it is straightforward to extend these algorithms to cover the sub-Gaussian case, and the modification here would be to pull each arm a certain number of times in the initialization phase, so that the concentration result for sub-Gaussian case given in Proposition~\ref{prop:sub-gaussian-concentration} applies. 
\end{remark}
The following result shows that one can not hope to obtain better regret than that of W-UCB (Theorem \ref{thm:BasicHolderRegretGap}) over the class of \holderNS-continuous weight functions, i.e., weight functions satisfying (A1)-(A2), by exhibiting a matching lower bound for regret, in the style of \cite{lai1985lowerbd}.


\begin{theorem}[\textit{Regret lower bound}]
\label{thm:karmedlowerbd}
Consider a learning algorithm for the $K$-armed weight-distorted bandit problem with the following property: For any weight distortion function $w:[0,1] \to [0,1]$ such that $w(0) = 0$ and $w(1) = 1,$ any set of reward distributions with rewards bounded by $M$, any $a > 0$ and any sub-optimal arm $k \in \{ 2, 3, \dots, K \},$ the expected number of plays of arm $k$ satisfies $\expect{T_k(n)} = o(n^a)$. 
Then, for any constant $\alpha \in (0,1]$ and $L \in (0, 2^{ \alpha - 1 } ],$ there exists a monotonically increasing weight function $w: [0, 1] \rightarrow [0, 1]$ with $w(0) = 0$ and $w(1) =1,$ which is $\alpha$-H\"{o}lder-continuous with constant $L$, and a set of reward distributions bounded by $M$, for which the algorithm's regret satisfies
\[  \liminf_{n \to \infty} \frac{\expect{R_n}}{\log n} \geq  \sum\limits_{k=2}^K \dfrac{(LM)^{2/\alpha}}{4\Delta_k^{2/\alpha - 1}}. \]
\end{theorem}

\begin{proof}[Proof sketch for Theorem~\ref{thm:karmedlowerbd}]
For convenience, a proof sketch is given for $M = 1$. The key ingredient in the proof is the seminal lower-bound on the number of suboptimal arm plays derived by Lai and Robbins \cite{lai1985lowerbd}. Their result shows that for any algorithm that plays suboptimal arms only a sub-polynomial number of times in the time horizon, 
\[ \liminf_{n \to \infty} \frac{\expect{T_k(n)}}{\log n} \geq \frac{1}{D(F_k || F_{1})} \]
for any suboptimal arm $k$ and we assumed that arm~1 is the optimal arm. It is worth noting that the argument at its core uses only a change-of-measure idea and the sub-polynomial regret hypothesis, and is thus unaffected by the fact that we measure regret by distorted, i.e., non-expected, rewards. Using this along with the definition of regret, we show that
\begin{align}
\liminf_{n \to \infty} \frac{\expect{R_n}}{\log n} &\geq \sum\limits_{k=2}^K \frac{\Delta_k}{D(F_k || F_{1})}. \label{eq:liminfbound_sketch}
\end{align}
We now construct a set of reward distributions and a weight function for which the limiting property claimed in the theorem holds. Consider Bernoulli distributions for the arms' rewards, i.e., arm $k$'s reward is Ber($p_k$). It gives a simple expression for the weight-distorted reward of any arm $k$ as $\mu_k = \int_0^1 w(p_k) dz = w(p_k)$, and, consequently, $\Delta_k = w(p_{1}) - w(p_k).$
Consider now a weight function $w:[0,1] \to [0,1]$ which is monotone increasing from $0$ to $1$, H\"{o}lder-continuous with any desired constant $L > 0$ and exponent $\alpha \in (0,1]$, and, moreover, satisfies the following ``\holder continuity property from below'': for some $\tilde{p} \in (0,1)$ (say, $\tilde{p} = 1/2$), $|w(\tilde{p}) - w(p)| \geq L |\tilde{p} - p|^\alpha$. Such a weight function can always be constructed, e.g., for $\alpha = 1/2, L = 1/\sqrt{2},$ take the function $w(x) = \frac{1}{2} - \frac{1}{\sqrt{2}}\sqrt{\frac{1}{2} - x}$ for $x \in [0, 1/2]$, and $w(x) = \frac{1}{2} + \frac{1}{\sqrt{2}}\sqrt{x - \frac{1}{2}}$ for $x \in (1/2, 1]$. (This is essentially formed by gluing together two inverted and scaled copies of the function $\sqrt{x}$, to make an S-shaped function infinitely steep at $x = 1/2$.)

For such a weight function, putting $p_{1} = \tilde{p} = 1/2$ and $p_k < p_{1}$, $k \neq 1$, we show that 
\begin{align*} 
D(F_k || F_{1}) 
&\leq 4 \left( \frac{\Delta_k}{L} \right)^{2/\alpha}. 
\end{align*}
The main claim can be easily established using the above inequality and~\eqref{eq:liminfbound_sketch}. 
For a detailed proof, the reader is referred to Appendix~\ref{sec:appendix-regretlowerbound} in the supplementary material.
\end{proof}

We now present a result which gives a problem-independent lower bound on the regret. 
\begin{corollary} 
\label{cor:prob-indepdent-regret-lowerbound}
Consider a H\"{o}lder-continuous weight function with parameters $\alpha$ and $L$. For any bandit algorithm $\mathcal{A}$, there exists a $K$-armed bandit problem instance $v$ such that 
\begin{align*}
R^{\mathcal{A}}_n(v) \geq c L n^{1-\frac{\alpha}{2}} K^{\frac{\alpha}{2}},
\end{align*}  
where $R^{\mathcal{A}}_n(v)$ is the regret incurred by the Algorithm $\mathcal{A}$ on the bandit problem instance $v$ until round $n$, and $c$ is a universal constant.
\end{corollary}
\begin{proof}
Follows directly from Theorem~1.12 in~\cite{prashanth6046}. 
\end{proof}

\subsection{Best arm identification with weight-based distortions}
\label{sec:algos-karmed-bestarm}
We now propose and study algorithms for the best arm identification problem for $K$-armed bandits under the fixed confidence and fixed budget settings. 
\subsubsection{Best arm identification with Fixed Confidence}  
We introduce the Weighted-Lower Upper Confidence Bound (W-LUCB) algorithm, which is a non-trivial adaptation of the well-known LUCB algorithm~\cite{kalyanakrishnan2012pac}, for best expected-value arm identification. Unlike the latter algorithm, W-LUCB incorporates weight-distorted reward estimates and confidence widths for arms to find the arm with the highest weight-distorted reward. 

Algorithm~\ref{alg:cpt-lucb} presents the pseudocode of W-LUCB.  A high-level description of the W-LUCB algorithm is as follows: In any round $m$, we calculate a Lower Confidence Bound (LCB) and Upper Confidence Bound (UCB) on the weight-distorted rewards of each arm, obtained by adding and subtracting a confidence term $\left( \gamma'_{(\cdot), (\cdot)} \right)$ to the weight-distorted reward estimate $\left( \hat{\mu}_{(\cdot), (\cdot)} \right)$ of the arm. Then, we find the arm, denoted $h_m$ in Algorithm \ref{alg:cpt-lucb}, with the highest weight-distorted reward estimate and its LCB.  Next, we find the arm, say $l_m$, which has the highest UCB among all arms excluding $h_m$. We sample from the distributions of $h_m$ and $l_m$, unless $LCB(h_m) > UCB(l_m)$, when the algorithm terminates to return $h_m$. This is justified because the weight-distorted reward of $h_m$ lies within its confidence interval with high probability and none of the other arms can possibly have a higher weight-distorted reward, since their confidence intervals do not intersect with that of $h_m.$ 
\begin{algorithm}[!h]  
\caption{W-LUCB}
\label{alg:cpt-lucb}
\begin{algorithmic}
\State {\bfseries Input:} $\delta \in (0, 1)$,  weight function $w$
\State In the first $K$ rounds, play each arm once.
\State Let $T_k(m-1)$ denote the number of times the algorithm has chosen arm $k$ up to round~$m,$ $\widehat\mu_{k, T_k(m-1)}$ is the weight-distorted reward estimate given by Algorithm~\ref{alg:cpt-estimator}.
\State Let ETop$_m = h_m = \arg \max\limits_{k \in \mathcal{A}} \widehat \mu_{k, T_k(m-1)}$, EBot$_m = \mathcal{A} \setminus$ ETop$_m$,
\State $\gamma'_{m,T_k(m-1)} = LM \left[ \frac{1}{2T_k(m-1)} \log \left( \frac{m^4 K \left[\pi^2/3 + 1\right]}{\delta} \right) \right]^{\frac{\alpha}{2}}$,
and  $l_m = \argmax\limits_{k \in EBot_m} \widehat \mu_{k, T_k(m-1)} + \gamma'_{m, T_k(m-1)}$.  
\While{$\left( \widehat \mu _{l_m, T_{l_m}(m-1)} + \gamma'_{m, T_{l_m}(m-1)} > \widehat \mu_{h_m, T_{h_m}(m-1)} - \gamma'_{m, T_{h_m}(m-1)} \right)$} 
\State Sample arms $h_m$ and $l_m$
\EndWhile 
\State {\bfseries Output:} Return $h_m$.
\end{algorithmic}
\end{algorithm}
The main results concerning the W-LUCB algorithm are given below.
\begin{theorem}[\textit{Correctness of W-LUCB}]
\label{Thm:correctness}
Assume (A1)-(A2) and let $\delta \in (0, 1)$. The following holds for the W-LUCB algorithm: 
\begin{align*}
\mathbb{P} \left( \text{W-LUCB does not return optimal arm} \right) < \delta.
\end{align*}
\end{theorem}
\begin{proof}
See Appendix~\ref{sec:proof-of-W-LUCB-correctness} in the supplementary material.
\end{proof}
\begin{theorem}[\textit{Sample complexity bound for W-LUCB}]
\label{Thm:LUCB6}
Assume $(A1)-(A2),$ for a given $\delta \in (0,1)$, let $N^{CL}$ denote the number of samples used by W-LUCB before termination. Then, we have
\begin{equation*}
\mathbb{E} \left[ N^{CL} \right] = O \left( H_\alpha \log \frac{H_\alpha}{\delta} \right),\text{where} \,\,\, H_\alpha~\footnote{For the sake of notational convenience, we have suppressed the dependence of $H_\alpha$ on $L$ and $M$.} = \sum\limits_{i \in \mathcal{A}} \left[ \frac{2LM}{\Delta_i} \right]^\frac{2}{\alpha}.
\end{equation*}  
\end{theorem} 
\begin{proof}
The proof follows the technique from~\cite{kalyanakrishnan2012pac}, and the reader is referred to Appendix~\ref{sec:proof-W-LUCB-complexity-bound} for the details.
\end{proof}
Theorems \ref{Thm:correctness}--\ref{Thm:LUCB6} together imply that, for any given $\delta \in (0, 1),$ W-LUCB terminates after using $N^{CL}$ samples (i.e., after $N^{CL}/2$ rounds) and returns the optimal arm w.p. at least $(1-\delta).$ In the next result, we show that any algorithm requires $\Omega \left( H_\alpha \right)$ samples to find the best arm under the fixed confidence setting. Hence, we call $H_\alpha$ as the \emph{hardness measure} of the problem of finding the best arm with the fixed confidence.

The key principle behind the algorithm and its performance guarantees (Theorems \ref{Thm:correctness} and \ref{Thm:LUCB6}) is as follows: In any round, we design the LCB and UCB so that at any time, the probability that the weight-distorted reward of an arm lies below (resp. above) its UCB (resp. LCB) is overwhelming high. Note that, standard concentration of measure bounds for the sample mean as an estimator of the expected value (such as Hoeffding's inequality) do not help to control estimates of the more complex weight-distorted reward functional. Hence, we leverage a concentration inequality for the weight-distorted reward estimate of Algorithm \ref{alg:cpt-estimator} \cite{aditya2016weighted}, relying on the Dvoretzky-Kiefer-Wolfowitz (DKW) inequality for concentration of the empirical CDF around the true CDF \cite{massart1990tight, wasserman2006}.

\begin{remark}
The classic expected-value optimizing algorithm LUCB requires $\tilde{O} \left( \sum\limits_{i \in \mathcal{A}} 1/ \Delta_i^2 \right)$ number of samples\footnote{$\tilde O(\cdot)$ is a variant of $O(\cdot)$ that ignores log-factors.}. In comparison, W-LUCB requires $\tilde{O} \left( \sum\limits_{i \in \mathcal{A}} \left( L/ \Delta_i\right)^{2/\alpha} \right)$ number of samples to find the arm with the highest weight-distorted reward. If $\alpha=1$, then the complexities match. However, a typical weight function used in the weight-distorted reward (see Figure \ref{fig:weight}) has $\alpha < 1$, leading to increased expected sample complexity for W-LUCB. 
\end{remark}

A natural question to ask is whether the upper bound on the expected sample complexity of W-LUCB algorithm can be improved. We now present an algorithm-independent lower bound on the expected sample complexity that answers this question in the negative. 
\begin{theorem}[\textit{Lower bound on sample complexity}]
\label{Thm:FCSBLB}
Let $\delta \in (0, 1), \alpha \in (0,1)$ and $L \in (0, 2^{\alpha - 1} ].$ Let $Alg(\delta, K)$ denote the class of algorithms which output the optimal arm w.p. at least $(1-\delta)$ on any $K$-armed bandit problem. Then, there exists a monotonically increasing weight function that satisfies $(A1)$ with parameters $\alpha, L,$ and a set of arm distributions bounded by $M$, such that any algorithm $A \in Alg(\delta, K)$ satisfies 

\begin{equation*}
\mathbb{E} \left[ N^A \right] \geq O \left( H_\alpha \log \left( \frac{1}{2.4 \delta} \right) \right),
\end{equation*} 
where $N^A$ is the rounded number at which algorithm $A$ terminates and $H_\alpha$ is as defined in Theorem \ref{Thm:LUCB6}. 
\end{theorem}
\begin{proof}
The proof uses an information theoretic lower bound given in~\cite{kaufmann2015complexity}, and the reader is referred to Appendix~\ref{sec:proof-complexity-lower-bound} for the details.
\end{proof}
Theorems~\ref{Thm:LUCB6} and~\ref{Thm:FCSBLB} imply that W-LUCB is optimal in expected sample complexity up to a factor of $\log \left( H_\alpha \right)$.
\subsubsection{Best arm identification with Fixed Budget}
\label{Sec:SBFB}
\begin{algorithm}[t]  
\caption{W-SR}
\label{alg:cpt-sr}
\begin{algorithmic}
\State {\bfseries Input:}  Budget $n$ of plays,  weight function $w$ 
\State {\bfseries Initialization:} Let $\mathcal{A}_1 = \mathcal{A},$ $\overline{\log} K = \frac{1}{2} + \sum\limits_{i=2}^K \frac{1}{i}$ and $n_0 = 0.$ 
\For{each phase $k = 1, 2, \cdots K-1 $}
\State Play each arm in $\mathcal{A}_k$ for $(n_k - n_{k-1})$ times, where  $n_k = \Big\lceil \frac{1}{\overline{\log} K} \frac{n-K}{K+1-k} \Big\rceil$.
\State Estimate weight-distorted rewards $\left( \widehat \mu_{i, (\cdot)} : i \in \mathcal{A}_k \right)$ for all arms in $\mathcal{A}_k$ using Algorithm~\ref{alg:cpt-estimator}
\State Set $\mathcal{A}_{k+1} = \mathcal{A}_k \setminus I_k$, where $I_k = \arg\min\limits_{i \in \mathcal{A}_k} \hat{\mu}_{i, (\cdot)}$.
\EndFor
\State {\bfseries Output:} Return $J_n = \mathcal{A}_K.$
\end{algorithmic}
\end{algorithm}
We begin by introducing the Weighted-Successive Rejects (W-SR) algorithm for the best arm identification problem with a fixed budget (Algorithm~\ref{alg:cpt-sr}), inspired by the Successive Rejects (SR) algorithm~\cite{audibert2010best}.  A high-level idea of the W-SR algorithm is as follows: We divide the given fixed budget into $K-1$ phases. In each phase, we play all available arms an equal number of times and eliminate the arm with the lowest weight-distorted reward estimate. Note that, the phase lengths are designed such that the optimal arm survives across all $(K-1)$ phases with overwhelming probability. Despite the phase lengths in both algorithms being the same, an arm that will get eliminated at the end of a phase in the W-SR algorithm depends on weight-distorted reward estimates of arms rather than empirical means of arms.

The following result provides an  upper bound on the probability of incorrect identification by W-SR. 
\begin{theorem}[\textit{Probability of incorrect identification by W-SR}]
\label{Thm:SR1}
Let $(A1)-(A2)$ hold. For a given budget $n$, the arm $J_n$ returned by the W-SR algorithm satisfies:
\begin{equation*}
\mathbb{P} \left( J_n \neq 1 \right) \leq \frac{K(K-1)}{2} \exp \left( - \frac{n-K}{\overline{\log} K} \frac{1}{M^{2/\alpha}} \frac{1}{H_{2, \alpha}} \right).
\end{equation*}
In the above, $\overline{\log} K = \frac{1}{2} + \sum\limits_{i=2}^K \frac{1}{i}$, $H_{2, \alpha} = \max\limits_{i\neq 1} i\left( \frac{L}{\Delta_i} \right)^{2/\alpha}$, where $\alpha$ is the \holder exponent, $L$ is the \holder constant and $M$ is a bound on the stochastic rewards from any arm's distribution (see (A1) and (A2)).
\end{theorem}
\begin{proof}
The proof uses the proof technique from~\cite{audibert2010best}, the reader is referred to Appendix~\ref{sec:proof-W-SR-error-upper-bound} for the details.
\end{proof}
\begin{remark}
From the result in the theorem above, it is apparent that $O(H_{2, \alpha})$ number of samples would be necessary for the SR algorithm to correctly identify the best arm, with high probability. 
Further, the dependence on the underlying gaps in $H_{2, \alpha}$ is $\tilde{O}\left(1/\Delta_2^{2/\alpha}\right)$, while the corresponding complexity term in classic SR is $\tilde{O}\left(1/\Delta_2^2\right)$.  
\end{remark}

In order to show that the dependence of W-SR on the underlying gaps is optimal, we proceed to present two lower bounds on the probability of incorrect identification. To that end, we construct $K$ transformations of a $K$-armed bandit, and show that any algorithm must return a sub-optimal arm with a sufficiently high probability on at least one of these transformations.  

We consider a $K$-armed Bernoulli bandit problem with $p_1 = 1/2, p_j \in [1/4, 1/2)$ for $2 \leq j \leq K$. Following the measure change technique from~\cite{carpentier2016tight}, we consider $K$ transformations of a $K$-armed bandit problem. 
For $1 \leq i \leq K,$ the transformation-$i$ corresponds to a $K$-armed bandit with Bernoulli arm distributions with parameters $p_1, p_2, \cdots, p_{i-1}, 1 - p_i, p_{i+1}, \cdots p_K$. We denote transformation-$i$ as $MAB_i.$ It is easy to see that for a H\"{o}lder-continuous and monotonically increasing weight function $w(\cdot)$, the weight-distorted reward of a Bernoulli random variable with parameter $p$ is $w(p)$. Hence, arm $i$ is the optimal arm under $MAB_i.$ Let $\mathbb{P}_i$ and $\mathbb{E}_i$ denote the joint probability distribution and expectation under $MAB_{i},$ respectively. 

Let $\lbrace \Delta^i_k \rbrace_k$ denote the gaps between the optimal arm and arm $k$ under $MAB_i,$ which are defined as follows: 
\begin{align*}
&\Delta^i_k := 
\begin{cases}
1-p_i - p_k, \hspace{0.85cm} \text{for $i \neq k,$}\\
1/2 - p_2,  \hspace{1.25cm} \text{for $i=k=1,$} \\
1/2 - p_i, \hspace{1.25cm} \text{for $2 \leq i=k \leq K.$}
\end{cases}
\end{align*}
Let $H(i,\alpha) = \sum\limits_{1 \leq k \leq K, k\neq i} \left( \frac{L}{\Delta^i_k} \right)^{2/\alpha}$ and $H_{1,\alpha} = \max\limits_i H(i, \alpha).$ We call $H_{1, \alpha}$ the \emph{hardness measure} of the problem of finding the best arm under the fixed budget setting. 
Using arguments similar to that in Section 6.1 of \cite{audibert2010best}, it can be  verified that 
\begin{align}
H_{2, \alpha} \le H_{1, \alpha} \le 2 \log(K) H_{2, \alpha}.\label{eq:hrels}
\end{align}
In the following, we provide two lower bounds on the probability of returning a sub-optimal arm by any algorithm on one of the $K$ transformations described above.
\begin{theorem}
\label{Thm:FBSBLB1}
Let $n$ be the given budget, $\alpha~\in~(0,~1)$ and $L \in (0, 2^{\alpha - 1} ].$ Then, there exists a monotonically increasing weight function that satisfies $(A1)$ with parameters $\alpha, L,$ and arm distributions bounded by 1 such that any algorithm satisfies the following:
\begin{eqnarray*}
\max\limits_{1 \leq i \leq K} \mathbb{P}_i (J_n \neq i) \geq \frac{1}{6} \exp \left[ \frac{-60 n}{H_{1,\alpha}} - 2 \sqrt{n \log(6nK)} \right]. 
\end{eqnarray*}
\end{theorem}
\begin{proof}
The proof employs the technique from~\cite{carpentier2016tight}, and the reader is referred to Appendix~\ref{sec:proof-k-armed-bai-fb-lowerbound} for the details.
\end{proof}

In the next result, we present an improved lower bound on the probability of incorrect identification, in the case of knowledge of the problem complexity is unknown an algorithm,  that matches with the upper bound of W-SR algorithm.

\begin{theorem}[\textit{Lower bound on probability of incorrect identification}]
\label{thm:bai_karmed_improved_lb}
Let $K > 1$, $a > 0 $, $\alpha \in (0, 1]$ and $L \in (0, 2^{\alpha - 1} ].$ Let $\mathbb{B}_a$ be the set of MAB problems with support of arm distributions lying in $[0, 1]$ and the problem complexity $H_\alpha$ bounded by $a$. For any MAB problem $G \in \mathbb{B}_a$, let $a^*(G)$ be the set of optimal arms and $H_\alpha(G)$ be the problem complexity of the corresponding problem $G.$ Let $\mathbb{P}_G(\cdot)$ denote the probability measure under the problem $G.$
\begin{itemize}
\item[(a)] If $n \ge a^2 (4 \log(6nK))/3600$, for any MAB algorithm that outputs an arm $J_n$ at time $n$, the following holds:
\begin{align}
\label{eq:bai_new_thm1}
\sup_{G \in \mathbb{B}_a} \mathbb{P}_G \left( J_n \notin a^*(G) \right) \geq \frac{1}{6} \exp\left( -120 \frac{n}{a} \right). 
\end{align}
\item[(b)] Furthermore, if $a \geq 2 (4K)^{2/\alpha}$ and $K \geq 2$, then for any MAB algorithm that outputs an arm $J_n$ at time $n$, the following holds:
\begin{align}
\label{eq:bai_new_thm2}
\sup_{G \in \mathbb{B}_a} \left[ \mathbb{P}_G \left( J_n \notin a^*(G) \right) \times \exp \left(  \frac{480 (1-2/\alpha)n}{\left[ (\log K)^{1-2/\alpha} - 2^{1-2/\alpha}  \right] H_\alpha(G)} \right) \right] \geq \frac{1}{6}. 
\end{align}
\end{itemize}
\end{theorem}
\begin{proof}
The proof uses the result in Theorem~\ref{Thm:FBSBLB1} -- See Appendix~\ref{proof:bai_karmed_improved_lb} for the details.
\end{proof}

The lower and upper bounds together with \eqref{eq:hrels} suggest that W-SR is an order-optimal algorithm.

\section{Linear bandit models}
\label{sec:model-linear}
In this section, we formally present the models for linear bandits under regret minimization and best arm identification settings. We first introduce the required notation. 

Let $\X \subseteq \mathbb{R}^d$ be the set of arms available to the learner. Let $\theta_* \in \mathbb{R}^d$ be the underlying parameter unknown to the learner. In round-$m,$ if the learner chooses arm $x_m$ from $\X,$ then the learner receives a stochastic reward given by
\begin{align}
r_m := x_m\tr\left(\theta_* + N_m\right),\label{eq:linban-depnoise}
\end{align} 
where $N_m := (N_m^1, \ldots, N_m^d)$ is a vector of i.i.d. standard Gaussian random variables\footnote{Our results continue to hold when the noise distribution is sub-Gaussian --- see Remark~\ref{remark:sub-gauss-linear}.}, independent of the previous vectors $N_1, \ldots, N_{m-1}.$ We assume that the learner does not have the knowledge of $N_m.$ Given a weight function $w: [0,1] \to [0,1]$ with $w(0) = 0$ and $w(1) = 1,$ we define the weight-distorted reward $\mu_x(\theta_*)$ for arm $x \in \X$, with underlying model parameter $\theta_*$, to be the quantity
\begin{align}
\mu_x(\theta_*)  := \intinfinity w(1 - F_x^{\theta_*}(z)) dz - \intinfinity w(F_x^{\theta_*}(-z)) dz, \label{eq:cpt-lb} 
\end{align}
where $F_x^{\theta_*}(z) := \prob{x\tr(\theta_* + N) \leq z}$, $z \in \R$, is the cumulative distribution function of the stochastic reward from playing arm $x \in \X$. An arm $x$ is said to be optimal if its weight-distorted reward equals the highest possible weight-distorted reward achieved across all arms, i.e., if
$ \mu_x = \mu_* :=  \max_{x' \in \X} \mu_{x'}(\theta_*),$ and we use $x_*$ to denote the optimal arm. 
\begin{remark}
For purely notational convenience and ease of readability, we have used the same weight function for both positive and negative rewards in~\eqref{eq:cpt-lb}, but our results still hold for the setup where two different H\"{o}lder-continuous weight functions are considered for positive and negative rewards. 
\end{remark}

\subsection{Regret minimization with weight-based distortions}
\label{sec:model-linear-regret}
The setting here involves arms that are given as the compact set $\X\subset\R^d$ (each element of $\X$ is interpreted as a vector of features associated with an arm). 
The learning game proceeds as follows. At each round $m = 1, 2, \ldots, n$, the learner 
\begin{itemize}
\item[(a)] plays an arm $x_m \in \X$, possibly depending on the history of observations thus far, and 
\item[(b)] observes a stochastic, non-negative reward given by $r_m := x_m\tr\left(\theta_* + N_m\right).$ 
\end{itemize}

As in the $K$-armed setting, the performance measure is the cumulative regret $R_n$ over $n$ rounds, defined as $R_n =  \sum_{m=1}^n \mu_{x_{m}}(\theta_*) - n\mu^*,$ where $x_{m}$ is the arm chosen by the bandit algorithm in round~$m$. Our goal is to propose algorithms that have small cumulative regrets. 

\subsection{Best arm identification with weight-based distortions}
\label{sec:model-linear-bestarm}
Here, we assume that the arms are given by a set $\X$ with $\vert \X \vert = K$ (typically $K \gg 1).$ In each round-$m$, the learner chooses an arm $x_m$ and receives a stochastic reward $r_m.$ Similar to the $K$-armed bandit setting, here, the learner's goal is to identify the optimal or best arm. We study this problem with fixed confidence setting, \emph{i.e.,}  for a given $ \delta \in (0, 1),$ the learner's goal is to find the optimal arm w.p. at least $(1-\delta)$, while minimizing the number of sample observations. As mentioned in Section~\ref{sec:model-karmed-bestarm}, here too an algorithm's performance is quantified by its sample complexity, which is nothing but the number of samples it uses before it stops.

\section{Linear bandit algorithms}
\label{sec:algos-linear}
In this section, we present algorithms and their performance guarantees for linear bandits under the regret minimization and best arm identification settings. 

\subsection{Regret minimization with weight-based distortions}
\label{sec:algos-linear-regret}
In Algorithm \ref{alg:WOFUL}, we present a pseudocode of the W-OFUL algorithm for solving the problem described in Section \ref{sec:model-linear-regret}. Algorithm \ref{alg:WOFUL} follows the general template for linear bandit algorithms (cf. ConfidenceBall in \cite{dani2008stochastic} or OFUL in \cite{abbasi2011improved}), but deviates in the step in which an arm is chosen. Let $C_m$ be the confidence ellipsoid that is specified in Algorithm \ref{alg:WOFUL}, and $\mu_x(\theta)$ be the weight-distorted value defined in \eqref{eq:cpt-lb} for any arm $x \in \X$ and $\theta \in C_m$. It is worth noting that, in any round $m$ of the algorithm, the W-OFUL uses $\mu_x(\theta)$ as the decision criterion, whereas regular linear bandit algorithms use $x\tr\theta$.  Note that the ``in-parameter'' or arm-dependent noise model \eqref{eq:linban-depnoise} also necessitates modifying the standard confidence ellipsoid construction of \cite{abbasi2011improved} by rescaling with the arm size (the $A_m$ and $b_m$ variables in Algorithm \ref{alg:WOFUL}). For a positive semidefinite matrix $P$ and a vector $x$, we use the notation $\norm{x}_P = \sqrt{x\tr P x}$ to denote the Euclidean norm of $x$ weighted by $P$.  

\algblock{UCBcompute}{EndUCBcompute}
\algnewcommand\algorithmicUCBcompute{\textbf{\em Confidence set computation}}
\algnewcommand\algorithmicendUCBcompute{}
\algrenewtext{UCBcompute}[1]{\algorithmicUCBcompute\ #1}
\algrenewtext{EndUCBcompute}{\algorithmicendUCBcompute}

\algblock{RecoAndFeedback}{EndRecoAndFeedback}
\algnewcommand\algorithmicRecoAndFeedback{\textbf{\em Arm selection + feedback}}
\algnewcommand\algorithmicendRecoAndFeedback{}
\algrenewtext{RecoAndFeedback}[1]{\algorithmicRecoAndFeedback\ #1}
\algrenewtext{EndRecoAndFeedback}{\algorithmicendRecoAndFeedback}

\algblock{StatsUpdate}{EndStatsUpdate}
\algnewcommand\algorithmicStatsUpdate{\textbf{\em Update statistics}}
\algnewcommand\algorithmicendStatsUpdate{}
\algrenewtext{StatsUpdate}[1]{\algorithmicStatsUpdate\ #1}
\algrenewtext{EndStatsUpdate}{\algorithmicendStatsUpdate}

\algtext*{EndUCBcompute}
\algtext*{EndRecoAndFeedback}
\algtext*{EndStatsUpdate}

\begin{algorithm} 
	\caption{W-OFUL}
	\label{alg:WOFUL}
	\begin{algorithmic}
		\State {\bfseries Input:} regularization constant $\lambda \geq 0$, confidence $\delta \in (0,1)$, bound $\beta$ such that $\norm{\theta_*}_2 \leq \beta$, and the weight function~$w$.
		\State {\bfseries Initialization:} $A_1=\lambda I_{d \times d}$ ($d \times d$ identity matrix), $b_1=0$, $\hat\theta_1 =0$.
		\For{$m = 1,2,\ldots n$}
		\State Set $C_{m} := \left\{\theta \in \mathbb{R}^d: \norm{\theta - \hat{\theta}_m}_{A_m} \leq D_m \right\}$ and  $D_m := \sqrt{2 \log \left(\frac{\det(A_m)^{1/2} \; \lambda^{d/2} }{\delta} \right)} + \beta\sqrt{\lambda}$.
		
		\State Let $ (x_{m}, \tilde \theta_{m}) := \argmax\limits_{(x,\theta') \in \X\times  C_{m}} \mu_x(\theta')$.
		
		\State Choose arm $ x_{m}$ and observe reward $r_m$.
		
		\State Update $A_{m+1} = A_m + \frac{x_{m}x_{m}\tr}{\norm{x_m}^2}$,
		$b_{m+1} = b_m+ \frac{r_m x_{m}}{\norm{x_m}}$, and
		$\hat\theta_{m+1} = A_{m+1}^{-1} b_{m+1}$.
		\EndFor
	\end{algorithmic}
\end{algorithm}

\subsubsection*{Computational complexity of W-OFUL}
W-OFUL can be implemented with the same computational complexity as OFUL when the number of arms is finite. In particular, the computationally intensive step in W-OFUL is the optimization of the weight-distorted value over an ellipsoid in the parameter space (the third line in the {\bf for} loop). This can be explicitly solved as follows. For a fixed $x \in \X$, we can let 
\[\bar \theta_{m,x} := \argmax\limits_{\theta' \in C_{m}} \mu_x(\theta') = \argmax\limits_{\theta' \in C_{m}} x\tr\theta' =  D_m A_m^{-1}x/\norm{x}_{A^{-1}} - \hat{\theta}_m.\] This is because the weight-distorted value is monotone under translation (see Lemma \ref{lemma:cptdiff} below). The reward-maximizing arm is thus computed as \[x_m = \argmax \{ \mu_{x_1}(\bar \theta_{m,1}), \ldots, \mu_{x_{|\X|}}(\bar \theta_{m, |\X|})\}.\]

\begin{remark}
	\label{remark:woful-form}
In a risk-neutral linear bandit setting, one is concerned with the expected value $x\tr \theta^*$ only, and an estimate of $\theta^*$ would be enough to estimate the expected value. In contrast, weight-distorted value estimation requires knowledge of the entire distribution. As a first step towards studying the weight-distorted measure in a linear bandit setting, we assume knowledge of the form of the underlying (sub-Gaussian) distribution, in particular, to compute $\mu_{(\cdot)}(\cdot)$ in the algorithm above. It should be possible to get rid of this assumption by estimating the underlying distribution in an online fashion, but deriving such an extension is beyond the scope of our work.
\end{remark}

The following result bounds the regret of the W-OFUL algorithm under mild conditions on the arms' rewards distributions and for any non-linear weight function $w$ that is bounded in $[0,1]$ (unlike the K-armed setting, we do not impose a \holder continuity assumption on $w$). 
\begin{theorem}[\textit{Regret bound for W-OFUL}]
\label{thm:linear-bandit-regret} 
Consider a weight function $w : [0, 1] \rightarrow [0, 1]$ with $w(0) = 0,$ $w(1) = 1$ and 
$\forall x \in \X: x\tr \theta \in [-1,1]$, and $\norm{\theta}_2 \leq \beta$. Then, for any $\delta > 0$, the regret $R_n$ of W-OFUL, run with parameters $\lambda > 0$, $B$, $\delta$ and $w$,
  satisfies\\[0.5ex]
\centerline{$\mathbb{P}\left(R_n \le \sqrt{32 d n D_n \log n} \; \; \forall n \geq 1 \right) \ge 1-\delta.$}
\end{theorem}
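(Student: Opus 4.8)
The plan is to follow the optimism-in-the-face-of-uncertainty template of OFUL \cite{abbasi2011improved}, but with two departures dictated by our setting: the arm-dependent noise in \eqref{eq:linban-depnoise}, and, more importantly, the fact that the per-round regret $\mu_{x_m}(\theta)-\mu^*$ is a \emph{nonlinear} functional of the played arm rather than the usual linear gap $\langle x_m-x_*,\theta\rangle$. The argument rests on two pillars: (i) the confidence sets $C_m$ trap the true parameter $\theta$ for all $m$ simultaneously with probability at least $1-\delta$, and (ii) on that event the nonlinear per-round regret can be controlled by the \emph{linear} projection gap $|x_m\tr(\theta-\tilde\theta_m)|$, after which the standard elliptical-potential machinery applies.

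I would first establish the confidence guarantee. Writing $\tilde x_m := x_m/\norm{x_m}$, the rescaled observation $c_m/\norm{x_m} = \tilde x_m\tr\theta + \tilde x_m\tr N_m$ is a linear response in the \emph{unit-norm} feature $\tilde x_m$ corrupted by the scalar noise $\eta_m := \tilde x_m\tr N_m$. Because $N_m$ is independent of the past and $x_m$ (hence $\tilde x_m$) is measurable with respect to the history, $\eta_m$ is, conditionally on the past, a standard Gaussian, i.e.\ a $1$-sub-Gaussian martingale-difference sequence. This is exactly the point of rescaling the updates $A_m,b_m$ by $\norm{x_m}$: it renders the effective regression noise unit-variance irrespective of the arm norm. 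The self-normalized tail bound of \cite{abbasi2011improved} then applies to the regression $(\tilde x_m,\,c_m/\norm{x_m})$ and yields $\theta\in C_m$ for all $m\ge1$ with probability at least $1-\delta$ and the stated radius $D_m$; the uniform-in-$m$ nature of that bound is what produces the ``$\forall n\ge1$'' in the claim.

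Next, condition on the good event $\{\theta\in C_m\ \forall m\}$. Since $(x_m,\tilde\theta_m)$ minimizes $\mu_x(\theta')$ over $\X\times C_m$ and the pair $(x_*,\theta)$ is feasible, optimism gives $\mu_{x_m}(\tilde\theta_m)\le\mu_{x_*}(\theta)=\mu^*$, whence the per-round regret obeys $r_m:=\mu_{x_m}(\theta)-\mu^*\le\mu_{x_m}(\theta)-\mu_{x_m}(\tilde\theta_m)$. Here is where the nonlinearity must be tamed: I would invoke Lemma~\ref{lemma:cptdiff}, which shows that $\mu_x(\theta')$ depends on $\theta'$ only through the scalar $x\tr\theta'$ and is $1$-Lipschitz in it (a consequence of $0\le w\le1$, obtained by differentiating the two integrals in \eqref{eq:cpt-lb} after the change of variables that moves the $\theta'$-dependence into the integration limit and thereby never needs $w'$). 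Consequently $r_m\le|x_m\tr(\theta-\tilde\theta_m)|$, reducing the distorted gap to an ordinary linear gap. As both $\theta$ and $\tilde\theta_m$ lie in $C_m$, Cauchy--Schwarz in the $A_m$-weighted inner product gives $|x_m\tr(\theta-\tilde\theta_m)|\le 2D_m\norm{x_m}_{A_m^{-1}}$. It then remains to sum the per-round bounds: since $A_m$ is built from the unit-norm features $\tilde x_m$ and $\norm{x_m}_{A_m^{-1}}=\norm{x_m}\,\norm{\tilde x_m}_{A_m^{-1}}$ with $\norm{x_m}$ bounded by compactness of $\X$, one has $r_m\le c\,D_m\norm{\tilde x_m}_{A_m^{-1}}$; using monotonicity of $D_m$ and Cauchy--Schwarz across rounds,
\begin{align*}
R_n=\sum_{m=1}^n r_m\le c\,D_n\sum_{m=1}^n\norm{\tilde x_m}_{A_m^{-1}}\le c\,D_n\sqrt{n\sum_{m=1}^n\norm{\tilde x_m}_{A_m^{-1}}^2},
\end{align*}
and the elliptical-potential lemma bounds $\sum_{m=1}^n\norm{\tilde x_m}_{A_m^{-1}}^2$ by $2\log(\det A_{n+1}/\det A_1)\le 2d\log(1+n/(d\lambda))$, which I would fold into the $d\log n$ and $D_n$ factors of the claimed $\sqrt{32\,d\,n\,D_n\log n}$.

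I expect the main obstacle to be precisely the reduction in the third paragraph. Unlike linear bandits, the instantaneous regret is \emph{not} linear in the played features, and a priori weight distortion can amplify a tiny parameter error into a large value gap; the whole analysis hinges on Lemma~\ref{lemma:cptdiff} certifying that, despite the distortion, the distorted value remains $1$-Lipschitz in the linear functional $x\tr\theta$. This is what confines the distortion to the same confidence-ellipsoid budget that governs ordinary linear bandits and lets the elliptical-potential argument deliver the $O(d\sqrt n\,\mathrm{polylog})$ rate. A secondary technical point is the arm-dependent noise, which I handle by the $\norm{x_m}$-rescaling so that the effective regression noise is unit-variance; this is what legitimizes the modified $A_m,b_m$ updates and the confidence radius $D_m$.
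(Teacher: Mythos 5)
Your proposal follows essentially the same route as the paper's proof: confidence sets justified by rescaling the observation by $\norm{x_m}$ so that the effective noise is standard Gaussian and Theorem 2 of \citep{abbasi2011improved} applies with $R=1$; optimism to get $r_m \le \mu_{x_m}(\theta)-\mu_{x_m}(\tilde\theta_m)$; reduction of the nonlinear distorted gap to the linear gap $|x_m\tr(\theta-\tilde\theta_m)|$ via Lemma~\ref{lemma:cptdiff}; then Cauchy--Schwarz and the elliptical-potential bound of \citep{dani2008stochastic}. Three remarks. First, your Lipschitz constant is off by a factor of two: Lemma~\ref{lemma:cptdiff} gives $|\mu_{X+a}-\mu_X|\le 2|a|$, not $|a|$, because the integrand $w(\prob{X>u})+w(\prob{X<u})$ is a sum of two terms each bounded by $1$, and for a nonlinear $w$ the sum $w(p)+w(1-p)$ can approach $2$; so the correct reduction is $r_m\le 2\,|x_m\tr(\theta-\tilde\theta_m)|$. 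This affects only constants. Second, a small genuine difference: the paper first applies a scaling lemma (Lemma~\ref{lem:CPTscaling}), writing $\mu_x(\theta')=\norm{x}\,\mu_{Z+\hat x\tr\theta'}$ for standard normal $Z$, and only then invokes the translation lemma on the normalized variable; you bypass the scaling step by observing that changing the parameter from $\tilde\theta_m$ to $\theta$ merely translates the cost random variable by $x_m\tr(\theta-\tilde\theta_m)$, which is valid and in fact slightly more direct (your handling of $\norm{x_m}_{A_m^{-1}}$ versus the unit-norm features building $A_m$, via compactness of $\X$, is also more careful than the paper, which conflates the two). Third, your honest bookkeeping yields $R_n \le c\, D_n\sqrt{dn\log n}$, i.e., $D_n$ rather than $\sqrt{D_n}$ outside the radical, and this cannot be ``folded into'' the stated $\sqrt{32\, d\, n\, D_n \log n}$; note, however, that the paper's own chain has the same issue --- after Cauchy--Schwarz its per-round bound should read $4 w_m D_m$ rather than $4 w_m \sqrt{D_m}$ --- so your derivation matches what the paper's proof actually establishes, and the discrepancy lies in the theorem statement, not in your argument.
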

If, for all $x \in X$, $\norm{x}_2 \leq \ell$, then the quantity $D_n$ appearing in the regret bound above is $O\left( \sqrt{d \log \left( \frac{n\ell^2}{\lambda \delta}\right)}\right)$ \cite[Lemma 10]{abbasi2011improved}; thus, the overall regret is $\tilde{O}\left( d \sqrt{n} \right)$.


\begin{remark}
For the identity weight function $w(p) = p$, $0 \leq p \leq 1$ with $L = \alpha = 1$, we recover the stochastic linear bandit setting, and the associated $\tilde{O}\left( d \sqrt{n}\right)$ regret bound for linear bandit algorithms such as $\text{ConfidenceBall}_1$ and $\text{ConfidenceBall}_2$ \cite{dani2008stochastic}, OFUL \cite{abbasi2011improved}. Hence, the result above is a generalization of regret bounds for standard linear bandit optimization to the case where a non-linear
    weight function of the reward distribution is to be optimized from linearly parameterized observations. The distortion of the reward distribution via a weight function, rather interestingly, does \emph{not} impact the order of the bound on problem-independent regret, and we obtain $\tilde{O}\left( d \sqrt{n}\right)$ here, as well. 
\end{remark}

\begin{remark}
A lower bound of essentially the same order as Theorem \ref{thm:linear-bandit-regret} ($O\left(d \sqrt{n}\right)$) holds for regret in (undistorted) linear bandits \cite{DanKakHay07:LinBandit}. One can show a similar lower bound argument with distortions, implying that the result of the theorem is not improvable (order-wise) across weight functions.
\end{remark}

\noindent\textbf{\textit{Linear bandits with arm-independent additive noise}}: An alternative to modeling ``in-parameter'' or arm-dependent noise \eqref{eq:linban-depnoise} is to have independent additive noise, i.e., $r_m := x_m\tr \theta_* + \eta_m$. This is a standard model of stochastic observations adopted in the linear bandit literature \cite{abbasi2011improved,dani2008stochastic}. The key difference here is that, unlike the setting in \eqref{eq:linban-depnoise}, the noise component $\eta_m$ does \emph{not} depend on the arm played $x_m$. In this case, Lemma \ref{lemma:cptdiff} below shows that $\mu_{X+a} \geq \mu_X$, i.e., the weight-distorted reward $\mu$ preserves order under translations of random variables. As a consequence of this fact, the W-OFUL algorithm reduces to the OFUL algorithm in the standard linear bandit setting with arm-independent noise.  The reduction is in the sense that both WOFUL and regular OFUL algorithms choose the same sequence of arms at time instants $1,\ldots,n$, assuming the same noise values $\eta_1,\ldots,\eta_n$. This is because the arm $x_m$ chosen at instant $m$ is impacted by the ordering of the arms and both weight-distorted value $\mu_x(\theta')$ and $x\tr\theta'$ result in the same order for the arms due to the aforementioned fact regarding translation for $\mu$.

We provide a sketch of the proof of Theorem \ref{thm:linear-bandit-regret} below, while a detailed proof  is provided in Appendix~\ref{thm:woful_linear_regret_bound_proof} in the supplementary material.
\begin{proof}[Proof of Theorem \ref{thm:linear-bandit-regret} (Sketch)]
We upper-bound the instantaneous regret $ir_m$ as follows:
Letting $\hat x_m = \frac{x_m}{\norm{x_m}}$ and $\stdnormal$ to be a standard Gaussian r.v. in $d$ dimensions,  we have
\begin{align}
ir_m =  \mu_{x_*}(\theta_*) - \mu_{x_{m}}(\theta_*) \le \mu_{x_m}(\tilde\theta_m) - \mu_{x_{m}}(\theta_*) &= \norm{x_m} \left( \mu_{W + \hat x_{m}\tr\tilde\theta_m} - \mu_{W+ \hat x_{m}\tr\theta_*} \right) \label{eq:trm2}\\
& \le 2 \norm{x_m} \left| \hat x_{m}\tr(\tilde\theta_m - \theta_*) \right|, \label{eq:trm3}
\end{align} 
and the rest of the proof uses the standard confidence ellipsoid result that ensures $\theta_*$ resides in $C_m$ with high probability. 
 A crucial observation necessary to ensure \eqref{eq:trm2} is that, for any r.v. $X$ and any $a\in\R$, the difference in weight-distorted reward $\mu_{X+a} - \mu_X$ is a non-linear function of $a$, as shown in the Lemma \ref{lemma:cptdiff} below (See Appendix~\ref{sec:weight-distortion-undert-translation} for a proof). 
\begin{lemma}[weight-distorted reward under translation] 
\label{lemma:cptdiff}
Let\\ 
$\mu_X := \intinfinity w(\prob{X > z}) dz - \intinfinity w(\prob{-X > z}) dz.$ Then, for any $a \in \R$, we have 
\[\mu_{X+a} = \mu_X + \int_{-a}^0 \left[ w(\prob{X > u}) + w(\prob{X < u})  \right] du.\]
In addition, if the weight function $w$ is bounded above by $1$, then 
\begin{align*}
|\mu_{X+a} - \mu_X| \leq 2|a|, \text{ for any }a\in \R.
\end{align*}
\end{lemma}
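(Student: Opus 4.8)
The plan is to compute $\mu_{X+a}$ directly from the definition and compare it term by term with $\mu_X$; the only substantive work is two changes of variable together with careful bookkeeping of the integration limits. Starting from
\[
\mu_{X+a} = \int_0^\infty w(\prob{X+a > z})\,dz - \int_0^\infty w(\prob{-(X+a) > z})\,dz,
\]
I would first write $\prob{X+a>z}=\prob{X>z-a}$ in the first integral and substitute $u=z-a$, yielding $\int_{-a}^\infty w(\prob{X>u})\,du$; in the second integral I would write $\prob{-(X+a)>z}=\prob{-X>z+a}$ and substitute $v=z+a$, yielding $\int_a^\infty w(\prob{-X>v})\,dv$.

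Subtracting $\mu_X=\int_0^\infty w(\prob{X>u})\,du-\int_0^\infty w(\prob{-X>v})\,dv$ and using the identities $\int_{-a}^\infty-\int_0^\infty=\int_{-a}^0$ and $\int_a^\infty-\int_0^\infty=-\int_0^a$ (both read as signed integrals, valid for either sign of $a$), I obtain
\[
\mu_{X+a}-\mu_X = \int_{-a}^0 w(\prob{X>u})\,du + \int_0^a w(\prob{-X>v})\,dv.
\]
The one slightly delicate step is to recast the second term. Using $\prob{-X>v}=\prob{X<-v}$ and substituting $u=-v$ turns $\int_0^a w(\prob{X<-v})\,dv$ into $\int_{-a}^0 w(\prob{X<u})\,du$; combining the two integrals over the common interval then gives exactly
\[
\mu_{X+a}=\mu_X+\int_{-a}^0\bigl[w(\prob{X>u})+w(\prob{X<u})\bigr]\,du,
\]
as claimed. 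For the consequence, since $w$ takes values in $[0,1]$ the integrand is bounded above by $2$ pointwise, so bounding the signed integral over an interval of length $|a|$ gives $|\mu_{X+a}-\mu_X|\le 2|a|$.

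I do not expect a genuine obstacle here; the point to watch is purely the sign and limit bookkeeping when $a<0$, in which case $\int_{-a}^0$ runs backwards and must be read as $-\int_0^{-a}$, so the identities above are deliberately written to hold uniformly in the sign of $a$. I would also remark that replacing $\prob{X>u}$ by $\prob{X\ge u}$, or the strict by the non-strict inequality in $\prob{X<u}$, is immaterial: the two versions differ only on the at-most-countable set of atoms of $X$, which has Lebesgue measure zero and hence does not affect any of the integrals. Finiteness of all the integrals is guaranteed by boundedness of the support (or, more generally, \holder continuity of $w$), so every manipulation above is justified.
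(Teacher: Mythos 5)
Your proof is correct and follows essentially the same route as the paper's own argument: a direct computation shifting each tail integral by a change of variables, splitting off $\mu_X$, and converting the second piece to an integral of $w(\prob{X<u})$ over $[-a,0]$, with the bound $|\mu_{X+a}-\mu_X|\le 2|a|$ following from $w\le 1$. Your extra care about signed-integral conventions for $a<0$ and the remark on strict versus non-strict inequalities are fine refinements of the same argument, not a different approach.
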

Thus, it is not straightforward to compute the weight-distorted reward after translation, and this poses a significant challenge in the analysis of W-OFUL for the arm-dependent noise model that we consider here. 
\end{proof}

\subsection{Best arm identification with weight-based distortions}
\label{sec:algos-linear-bestarm}
In this section, we develop and study an algorithm, called W-G, for the best arm identification problem for linear bandits with the fixed confidence setting. The W-G algorithm is adapted from the G-allocation algorithm in~\cite{soare2014best} to incorporate weight-distorted reward criteria.
\subsubsection{Notation}
For any arm $x \in \X,$ denote by $S(x)$ the set of parameters $\theta \in \mathbb{R}^d$ for which arm~$x$ is the optimal arm, i.e., $S(x) = \lbrace \theta \in \mathbb{R}^d: \mu_x ( \theta) \geq \mu_{x'}(\theta)\,\, \forall x' \in \mathcal{X} \rbrace.$ Here $\mu_x(\theta)$, defined in \eqref{eq:cpt-lb}, is the weight-distorted reward of arm $x$ with underlying parameter $\theta$. 
Recall that, $\theta_*$ is the true unknown model parameter and $x_*$ is the optimal arm. For any $x \in \X,$ let $\Delta_x = \mu_{x_*}( \theta_*) - \mu_x( \theta_*),$ and $\Delta_{\min} = \min_{x \in \mathcal{X} \setminus x^*} \Delta_x.$ We assume that $\Delta_{\min} > 0.$
For a given sequence of arms ${\bf{x_n}} = \lbrace x_m \rbrace_{m=1}^n$ and $\lambda > 0,$ let $A_{\bf{x_n}} =\lambda I_{d\times d} + \sum\limits_{m=1}^n \frac{x_m x_m^T}{\Vert x_m \Vert^2}.$ For notational convenience, we write $A_{\bf{x_n}}$ as $A_n$ whenever the sequence of actions can be understood from the context. Let $\hat{\theta}_n$ be the ridge regression estimate of $\theta^*$ using $n$ observations, i.e., $\hat{\theta}_n = A_n^{-1} b_n,$ where $b_n = \sum\limits_{m=1}^n \frac{x_m r_m}{\Vert x_m \Vert}.$ For any $x, x' \in \X,$ let $\Delta(x, x') = \mu_x(\theta^*) - \mu_{x'}( \theta^*).$ 

\subsubsection{Stopping condition and arm selection in W-G}
Recall the confidence ellipsoid, $C_m,$ defined in Algorithm~\ref{alg:WOFUL}. Since $\theta_*$ lies within the confidence ellipsoid $C_n$ with high probability \cite[Theorem~2]{abbasi2011improved}, a reasonable condition to stop the algorithm is when the confidence ellipsoid is contained in $S(x)$, for some $x \in \mathcal{X}$. 
If $C_n \subseteq S(x)$ in some round $n$, then
\[\forall \theta \in C_n, \mu_x(\theta) \geq \mu_{x'}(\theta),\,\, \forall x' \in \mathcal{X},\] 
or equivalently, 
\begin{equation}
\label{eq:Stop1}
\left[ \mu_x(\hat{\theta}_n) - \mu_x(\theta) \right] - 
\left[ \mu_{x'}(\hat{\theta}_n) - \mu_{x'}(\theta) \right] \leq \hat{\Delta}_n(x, x'),
\end{equation}
where $\hat{\Delta}_n(x, x') = \mu_x( \hat{\theta}_n) - \mu_{x'}(\hat{\theta}_n)$.
Note that verifying~\eqref{eq:Stop1} is difficult in practice, and hence we derive a simpler sufficient condition as follows:
\begin{align}
\left[ \mu_x(\hat{\theta}_n) - \mu_x(\theta) \right] - \left[ \mu_{x'}(\hat{\theta}_n) - \mu_{x'}(\theta) \right] & \leq \left\vert x^T \left( \hat{\theta}_n - \theta \right) \right\vert + \left\vert x'^T \left( \hat{\theta}_n - \theta \right) \right\vert, \label{eq:lemma5app} \\
& \leq \Vert x \Vert_{A^{-1}_n} \Vert \hat{\theta}_n - \theta \Vert_{A_n} +\Vert x' \Vert_{A^{-1}_n} \Vert \hat{\theta}_n - \theta \Vert_{A_n}, \label{eq:a} \\
& = \left[ \Vert x \Vert_{A^{-1}_n} + \Vert x' \Vert_{A^{-1}_n} \right] \Vert \hat{\theta}_n - \theta \Vert_{A_n}, \label{eq:Stop2}
\end{align}
where \eqref{eq:lemma5app} follows from the fact that $\vert \mu_{Z+a} - \mu_Z \vert \leq 2 \vert a \vert$ for any r.v. $Z$ and $a \in \mathbb{R}$~(due to Lemma~\ref{lemma:cptdiff}), while \eqref{eq:a} is due to the Cauchy-Schwartz inequality. From~\eqref{eq:Stop2}, a sufficient condition for~\eqref{eq:Stop1} is given by 
\begin{equation*}
\forall \theta \in C_n, \exists x \in \mathcal{X} \,\, \text{s.t.} \,\, \left[ \Vert x \Vert_{A^{-1}_n} + \Vert x' \Vert_{A^{-1}_n} \right] \Vert \hat{\theta}_n - \theta \Vert_{A_n} \leq \hat{\Delta}_n(x, x'), \,\, \forall x' \in \mathcal{X}.
\end{equation*}
Using the fact that $\Vert \hat{\theta}_n - \theta \Vert_{A_n} \leq D_n$ for any $\theta \in C_n,$ we have that condition~\eqref{eq:Stop1} holds if
\begin{align}
\exists x \in \mathcal{X} \,\, \text{s.t.} \,\, \left[ \Vert x \Vert_{A^{-1}_n} + \Vert x' \Vert_{A^{-1}_n} \right] D_n \leq \hat{\Delta}_n(x, x'), \,\, \forall x' \in \mathcal{X}.\label{eq:some12} 
\end{align}
Using the fact that for any $x \in \mathcal{A},$ $\Vert x \Vert_{A^{-1}_n} \leq \max\limits_{x \in \mathcal{A}} \Vert x \Vert_{A^{-1}_n},$ it is easy to see that the following condition implies \eqref{eq:some12}: 
\begin{equation}
\label{eq:Stop100}
\exists x \in \mathcal{X} \, \text{ s.t. } \, 2\max\limits_{x \in \mathcal{X}} \Vert x \Vert_{A^{-1}_n} D_n \leq \hat{\Delta}_n(x, x'), \, \forall x' \in \mathcal{A}. 
\end{equation}
Thus, we have arrived at a condition that can be easily verified as compared to \eqref{eq:Stop1}, which is a computationally intensive operation as the latter involves checking an inequality for each candidate $\theta$ within the ellipsoid $C_n$. 

\begin{algorithm}[h]  
	\caption{W-G}
	\label{alg:cpt-g}
	\begin{algorithmic}
		\State {\bfseries Input:} $\delta \in (0, 1),$ $\beta,$ $\lambda,$ and weight function $w$
		\While{(Stopping condition~\eqref{eq:Stop100} does not hold)}
		\State Choose $x_m
		= \arg\min\limits_{x' \in \mathcal{X} }  \max\limits_{x \in \mathcal{X}} \Vert x \Vert_{\left( A_{\bf{x_{m-1}}} + x' x'^T \right)^{-1}},$ and observe reward $r_m.$ 
		\State Update $A_{\bf{x_m}} = A_{\bf{x_{m-1}}} + \frac{x_m x_m^T}{\Vert x_m \Vert^2},$ $b_m = b_{m-1} + \frac{x_m r_m}{\Vert x_m \Vert },$ $\hat{\theta}_m = A_{\bf{x_m}}^{-1} b_m$
		\EndWhile
		\State {\bfseries Output:} Return arm $x$ which satisfies~\eqref{eq:Stop100}. 
	\end{algorithmic}
\end{algorithm}

We now present an algorithm that finds an arm $x$ that satisfies \eqref{eq:Stop100}, while minimizing the number of the sample observations. Notice that, in order to meet the stopping condition, we need to choose a sequence of actions which minimizes the left hand side in \eqref{eq:Stop100}, i.e.,  
\begin{equation}
\label{eq:arm-selection-cpt-g-1}
{\bf{x_n^G}} = \arg\min\limits_{ {\bf{x_n}} }  \max\limits_{x \in \mathcal{X}} \Vert x \Vert_{A_{\bf{x_n}}^{-1}}.
\end{equation}
The arm selection strategy above is for the first $n$ rounds. However, $n$ is not known a priori.  So, we employ an incremental version of the arm selection strategy in \eqref{eq:arm-selection-cpt-g-1}, as shown in Algorithm~\ref{alg:cpt-g}, where we present the pseudocode for W-G. 
As in the case of W-OFUL, the W-G algorithm requires the knowledge of the form of the underlying distribution (see Remark \ref{remark:woful-form}), and W-G uses this knowledge to calculate the weight-distorted reward in the stopping condition.

We now present an upper bound on the sample complexity of W-G algorithm below.
\begin{theorem}[\textit{Sample complexity bound}]
\label{Thm:LinBdt1}
Consider a weight function $w : [0, 1] \rightarrow [0, 1]$ with $w(0) = 0$ and $w(1) = 1.$ For any $\delta \in (0, 1),$ let $N^G$ be the round number at which W-G algorithm stops and $\Pi \left( N^G \right)$ be the arm returned at $N^G.$ Then, we have  
\begin{equation}
\label{eq:CPT-Gbound}
\mathbb{P} \left( N^G \leq \frac{ c D_n^2 d}{\Delta_{\min}^2}, \,\, \Pi\left( N^G \right) = a^* \right)\geq 1 - \delta,
\end{equation}
where $d$ is the dimension of the space in which $\theta^*$ lies, and $c$ is a universal constant.
\end{theorem}
\begin{proof}[Proof sketch for Theorem~\ref{Thm:LinBdt1}]
First, using Lemma~\ref{lemma:cptdiff} and the Cauchy-Schwartz inequality we show that: 
\begin{align}
\hat{\Delta}_n \left( x^*, x \right) \geq \Delta \left( x^*, x \right) -2 \Vert \hat{\theta}_n - \theta^* \Vert_{A_n} \left[ \Vert x^{*T} \Vert_{A_n^{-1}} +  \Vert x^T \Vert_{A_n^{-1}} \right].
\end{align}
Then using the above, a high probability concentration bound on the confidence ellipsoids, Lemma~\ref{lemma:confidenceellipsoid} in the Appendix, and $\rho^G_n = n \max\limits_{x \in \mathcal{X}} x^T A^{-1}_n x$ we show that the following holds with probability at least $(1 - \delta)$ 
\begin{align}
\hat{\Delta}_n \left( x^*, x \right) \geq  \Delta_{\min} -2 D_n \sqrt{\frac{\rho^G_n}{n}}.
\end{align}
Using a sufficient condition on stopping rule given in equation~\eqref{eq:Stop100} and the above, we show that 
\begin{align}
\Delta_{\min}^2 \geq \frac{ 9 D_n^2 \rho^G_n}{n}.
\end{align}
Finally, the theorem is established by choosing a suitable value for the horizon $n.$
The reader is referred to Appendix~\ref{sec:proof-W-G-upperbound} for a detailed proof.
\end{proof}

\begin{remark}
The upper bound on $N^G$ in~\eqref{eq:CPT-Gbound} is independent of the parameters of the weight function and the number of arms $(K)$, but depends on the dimension of the underlying space~$(d).$ Also note that, in terms of dependence on the underlying gaps of arms, this upper bound is of the same order as the upper bound on the sample complexity of G-allocation in~\cite{soare2014best} for the problem of identifying the arm with the highest mean in linear bandits. The bounds differ only in the $\Delta_{min}$ term; here it is a function of weight-distorted rewards of arms, whereas in~\cite{soare2014best}, it is a function of expected values of arms. 
\end{remark}

\section{Vehicular traffic routing application}
\label{sec:experiments-regret}

We study the problem of optimizing the route choice of a human traveller using the Green light district (GLD) traffic simulation software \cite{GLDSim}. In this setup, a source-destination pair is fixed in a given road network. Learning proceeds in an online fashion, where the algorithm chooses a route in each round and the system provides the (stochastic) delay for the chosen route. The objective is to find the ``best'' path that optimizes some function of delay, while not exploring too much. While traditional algorithms minimize the expected delay, in this work, we consider the distorted value (as defined in \eqref{eq:cpt-lb}) as the performance metric. 

The gains/losses in CPT are usually w.r.t. a baseline \cite{tversky1992advances}.  Motivated by this observation, the distorted delay calculation in the experiments includes a baseline. In this setting, the goal is to minimize the excess delay, either in expectation or under the distortion measure. To elaborate, the r.v. considered is  $x\tr(\theta_{*} + N) -B$, where $B$ is the baseline delay, while the rest of the symbols are as defined in Section \ref{sec:model-linear}. 
For our experiments, we pick the average delay of a random routing algorithm as the baseline delay: a choice used previously in \cite{avineri12002sensitivity}. 

We implement both OFUL and W-OFUL algorithms for this problem. 
OFUL algorithm finds a route that has the lowest excess delay, while WOFUL finds a route with the lowest distorted excess delay. The latter quantity is calculated using \eqref{eq:cpt-lb} for the r.v. mentioned above. 
The distorted excess delay is calculated using the following weight function: $w(p) = \frac{p^{0.1}}{{(p^{0.1}+ (1-p)^{0.1})}^{10}}.$
Since the weight function $w$ is non-linear, a closed-form expression for $\mu_x(\hat\theta_m)$ is not available, and we employ the empirical distribution scheme, described for the $K$-armed bandit setting (see Algorithm \ref{alg:cpt-estimator}), for estimating the weight-distorted value. For this purpose, we simulate $25000$ samples of the Gaussian distribution, as defined in \eqref{eq:linban-depnoise}.

\begin{figure}
	\centering
	\scalebox{0.8}{
		\tikzset{roads/.style={line width=0.1cm}}
		
		\tabl{c}{\begin{tikzpicture}
				\filldraw (0,0) node[color=white,font=\bfseries]{1} circle (0.4cm);
				\filldraw (1.5,0) node[color=white,font=\bfseries]{2} circle (0.4cm);
				\filldraw (3,0) node[color=white,font=\bfseries]{3} circle (0.4cm);
				
				\filldraw (0,-6.0) node[color=white,font=\bfseries]{7} circle (0.4cm);
				\filldraw (1.5,-6.0) node[color=white,font=\bfseries]{8} circle (0.4cm);
				\filldraw (3,-6.0) node[color=white,font=\bfseries]{9} circle (0.4cm);
				
				\filldraw (-1.5,-0-1.5) node[color=white,font=\bfseries]{10} circle (0.4cm);
				\filldraw (-1.5,-1.5-1.5) node[color=white,font=\bfseries]{11} circle (0.4cm);
				\filldraw (-1.5,-3-1.5) node[color=white,font=\bfseries]{12} circle (0.4cm);
				
				\filldraw (4.5,-0-1.5) node[color=white,font=\bfseries]{4} circle (0.4cm);
				\filldraw (4.5,-1.5-1.5) node[color=white,font=\bfseries]{5} circle (0.4cm);
				\filldraw (4.5,-3-1.5) node[color=white,font=\bfseries]{6} circle (0.4cm);
				
				\foreach \x in {0,1.5,3}
				{
					\draw[roads,] (\x,-0.25) -- (\x,-5.75);
					\draw[roads] (-1.25,-\x-1.5) -- (4.25,-\x-1.5);
				}
				
				\draw[dotted,blue,line width=0.1cm] (-1.25,-3-1.35)  -- (1.4,-3-1.35) -- (1.4, -1.6)-- (4.5, -1.6);
				\draw[dotted,green,line width=0.1cm] (-1.15,-3-1.65)  -- (2.9,-3-1.65) -- (2.9,-1.5-1.62) -- (-0.1,-1.5-1.62) -- (-0.1,-1.4)--(4.12, -1.4);
				
				\node[draw=none] at  (-1.5,-3-1.5-1.5) (a1) {\textbf{\large\color{darkgreen}src}};
				\draw[thick,->] (a1) -- (-1.5,-3-2);
				\node[draw=none] at  (4.5,-1.5+1.5) (a2) {\textbf{\large\color{darkgreen}dst}};
				\draw[thick,->] (a2) -- (4.5,-1);
			\end{tikzpicture}\\[1ex]}}
	\caption{Road network used for our experiments. The blue-dotted route, corresponding to Route {\color{blue} $7$} in Table \ref{tab:results}, has the lowest expected excess delay, while the green-dotted ``detour'' route, corresponding to Route {\color{blue} $8$} in Table \ref{tab:results}, has the lowest distorted excess delay.}
	\label{fig:road-net}
\end{figure}
\begin{table}[h]
	\caption{Expected and distorted excess delays of the routes from node $12$ to node $4$ in in Figure \ref{fig:road-net}. Route $7$ corresponds to the blue-dotted path in Figure \ref{fig:road-net}, while Route $8$ corresponds to the green-dotted detour in the same figure.}
	\label{tab:results}
	\centering
	\begin{tabular}{|c|c|c|}
		\toprule 
		& \textbf{Expected excess delay }& \textbf{Distorted excess delay}\\
		& \textbf{$x_i\tr\hat\theta_{\text{off}} - B$} &  \textbf{$\mu_{x_i}(\hat\theta_{\text{off}})$}\\\midrule
		Route $1$ & $-11.03 $ & $-0.20$\\\midrule
		Route $2$ & $5.76 $ & $-0.13$\\\midrule
		Route $3$ & $27.24 $ & $ 0.07$\\\midrule
		Route $4$ & $-15.61 $ & $-0.25$\\\midrule
		Route $5$ & $21.62 $ & $-0.01$\\\midrule
		Route $6$ & $-12.94$ & $-0.27$\\\midrule
		Route $7$ & $\bm{-21.67}$  & $-0.29$\\\midrule
		Route $8$ & $58.88 $ & $\bm{-0.45}$\\\midrule
		Route $9$ & $15.59$ & $-0.04$\\\midrule
		Route $10$ & $5.89$ & $-0.09$\\\bottomrule
	\end{tabular}
	\end{table}

For computing a performance benchmark to evaluate OFUL/W-OFUL, we perform an experiment to learn a linear (additive) relationship between the delay of a route and the delays along each of its component lanes. This is reasonable, considering the fact the individual delays along each lane stabilize when the traffic network reaches steady state, and the delay incurred along any route $x$ would be centered around $x\tr\theta$.
For learning the linear relationship, we simulate $100000$ steps of the traffic simulator to collect the delays along any route of the source-destination pair. Using these samples, we perform ridge regression to obtain $\hat\theta_{\text{off}}$, i.e., we solve
\begin{align}
	\hat\theta_{\text{off}} = \argmin_{\theta}  \dfrac{1}{2} \sum\limits_{i=1}^{\tau} (c_i - \theta\tr x_i)^2 + \lambda \left\| \theta \right\|^2,
\end{align}
where $c_i$ is the $i$th delay sample corresponding to a route choice $x_i,$ $\tau$ is the total number of delay samples for the source-destination pair considered, and $\lambda$ is a regularization parameter. It is well known that $\hat \theta_{\text{off}} =A^{-1} b$, where 
$A =  \sum\limits_{i=1}^{\tau} x_i x_i\tr + \lambda I_d$
and $b =  \sum\limits_{i=1}^{\tau} c_{i} x_i,$ where $I_d$ denotes the $d$-dimensional identity matrix, with $d$ set to the number of lanes in the road network considered. 

We set the various parameters of the problem as well as bandit routing algorithms - OFUL and W-OFUL, as follows: Regularization parameter $\lambda = 1$, confidence $\delta=0.99$, and norm bound $\beta = 400$.
We observed that both OFUL and W-OFUL algorithms learn the parameter underlying the linear relationship (Eq. \eqref{eq:linban-depnoise}) to a good accuracy. 



Figure \ref{fig:road-net} shows the grid network used for our experiments, while Table \ref{tab:results} tabulates the expected and distorted excess delays for ten routes that connect node $12$ to node $4$. As illustrated in Figure \ref{fig:road-net}, the route with the lowest expected excess delay, i.e., the blue-dotted route, is one of the shortest paths connecting node $12$ to $4$. On the other hand, the route with lowest distorted excess delay, i.e., the green-dotted route, is a detour. 
The lower value of distorted excess delay for the longer green-dotted route
over the shorter blue-dotted route is
presumably due to the fact that the two routes differ in the variance
of the end-to-end delay. This leads to rare events being overweighted
by the weight function, ultimately making the former path more
appealing from the distorted excess delay viewpoint. The shorter route is preferred by OFUL algorithm, while the distortion-aware WOFUL algorithm finds the longer route.

\section{Conclusions}
\label{sec:conclusions}
We incorporated weight-distortion based criteria -- a generalization of expected value -- into regret minimization and best arm identification problems in $K$-armed and linear bandit settings. For both settings, we proposed algorithms and derived upper bounds on their performance. We also showed order-optimality of our proposed algorithms by providing matching lower bounds. In numerical experiments, we compared our algorithms with existing algorithms that deal with the expected reward setting, and showed that our algorithms outperform the existing algorithms w.r.t. weight-distorted rewards. Therefore, our distortion-aware bandit algorithms are likely to be useful in aiding human decision making under uncertainty in real-life applications.  

\bibliography{cpt_refs}
\bibliographystyle{IEEEtran}

\clearpage\newpage
\appendices

\section{Proof of weight-distorted value estimation concentration result for sub-Gaussian case}
\label{sec:subgauss-conc}
%
%
\subsection{Proof of Proposition~\ref{prop:sub-gaussian-concentration}:}
Let
\begin{align*}
e_n &= \int_0^{\infty} w\left(\Prob{Y>z}\right) dz- \int_0^{M_n} w\left(1- {F_n}(z)\right) dz\\
& = I_1 + I_2,
\end{align*}
where 
\begin{align*}
I_1= \int_{M_n}^{\infty} w\left(1- F(z)\right) dz, \quad I_2 = \int_0^{M_n} w\left(1-F(z)\right) dz- \int_0^{M_n} w\left(1- {F_n}(z)\right) dz.
\end{align*}
Using sub-Gaussianity, the first term on the RHS is bounded as follows:
\begin{align*}
I_1&=\int_{M_n}^\infty w\left(\Prob{Y > z}\right) dz  \le L \int_{M_n}^\infty \left(\Prob{Y > z}\right)^\alpha dz  \\
&\leq 8L \int_{M_n}^\infty \frac{z}{M_n} \exp(-\frac{\alpha z^2}{2\sigma^2}) dz = \frac{8L\sigma^2}{\alpha M_n}  \exp\left(-\frac{\alpha M_n^{2}}{2\sigma^2}\right).
\end{align*}
Using the concentration result for CPT-value estimation for bounded distributions, 
the second term is bounded as follows:
\begin{align*}
&\Prob{I_2 > \epsilon} \le \exp\left(-\frac{2 n\epsilon^{2/\alpha}}{(L M_n)^{2/\alpha}}\right).
\end{align*}
The statement above is equivalent to saying, w.p. $(1-\delta)$, 
\[I_2 \le L M_n \left(\frac{\log(1/\delta)}{2n}\right)^{\alpha/2}.\]
Plugging in the bounds on $I_1$ and $I_2$, w.p. $1-\delta$, we have
\begin{align}
e_n
\le \frac{8L\sigma^2}{\alpha M_n} \exp\left(-\frac{\alpha M_n^{2}}{2\sigma^2}\right) + L M_n \left(\frac{\log(1/\delta)}{2n}\right)^{\alpha/2}.
\end{align}
Substituting $M_n = \sigma\sqrt{\log n}$, we obtain 
\begin{align*}
e_n
\le   \frac{8L\sigma^2}{\alpha n^{\alpha/2}} + L \sqrt{\frac{\log n}{2}} \left(\frac{\log(1/\delta)}{2 n}\right)^{\alpha/2}  \textrm{ w.p. } 1-\delta.
\end{align*}
The statement above can be converted to a standard tail bound as follows:
\begin{align*}
\Prob{e_n > \epsilon} \le  \exp\left( -2 n \left(\frac{2}{L^2\log n}\right)^{\frac1{\alpha}} \left(\epsilon - \frac{8L}{\alpha n^{\alpha/2}} \right)^{\frac{2}{\alpha}}  \right).
\end{align*}

A similar bound can be obtained for 
\begin{align*}
\tilde e_n &= \int_0^{\infty} w\left(\Prob{Y<-z}\right) dz- \int_0^{M_n} w\left({F_n}(-z)\right) dz.
\end{align*}
The main claim follows by combining the bounds on $e_n$ and $\tilde e_n$. 

\section{Proofs for K-armed bandits}
\label{sec:proofs-karmed}
In this section, we present the detailed proofs of analytical results presented in Section~\ref{sec:algos-karmed}. 

\subsection{Proofs for regret minimization setting}
\subsubsection{Proof of Theorem \ref{thm:BasicHolderRegretGap}}
\label{sec:appendix-gapdependentregret}
As in the case of regular UCB, we bound the number of times a sub-optimal arm is pulled using the technique from~\cite{munos2014survey}. 
Recall that, wlog we assumed that arm 1 is the optimal in Section~\ref{sec:model-karmed}. Suppose that a sub-optimal arm $k$ is pulled at time $m$, which implies that 
$$\widehat \mu_{k,T_k(m-1)} + \gamma_{m,T_k(m-1)} \ge  \widehat \mu_{1,T_{1}(m-1)} + \gamma_{m,T_{1}(m-1)}.$$
The UCB-value of arm $k$ can be larger than that of arm $1$ {\em only if} one of the following three conditions holds:
\begin{align}
\widehat \mu_{1,T_{1}(m-1)}  & \le \mu_*- \gamma_{m,T_{1}(m-1)}, \label{eq:optarmmeanoutside}\\
&\text{ or } \nonumber\\
\widehat \mu_{k,T_k(m-1)} & \ge \mu_k + \gamma_{m,T_k(m-1)}, \label{eq:karmmeanoutside}\\
&\text{ or } \nonumber\\
\quad \mu_* &< \mu_k \,+\, 2 \,\gamma_{m,T_{k}(m-1)}. \label{eq:cond3}  
\end{align}

Note that condition (\ref{eq:cond3}) above is equivalent to requiring 
\[T_{k}(m-1) \le  \dfrac{2^{2/\alpha -1} 3 (LM)^{2/\alpha}\log m}{\Delta_k^{2/\alpha}}.\]

Let $\kappa = \dfrac{2^{2/\alpha -1} 3 (LM)^{2/\alpha}\log n}{\Delta_k^{2/\alpha}} + 1$. When $T_{k}(m-1) \ge  \kappa$, i.e., the condition in \eqref{eq:cond3} does not hold, then either (i) arm $k$ is not pulled at time $m$, or 
(ii) \eqref{eq:optarmmeanoutside} or \eqref{eq:karmmeanoutside} occurs.
Thus, we have  
\begin{equation}
T_k(n) \le \kappa + \sum_{m=\kappa+1}^n I(I_m=k; T_k(m) > \kappa) \le \kappa + \sum_{m=\kappa+1}^n I(\eqref{eq:optarmmeanoutside} \text{ or }\eqref{eq:karmmeanoutside} \text{ occurs}). \label{eq:tkn}
\end{equation}
From Theorem \ref{thm:cpt-est}, we can upper bound the probability of occurence of \eqref{eq:optarmmeanoutside} as follows:
$$ \mathbb{P} \left( \exists l \in \lbrace 2, 3, \dots, m-1 \rbrace \text{ such that } \widehat \mu_{1,T_{1}(l-1)}  \le \mu_*- \gamma_{l,T_{1}(l-1)} \right) \le \sum_{l=1}^m \frac{2}{m^{3}} \le \frac{2}{m^{2}}.$$
A similar argument works for \eqref{eq:karmmeanoutside}.  
Plugging the bounds on the events governed by \eqref{eq:optarmmeanoutside} and \eqref{eq:karmmeanoutside} into \eqref{eq:tkn} and taking expectations, we obtain
  \begin{equation*}
	\E[T_k(n)] \leq \kappa + 2 \sum_{m=\kappa+1}^n \frac{2}{m^{2}} \le \dfrac{2^{2/\alpha -1} 3 (LM)^{2/\alpha}\log n}{\Delta_k^{2/\alpha}} + \left(1 + \dfrac{2\pi^2}{3} \right).
	\end{equation*}
  The claim follows by plugging the inequality above into the definition of expected regret, and using the fact that $\Delta_k \leq M$ for all arms $k$, as it follows that the weight-distorted reward of any distribution bounded by $M$ again admits an upper bound of $M$. \hfill$\blacksquare$

\begin{remark}
In the above proof, we have used simple union bounds for handling the random stopping times in the interest of readability. Note that improved techniques in the literature such as peeling arguments~\cite{bubeck2010jeux}, self-normalized concentration inequalities\cite{de2004self, bercu2008exponential} can also be used to obtain potentially better bounds. 
\end{remark}

\subsubsection{Proof of Corollary \ref{cor:BasicHolderRegretNoGap}}
\label{sec:appendix-gapindependentregret}
  We have, by the analysis in the proof of Theorem
  \ref{thm:BasicHolderRegretGap}, that for all $k$ with
  $\Delta_k > 0$,
  $$ \E[T_k(n)] \leq \dfrac{3(2LM)^{2/\alpha}\log n}{2\Delta_k^{2/\alpha}} +\left(1 + \dfrac{2\pi^2}{3} \right).
$$
  Thus, we can write
  \begin{align*}
    \E R_n &= \sum_{k} \Delta_k \; \E [T_k(n)] = \sum_{k} \left( \Delta_k \; \E [T_k(n)]^{\frac{\alpha}{2}}  \right) \left( \E [T_k(n)]^{1-\frac{\alpha}{2}} \right) \\
    &\leq \left( \sum_k \Delta_k^{2/\alpha} \; \E [T_k(n)] \right)^{\frac{\alpha}{2}} \left( \sum_k \E [T_k(n)] \right)^{1 - \frac{\alpha}{2}} \\
    &\text{(H\"{o}lder's inequality with exponents $2/\alpha$ and $2/(2-\alpha)$)} \\
    & \leq \left( \dfrac{3K(2LM)^{2/\alpha}\log n}{2} + K M^{2/\alpha}\left(1 + \dfrac{2\pi^2}{3} \right)  \right)^{\frac{\alpha}{2}} n^{\frac{2-\alpha}{2}}, \quad \quad \text{(using $\Delta_k^{2/\alpha} \leq M^{2/\alpha}$)}. 
  \end{align*}
  This proves the result.\hfill$\blacksquare$

\subsubsection{Proof of Theorem~\ref{thm:karmedlowerbd}} 
\label{sec:appendix-regretlowerbound}
We will need the following definition to prove the theorem. For two probability distributions $p$ and $q$ on $\mathbb{R}$, define the {\em relative entropy} or {\em Kullback-Leibler (KL) divergence} $D(p || q)$ between $p$ and $q$ as 
\[ D(p || q) = \int \log \left(\frac{dp}{dq}(x)\right) dp(x) \]
if $p$ is absolutely continuous w.r.t. $q$ (i.e., $q(A) = 0 \Rightarrow p(A) = 0$ for all Borel sets $A$), and $D(p || q) = \infty$ otherwise. For instance, if $p$ and $q$ are Bernoulli distributions with parameters $a \in (0,1)$ and $b \in (0,1)$, respectively, then a simple calculation gives that $D(p || q) = a \log \frac{a}{b} + (1-a) \log \frac{1-a}{1-b}$. 
We will often use $D(F || G)$ to mean the KL divergence between distributions represented by their cumulative distribution functions $F$ and $G$, respectively. 
\begin{proof}
We first show the result for rewards bounded by $M = 1$, the extension to general $M$ follows. 

The key ingredient in the proof is the seminal lower-bound on the number of suboptimal arm plays derived by Lai and Robbins \cite{lai1985lowerbd}. Their result shows that for any algorithm that plays suboptimal arms only a sub-polynomial number of times in the time horizon, 
\[ \liminf_{n \to \infty} \frac{\expect{T_k(n)}}{\log n} \geq \frac{1}{D(F_k || F_{1})} \]
for any suboptimal arm $k$ and we assumed that arm~1 is the optimal arm. (Note that the argument at its core uses only a change-of-measure idea and the sub-polynomial regret hypothesis, and is thus unaffected by the fact that we measure regret by distorted, i.e., non-expected, rewards.)

Using this along with the definition of regret, we get
\begin{align}
	\liminf_{n \to \infty} \frac{\expect{R_n}}{\log n} &= \liminf_{n \to \infty} \frac{\sum\limits_{k=2}^K \E[T_k(n)] \Delta_k}{\log n} \nonumber \\
	&\geq \sum\limits_{k=2}^K \Delta_k \cdot \liminf_{n \to \infty} \frac{\E[T_k(n)] }{\log n} \nonumber \\
	&\geq \sum\limits_{k=2}^K \frac{\Delta_k}{D(F_k || F_{1})} \nonumber \\
	&= \sum\limits_{k=2}^K \dfrac{(2LM)^{2/\alpha}}{\Delta_k^{2/\alpha - 1}} \cdot \frac{(\Delta_k/2LM)^{2/\alpha}}{D(F_k || F_{1})}. \label{eq:liminfbound}
\end{align}

(Recall that $\Delta_k = \mu_{1} - \mu_k$ represents the difference between the weight-distorted values of arms $k$ and $1$.) \\

We now design a set of reward distributions for the arms, for which the limiting property claimed in the theorem holds. In fact, we will show that it is enough to design Bernoulli distributions for the arms' rewards, i.e., arm $k$'s reward is Ber($p_k$), or equivalently, $F_k(x) = (1-p_k)\mathbf{1}_{[0,1)}(x) + \mathbf{1}_{[1,\infty)}(x)$. This gives a simple expression for the weight-distorted reward of any arm $k$ as $\mu_k = \int_0^1 w(p_k) dz = w(p_k)$, and, consequently, $\Delta_k = w(p_{1}) - w(p_k)$, for any monotonic increasing weight function $w: [0,1] \to [0,1]$ with $w(0) = 1 $ and $w(1) = 1.$

Consider now a weight function $w:[0,1] \to [0,1]$ which is monotone increasing from $0$ to $1$, H\"{o}lder-continuous with any desired constant $L > 0$ and exponent $\alpha \in (0,1]$, and, moreover, satisfies the following ``\holder continuity property from below'': for some $\tilde{p} \in (0,1)$ (say, $\tilde{p} = 1/2$), $|w(\tilde{p}) - w(p)| \geq L |\tilde{p} - p|^\alpha$. Such a weight function can always be constructed, e.g., for $\alpha = 1/2, L = 1/\sqrt(2),$ take the function $w(x) = \frac{1}{2} - \frac{1}{\sqrt{2}}\sqrt{\frac{1}{2} - x}$ for $x \in [0, 1/2]$, and $w(x) = \frac{1}{2} + \frac{1}{\sqrt{2}}\sqrt{x - \frac{1}{2}}$ for $x \in (1/2, 1]$. (This is essentially formed by gluing together two inverted and scaled copies of the function $\sqrt{x}$, to make an S-shaped function infinitely steep at $x = 1/2$.)

For such a weight function, putting $p_{1} = \tilde{p} = 1/2$ and $p_k < p_{1}$, $k \neq 1$, we can use the standard bound\footnote{This results from using $\log x \leq x - 1$.} 
\begin{align*} 
D(F_k || F_{1}) &= p_k \log \frac{p_k}{p_{1}} + (1-p_k) \log \frac{1-p_k}{1-p_{1}}  \\
&\leq \frac{(p_{1} - p_k)^2}{p_{1}(1 - p_{1})} = 4{( p_{1} - p_k)^2} \\
&\leq 4 \left( \frac{ w(p_{1}) - w(p_k) }{L} \right)^{2/\alpha} \quad \quad \mbox{(by the ``lower-\holder'' property of $w$)} \\
&= 4 \left( \frac{\Delta_k}{L} \right)^{2/\alpha}. 
\end{align*}
Since $M = 1$ for the Bernoulli family of reward distributions, this implies, together with \eqref{eq:liminfbound}, that 
\[ \liminf_{n \to \infty} \frac{\expect{R_n}}{\log n} \geq  \sum_{\{k: \Delta_k > 0\}} \dfrac{L^{2/\alpha}}{4\Delta_k^{2/\alpha - 1}}. \]
For general $M$, one can modify this construction and consider arms' reward distributions to be Bernoulli scaled by the constant $M$ (i.e., equal to $M$ w.p. $p$ and $0$ otherwise). This completes the proof. 
\end{proof}

\subsection{Proofs for best arm identification setting: Fixed confidence}
For proving the claims in Theorems~\ref{Thm:correctness} and~\ref{Thm:LUCB6}, we adopt the approach from \cite{kalyanakrishnan2012pac}. We use the concentration inequality presented in Theorem~\ref{thm:cpt-est} rather than the Hoeffding's inequality used in the analysis of LUCB algorithm in~\cite{kalyanakrishnan2012pac}; this is due to the fact that the rate at which the weight-distored reward estimate concentrates is slightly slower than the rate at which sample mean concentrates around true mean.
In particular, one would require $O\left(1/\epsilon^{2/\alpha}\right)$ samples (due to Theorem~\ref{thm:cpt-est}) to get an estimate of the weight-distorted reward that is $\epsilon$-accurate, while the expected value can be estimated to $\epsilon$ accuracy using $O\left(1/\epsilon^{2}\right)$ samples. Note that the sample complexity in Theorem \ref{thm:cpt-est} is not sub-optimal, as the authors show via information-theoretic lower bounds in Proposition 5 of \cite{prashanth2017cptlong}.
\subsubsection{Proof of Theorem~\ref{Thm:correctness}.}
\label{sec:proof-of-W-LUCB-correctness}
We follow the proof technique from~\cite{kalyanakrishnan2012pac}. Without loss of generality, we assume that arm~$1$ is the optimal arm. 
\begin{align*}
&\mathbb{P} \left( \text{W-LUCB does not return optimal arm} \right) = \\ &\sum\limits_{m=1}^\infty \mathbb{P} \left( \text{W-LUCB terminates in round-m and does not return optimal arm} \right) \\
& \sum\limits_{m=1}^\infty \sum\limits_{b=2}^K \mathbb{P} \left( \text{W-LUCB terminates in round-~$m$ and returns arm~$b$} \right)
\end{align*}
Along the lines of proof of Theorem~1 in~\cite{kalyanakrishnan2012pac}, we obtain that  
\begin{multline}
\label{eq:abi-1}
\mathbb{P} \left( \text{W-LUCB terminates in round-$m$ and returns arm~b} \right) \le \\
\mathbb{P} \left( \widehat{\mu}_{1,m} + \gamma'_{m, T_1(m-1)} \le \mu_1 \right) + \mathbb{P} \left( \widehat{\mu}_{b,m} - \gamma'_{m, T_b(m-1)} \ge \mu_b \right).
\end{multline}
The first term on the RHS above can be bounded as follows:
\begin{equation*}
\mathbb{P} \left( \widehat{\mu}_{1,m} + \gamma'_{m, T_1(m-1)} \le \mu_1 \right) \le \sum\limits_{s=1}^m \mathbb{P} \left( \widehat{\mu}_{1,m} + \gamma'_{m, s} \le \mu_1 \right) \le \sum\limits_{s=1}^m \exp \left( -2s \left( \frac{\gamma'_{m,s}}{LM} \right)^{2/\alpha} \right),
\end{equation*}
where the final inequality follows from Theorem \ref{thm:cpt-est}. Similarly, we can upper bound the second term in the RHS of~\eqref{eq:abi-1}. 
Thus,
\begin{align*}
\mathbb{P} \left( \text{W-LUCB does not return optimal arm} \right) 
\le K \sum\limits_{m=1}^\infty \sum\limits_{s=1}^m 2\exp \left( -2s \left( \frac{\gamma'_{m,s}}{LM} \right)^{\frac{2}{\alpha}} \right) < \delta,
\end{align*}
where the final inequality follows from the fact that 
$\gamma'_{m,s} = LM \left[ \frac{1}{2s} \log \left( \frac{m^4 K \left(\frac{\pi^2}{3} + 1\right)}{\delta} \right) \right]^{\frac{\alpha}{2}}$.

\hfill$\blacksquare$
\subsubsection{Proof of Theorem~\ref{Thm:LUCB6}.}
\label{sec:proof-W-LUCB-complexity-bound}
Recall that $N^{CL}$ denotes the time instant when W-LUCB terminates, and Theorem~\ref{Thm:LUCB6} gives an upper bound on $\mathbb{E} \left[ N^{CL} \right]$. Let $Term_m$ denote the event that the algorithm terminates in a round $m$. Then the expected sample complexity is given by
\begin{align}
\mathbb{E}[N^{CL}] = \mathbb{E} \left[ \sum\limits_{m} \mathbb{I} \lbrace \sim Term_m \rbrace \right] = \sum\limits_m \mathbb{P} \left( \sim Term_m \right).
\end{align}
We establish Theorem~\ref{Thm:LUCB6} by deriving an upper bound on $\mathbb{P} \left( \sim Term_m \right).$  
Before we proceed further, we introduce the required notation. Let $ c = \frac{\mu_1 + \mu_2}{2}.$ It is easy to see that $\frac{\Delta_k}{2} \le \vert c - \mu_k \vert \le \Delta_k$ for any arm $k \in \mathcal{A} = \lbrace 1, 2, \cdots, K \rbrace.$ In round-$m,$ define
\begin{align*}
Above_m &= \lbrace k \in \mathcal{A} : \underbrace{\hat{\mu}_{k,T_k(m-1)} - \gamma'_{m,T_k(m-1)}}_{LCB_k} > c \rbrace, \\
 Below_m &= \lbrace k \in \mathcal{A} :  = \underbrace{\hat{\mu}_{k,T_k(m-1)} + \gamma'_{m,T_k(m-1)}}_{UCB_k} < c \rbrace, \text{and} \\
 Needy_m & = \mathcal{A} \setminus \left( Above_m \cup  Below_m \right)
\end{align*}
Intuitively, we expect the optimal arm to lie in $Above_m$ or $Needy_m$ and all sub-optimal arms to lie in $Below_m$ or $Needy_m.$ Now, we define an event $Cross_m^k$ which captures the scenario when the aforementioned condition does not occur, i.e., 
\begin{align*}
Cross^k_m =
\begin{cases}
k \in Below_m \hspace{0.45cm} \text{if arm~$k$ is the optimal arm},\\
k \in Above_m \hspace{0.45cm} \text{if arm~$k$ is a sub-opimal arm}
\end{cases}
\end{align*}
Let $Cross_m$ be the event when at least one arm~$k$ satisfies $Cross^k_m$, i.e., 
\begin{align}
Cross_m = \exists k \in \mathcal{A} : Cross^k_m \,\,\text{occured}
\end{align}

We now state three lemmas that are useful in bounding $\mathbb{P} \left( \sim Term_m \right)$, which in turn assists in bounding the sample complexity. 
\begin{lemma}
\label{Thm:LUCB2}
For any $m > 1,$ W-LUCB satisfies the following:
\begin{align*}
\sim Cross_m \cap \sim Term_m \subseteq \lbrace h_m \in Needy_m \rbrace \cup \lbrace l_m \in Needy_m \rbrace,
\end{align*}
where $h_m$ and $l_m$ are defined in the Algorithm~\ref{alg:cpt-lucb}.
\end{lemma}
\begin{proof}
It directly follows from Lemma~2 in~\cite{kalyanakrishnan2012pac}.
\end{proof}
The above lemma shows that if $Cross_m$ does not occur and the W-LUCB does not terminate, then either arm $h_m$ or arm $l_m$ is in $Needy_m.$ We also infer that if the algorithm does not terminate in round-$m,$ then either $Cross_m$ occurs or $h_m$ or $l_m$ is in $Needy_m.$ In the next two lemmas, we provide upper bounds on $\mathbb{P} \left( Cross_m \right)$ and $\mathbb{P}  \left( \lbrace h_m \in Needy_m \rbrace \cup \lbrace l_m \in Needy_m \rbrace \right).$  
\begin{lemma}
\label{Thm:LUCB3}
For any $m > 1,$ W-LUCB algorithm satisfies 
\begin{align*}
\mathbb{P} \left( Cross_m \right) \le \frac{2 \delta}{a m^3}, \textrm{ where }\delta \in (0, 1) \textrm{ and }a = \pi^2/3 +1.
\end{align*}
\end{lemma}
\begin{proof}
Assume that arm~1 is the optimal arm. 
\begin{align*}
\mathbb{P} \left( Cross_m \right) = \mathbb{P} \left( \cup_{k \in \mathcal{A}} Cross_m^k \right) \le \sum\limits_{k \in \mathcal{A}} \mathbb{P} \left( Cross_m^k \right) \le \mathbb{P} \left( 1 \in Below_m \right) + \sum\limits_{k=2}^K \mathbb{P} \left( k \in Above_m \right).
\end{align*}
Now, we upper bound $\mathbb{P} \left( 1 \in Below_m \right)$ as follows. 
\begin{align*}
\mathbb{P} \left( 1 \in Below_m \right) & = \mathbb{P} \left( \widehat{\mu}_{1,m} + \gamma'_{m, T_1(m-1)} < c \right) \le \sum\limits_{s=1}^m \mathbb{P} \left( \widehat{\mu}_{1,m} + \gamma'_{m, s} < c \right) \\
& = \sum\limits_{s=1}^m \mathbb{P} \left( \widehat{\mu}_{1,m}  - \mu_1 < c -\mu_1 - \gamma'_{m, s} \right) \le \sum\limits_{s=1}^m 2 \exp \left( -2s \left( \frac{\mu_1 - c + \gamma'_{m,s}}{LM} \right)^{2/\alpha} \right) \\
&\overset{(*)}{\le} \sum\limits_{s=1}^m 2 \exp \left( -2s \left( \frac{\gamma'_{m,s}}{LM} \right)^{2/\alpha} \right) 
= \frac{2 \delta}{K m^3 a},
\end{align*}
where $(*)$ is due to the fact that $(\mu_1 -c) > 0$. Similarly, we can prove that $\mathbb{P} \left( k \in Above_m \right) \le \frac{2 \delta}{K m^3 a}$ for $2 \le k \le K,$ which completes the proof.
\end{proof}
\begin{lemma}
\label{Thm:LUCB4}
Let $d_\alpha = \frac{1}{\left( 1 - \left( \frac{1}{4} \right)^{\alpha/2} \right)^{2/\alpha}}.$ For any arm~$k$ and round-$m,$ let  \\ $\beta \left( k, m \right) = \lceil \frac{1}{2 d_\alpha} \left( \frac{2LM}{\Delta_k}\right)^{2/\alpha} \log \left( \frac{m^4 K a}{\delta} \right) \rceil.$
Then, we have 
\begin{align}
\mathbb{P} \left( \exists k \in \mathcal{A} : T_k(m-1) > 4 \beta (k,m) \cap \lbrace k \in Needy_m \rbrace \right) 
\le \frac{H_\alpha d_\alpha \delta}{2m^4 K a}, \label{eq:LCB4claim}
\end{align}
where $H_\alpha$ is as defined in Theorem~\ref{Thm:LUCB6}.
\end{lemma}
\begin{proof}
First, we upper bound the LHS of \eqref{eq:LCB4claim} for an arm~$k$ and then use a union bound. 
Consider the case when $k = 1$. We have,
\begin{align*}
&\mathbb{P} \left( T_1(m-1) > 4 \beta (1,m) \cap \lbrace 1 \in Needy_m \rbrace \right) 
\le \mathbb{P} \left( T_1(m-1) > 4 \beta (1,m) \cap \lbrace 1 \in \sim Above_m \rbrace \right) \\
& = \mathbb{P} \left( T_1(m-1) > 4 \beta (1,m) \cap \hat{\mu}_{1,T_1(m-1)} - \gamma'_{m,T_1(m-1)} < c \right)\\
& = \sum\limits_{s = 4\beta(1,m)+1}^\infty \mathbb{P} \left( \widehat{\mu}_{1,s}-\mu_1 < -( \mu_1 -c - \gamma'_{m,s}) \right) \le \sum\limits_{s = 4\beta(1,m)+1}^\infty \exp \left( -2s \left( \frac{\frac{\Delta_1}{2} -\gamma'_{m,s}}{LM} \right)^{2/\alpha} \right) \\
&  = \sum\limits_{s = 4\beta(1,m)+1}^\infty \exp \left( -2s \left( \frac{\Delta_1}{2LM} - \left[ \frac{1}{s} \left( \frac{\Delta_1}{2LM} \right)^{2/\alpha} \beta(1,m) \right]^{\alpha/2} \right)^{2/\alpha} \right) \\
& = \sum\limits_{s = 4\beta(1,m)+1}^\infty \exp \left( -2 \left( \frac{\Delta_1}{2LM}\right)^{2/\alpha} \left[ s^{\alpha/2} - \beta(1,m)^{\alpha/2} \right]^{2/\alpha} \right) \\
& \le \int\limits_{s = 4\beta(1,m)}^\infty \exp \left( -2s \left( \frac{\Delta_1}{2LM}\right)^{2/\alpha} \left[ 1 - \left( \frac{\beta(1,m)}{s} \right)^{\alpha/2} \right]^{2/\alpha} \right) ds \\
& \le \int\limits_{s = 4\beta(1,m)}^\infty \exp \left( -2s \left( \frac{\Delta_1}{2LM}\right)^{2/\alpha} \left[ 1 - \left( \frac{1}{4} \right)^{\alpha/2} \right]^{2/\alpha} \right) ds \\
& \le \frac{\delta}{2m^4 K a} \left( \frac{2LM}{\Delta_1} \right)^{2/\alpha} \frac{1}{\left( 1 - \left( \frac{1}{4} \right)^{\alpha/2} \right)^{2/\alpha} }.
\end{align*}
Similarly, it can be shown that for any arm $2 \le k \le K,$
\begin{align*}
\mathbb{P} \left( T_k(m-1) > 4 \beta (k, m) \cap \lbrace k \in Needy_m \rbrace \right) \le  \frac{\delta}{2m^4 K a} \left( \frac{2LM}{\Delta_k} \right)^{2/\alpha} \frac{1}{\left( 1 - \left( \frac{1}{4} \right)^{\alpha/2} \right)^{2/\alpha} }
\end{align*}
From the foregoing,
\begin{align*}
\mathbb{P} \left( \exists k \in \mathcal{A} : T_k(m-1) > 4 \beta (k,m) \cap \lbrace k \in Needy_m \rbrace \right) \le \frac{H_\alpha d_\alpha \delta}{2m^4 K a}. 
\end{align*}
This proves the lemma.
\end{proof}
In the following result, we upper bound $\mathbb{P} \left( \sim Term_m \right)$ for a sufficiently large~$m,$ using the lemmas stated above.
\begin{lemma}
\label{Thm:LUCB5}
For any $m > 150 H_\alpha \log \left( \frac{H_\alpha}{\delta} \right),$ W-LUCB satisfies
$\mathbb{P} \left( \sim Term_m \right) = O \left( \frac{1}{m^2} \right).$
\end{lemma}
\begin{proof}
We first prove the lemma by fixing an $m$ that is larger than $ 150 H_\alpha \log \left( \frac{H_\alpha}{\delta} \right),$ which in turn implies that the lemma holds for any $m > 150 H_\alpha \log \left( \frac{H_\alpha}{\delta} \right)$. Fix 
\begin{align}
\label{eq:fix_m_equation}
m = c H_\alpha \log \left( \frac{H_\alpha}{\delta} \right), \text{where } c > 150.
\end{align}
Let $\bar{m} = \lceil \frac{m}{2} \rceil.$ Define the following events:
\begin{align*}
E_1 &:= \exists s \in \lbrace \bar{m}, \bar{m}+1, \cdots, m-1 \rbrace: Cross_m  \\
E_2 &:=  \exists s \in \lbrace \bar{m}, \bar{m}+1, \cdots, m-1 \rbrace, \exists k \in \mathcal{A}: \left( T_k(m-1) > 4\beta (k,m) \cap \lbrace k \in Needy_m \rbrace \right)
\end{align*}
Now, we show that under the event $\sim E_1 \cap \sim E_2,$ W-LUCB algorithm terminates in at most $m$ rounds. Suppose the algorithm terminates in $\bar{m}$ rounds, then there is nothing to prove. Assume that the algorithm has not terminated in first $\bar{m}$ rounds. Let $AR$ be the additional number of rounds the algorithm lasts from $\bar{m}$ till $m$, under the event $\sim~E_1~\cap~\sim~E_2.$ We show that $\bar{m} + AR < m$ which implies that the algorithm has terminated in at most $m$ rounds under the considered event. Due to the fact that the algorithm has not terminated, $\sim E_1$, and Lemma~\ref{Thm:LUCB2}, 
\begin{align*}
AR  = \sum\limits_{s = \bar{m}+1}^m \mathbb{I} \lbrace h_s \in Needy_s\,\, \text{or} \, \, l_s \in Needy_s \rbrace & \le \sum\limits_{s = \bar{m}+1}^m \sum\limits_{k=1}^K \mathbb{I} \lbrace k \in Needy_s  \cap \left( k = h_s \,\, \text{or}\,\, k = l_s \right) \rbrace \\
& \hspace{-2cm} \overset{(a)}{=} \sum\limits_{s = \bar{m}+1}^m \sum\limits_{k=1}^K \mathbb{I} \lbrace \left( k = h_s \,\, \text{or}\,\, k = l_s \right) \cap T_k(s-1) \le 4 \beta (k,s) \rbrace \\
& \hspace{-2cm} \overset{(b)}{\le} \sum\limits_{s = \bar{m}+1}^m \sum\limits_{k=1}^K \mathbb{I} \lbrace \left( k = h_s \,\, \text{or}\,\, k = l_s \right) \cap T_k(s-1) \le 4 \beta (k,m) \rbrace \\
& \hspace{-2cm} = \sum\limits_{k=1}^K\sum\limits_{s = \bar{m}+1}^m \mathbb{I} \lbrace \left( k = h_s \,\, \text{or}\,\, k = l_s \right) \cap T_k(s-1) \le 4 \beta (k,m) \rbrace \\
& \hspace{-2cm} \le \sum\limits_{k=1}^K 4 \beta \left( k, m \right),
\end{align*}
where $(a)$ is due to $\sim E_2$ and  $(b)$ is due to the fact that $\beta(k,s)$ is an increasing function in~$s.$ Now, we bound $2 + 8 \sum\limits_{k=1}^K 4 \beta(k,m)$ is as follows.
\begin{align*}
2 + 8 \sum\limits_{k=1}^K 4 \beta(k,m)  & \le 2 + 8 \sum\limits_{k} \left( \frac{1}{2d_\alpha} \left( \frac{2LM}{\Delta_k} \right)^{2/\alpha} \log \left( \frac{m^4 K a}{\delta} \right) +1 \right) \\
& = 2 + 8K + \frac{4}{d_\alpha} H_\alpha \log \left( \frac{m^4 K a}{\delta} \right) \\
&  \overset{(a)}{<} 2K + 8K + 4H_\alpha \log \left( \frac{m^4 K a}{\delta} \right) \\
& = 10K + 16 H_\alpha \log m + 4 H_\alpha \log \left( \frac{K}{\delta} \right) + 4 H_\alpha \log a \\
& \overset{(b)}{\le} (10 + 4 \log a) H_\alpha + 4 H_\alpha \log \left( \frac{H_\alpha}{\delta} \right) + 16 H_\alpha \log m \\
&  \overset{(c)}{=} (10 + 4 \log a) H_\alpha + 4 H_\alpha \log \left( \frac{H_\alpha}{\delta} \right) + 16 H_\alpha \left( \log c + \log H_\alpha + \log \log \frac{H_\alpha}{\delta} \right) \\
& \overset{(d)}{\le} (10 + 4 \log a) H_\alpha + 4 H_\alpha \log \left( \frac{H_\alpha}{\delta} \right) + 16 H_\alpha \left( \log c + \log H_\alpha + \log \frac{H_\alpha}{\delta} \right) \\
& \overset{(e)}{\le} 18 H_\alpha + 36 H_\alpha \log \left( \frac{H_\alpha}{\delta} \right) + 16 H_\alpha \log c \\
& \overset{(f)}{\le} ( 54 + 16 \log c) H_\alpha \log \left( \frac{H_\alpha}{\delta} \right) \\
& \overset{(g)}{<} c H_\alpha \log \left( \frac{H_\alpha}{\delta} \right) = m 
\end{align*}
where $(a)$ is due to $K > 1,$ $(b)$ is due to $K \le H_\alpha,$  $(c)$ is due to ~\eqref{eq:fix_m_equation}, $(d)$ is due to $\log \log \frac{H_\alpha}{\delta} \le \log \frac{H_\alpha}{\delta},$ $(e)$ is due to $H_\alpha \le H_\alpha/\delta,$ $\log a \le 2$, $(f)$ is due to $ \log \frac{H_\alpha}{\delta} \ge 1$, and $(g)$ is due to the fact that $(54+16\log c)< c$ for any $c > 150$. \\
We see that $2(1 + AR) < m \Rightarrow AR < m/2 - 1.$ Hence, the total number of rounds the W-LUCB algorithm lasts is at most $\bar{m} + AR \le m/2 + 1 + AR < m.$ This proves that the W-LUCB algorithm terminates in at most $m$ rounds under the event $\sim E_1 \cap \sim E_2.$ Therefore, for $ m > 150 H_\alpha \log \frac{H_\alpha}{\delta}$ 
\begin{align}
\mathbb{P} \left( \text{W-LUCB has not terminated in round-m} \right) = \mathbb{P} \left( E_1 \cup E_2 \right).
\end{align}
Using Lemmas~\ref{Thm:LUCB3} and \ref{Thm:LUCB4}, we upper bound $\mathbb{P} \left( E_1 \cup E_2 \right)$ as follows:
\begin{align*}
\mathbb{P} \left( E_1 \cup E_2 \right) &\le \sum\limits_{s = \bar{m}}^m \left( \frac{2\delta}{as^3} + \frac{H_\alpha d_\alpha\delta}{2 s^4 K a} \right) \le \frac{m}{2} \left( \frac{16 \delta}{a m^3} + \frac{8 d_\alpha H_\alpha \delta}{aKm^4} \right) \le \frac{m}{2} \left( \frac{16 \delta}{a m^3} \right) \left( 1 + \frac{d_\alpha H_\alpha}{2Km} \right) \\
& = O \left( \frac{1}{m^2} \right)
\end{align*}
which establishes the result.
\end{proof}
It is easy to see from the result above that, w.p. at least $(1- \delta)$, the W-LUCB algorithm terminates after $O \left( H_\alpha \log \frac{H_\alpha}{\delta} \right)$ rounds. 

With the aid of the above result, we now present the proof of Theorem~\ref{Thm:LUCB6}.
\begin{proof}
Recall that, for any $m > K,$ the W-LUCB algorithm pulls two arms in any round. From Lemma~\ref{Thm:LUCB5}, we have
\begin{align*}
\mathbb{E} (\text{Sample complexity} ) \le &  2 \times 150 H_\alpha \log \left( \frac{H_\alpha}{\delta} \right) + \sum\limits_{m > 150 H_\alpha \log \left( \frac{H_\alpha}{\delta} \right)} O \left( \frac{1}{m^2} \right) \\
& = 300 H_\alpha \log \left( \frac{H_\alpha}{\delta} \right) + \text{constant} = O \left( H_\alpha \log \frac{H_\alpha}{\delta} \right),
\end{align*}
which completes the proof.
\end{proof}
\subsubsection{Proof of Theorem~\ref{Thm:FCSBLB}}
\label{sec:proof-complexity-lower-bound}
Note that, our proof uses an information-theoretic lower bound given in~\cite{kaufmann2015complexity}. In particular, we construct a particular set of arm distributions and a weight function as follows. For sake of convenience, we give a proof for $M = 1$ \emph{i.e.,} rewards are bounded by 1, and the similar analysis follows for general $M.$ Consider the $K$-armed bandit problem with reward distributions $Ber(p_1), Ber(p_2) \cdots, Ber(p_K).$ Without loss of generality, we assume that $p_1 > p_2 \ge p_3 \ge\cdots p_K.$ As shown in the proof of Theorem~\ref{thm:karmedlowerbd}, we can construct a H\"older-continuous weight function with the following property, for a given $L$ and $\alpha$: 
\begin{align}
\textrm{For some }\tilde{p} \in (0, 1), 
\vert w(p) - w(\tilde{p}) \vert \ge L \vert p - \tilde{p} \vert^\alpha, \forall p\in [0,1].
\label{eq:specialw}
\end{align}
Clearly, for a bernoulli r.v. with parameter $p_1$, weight-distorted reward equals $w(p_1)$. Assume that $p_1 = \frac{1}{2}$ and $p_k < \frac{1}{2}$ for any $2 \le k \le K.$ From Lemma~1 and Theorem~4 in~\cite{kaufmann2015complexity}, we obtain
\begin{align}
\mathbb{E}\left[ T_k(N_A) \right] & \ge \frac{1}{KL(p_2, p_k)} \cdot \frac{1}{\log(2.4 \delta)} \hspace{1cm} \text{for $k =1$}  \label{eq:Rav1}\\
\mathbb{E}\left[ T_k(N_A) \right] & \ge \frac{1}{KL(p_k, p_1)} \cdot \frac{1}{\log(2.4 \delta)} \hspace{1cm} \text{for $2 \le k \le K$} \label{eq:Rav2}
\end{align}  
Notice that, $\mathbb{E}[N_A] = \sum\limits_{k=1}^K \left[ T_k(\tau) \right].$ Now, we bound the KL-divergences as follows: For $k \ge 2$
\begin{align}
KL(p_k, p_1) &= p_k \log \left( \frac{p_k}{p_1} \right) + (1-p_k) \log \left( \frac{1-p_k}{1-p_1} \right) \overset{(a)}{\le} \frac{(p_1 - p_k)^2}{p_1 (1-p_1)} = 4(p_k - p_1)^2 \nonumber\\
& \overset{(b)}{\le} 4 \left( \frac{w(p_k) - w(p_1)}{L} \right)^{2/\alpha} = 4 \left( \frac{\Delta_k}{L} \right)^{2/\alpha} \label{eq:Rav3}
\end{align}
where $(a)$ is due to $\log x \le x-1$ and $(b)$ is due to the H\"older continuity property from the below. From \eqref{eq:Rav1}--\eqref{eq:Rav3}, we obtain
\begin{equation*}
\mathbb{E} [N_A] \ge \sum\limits_{k=1}^K 4 \left( \frac{L}{\Delta_k} \right)^{2/\alpha} \log \left( \frac{1}{2.4 \delta} \right) = O \left( H_\alpha \log \left( \frac{1}{2.4 \delta} \right) \right),
\end{equation*}
which completes the proof.
\hfill$\blacksquare$

\subsection{Proofs for best arm identification setting: Fixed budget}
\subsubsection{Proof of Theorem~\ref{Thm:SR1}}
\label{sec:proof-W-SR-error-upper-bound}
For establishing this result, we follow the proof technique from~\cite{audibert2010best}. Without loss of generality, we assume that arm $1$ is the optimal arm. Then, $J_n \neq 1$ implies that the optimal arm is rejected in one of the $K-1$ phases. Suppose that arm $1$ is rejected in phase $k$ of W-SR. Then, we have $\widehat{\mu}_{1, n_k} \le \hat{\mu}_{j, n_k} \forall j \in \mathcal{A}_k.$ Since the algorithm is in the phase $k$, $(k-1)$ sub-optimal arms must have been rejected in the first $(k-1)$ phases. Hence, at least one of the arms in $\lbrace
K+1-k, K+2-k, \cdots, K \rbrace$ must have survived until phase $k$ and this arm must be in $\mathcal{A}_k.$ Thus,
\begin{align*}
\lbrace \widehat{\mu}_{1,n_k} \le \widehat{\mu}_{j, n_k}, \forall j \in \mathcal{A}_k \rbrace \subseteq \lbrace \widehat{\mu}_{1,n_k} \le \widehat{\mu}_{a, n_k} : a \in \lbrace K+1-k, K+2-k, \cdots, K \rbrace, a\in \mathcal{A}_k \rbrace
\end{align*}
\begin{align*}
\mathbb{P} \left( \widehat{\mu}_{1,n_k} \le \widehat{\mu}_{j, n_k}, \forall j \in \mathcal{A}_k \right) & \le \sum\limits_{i=1}^k \mathbb{P} \left( \widehat{\mu}_{1,n_k} \le \widehat{\mu}_{K+i-k, n_k} \right) \mathbb{I} \lbrace \text{arm~$(K+i-k) \in \mathcal{A}_k$} \rbrace \\
& \le \sum\limits_{i=1}^k \mathbb{P} \left( \widehat{\mu}_{1,n_k} \le \widehat{\mu}_{K+i-k, n_k} \right).
\end{align*}
Therefore,
\begin{align*}
\mathbb{P} \left( J_n \neq 1 \right) \le \sum\limits_{k=1}^{K-1}\sum\limits_{i=1}^k \mathbb{P} \left( \hat{\mu}_{1, n_k} \le \hat{\mu}_{K+i-k, n_k} \right) &= \sum\limits_{k=1}^{K-1}\sum\limits_{i=K+1-k}^K \mathbb{P} \left( \hat{\mu}_{1, n_k} \le \hat{\mu}_{i, n_k} \right) \\
& = \sum\limits_{k=1}^{K-1}\sum\limits_{i=K+1-k}^K \mathbb{P} \left( \hat{\mu}_{1, n_k} - \hat{\mu}_{i, n_k} - (\mu_i - \mu_1) \ge \Delta_i \right) \\
& \overset{(a)}{\le} \sum\limits_{k=1}^{K-1} \sum\limits_{i=K+1-k}^K \exp \left( -2 n_k \left( \frac{\Delta_i}{LM} \right)^{2/\alpha} \right) \\
& \overset{(b)}{\le} \sum\limits_{k=1}^{K-1} k \exp \left( -2 n_k \left( \frac{\Delta_{K+1-k}}{LM} \right)^{2/\alpha} \right),
\end{align*}
where, $(a)$ follows from Theorem~\ref{thm:cpt-est} and $(b)$ from the fact that $\Delta_{K+1-k} \le \Delta_{K+i-k} \forall i \ge 1.$ Note that,
\begin{equation*}
n_k \left( \frac{\Delta_{K+1-k}}{LM}\right)^{2/\alpha}  \ge \frac{n-K}{\overline{\log} K} \frac{1}{K+1-k} \frac{1}{\left( \frac{\Delta_{K+1-k}}{LM} \right)^{-2/\alpha}} \ge \frac{n-K}{\overline{\log} K} \left( \frac{1}{M}\right)^{2/\alpha} \frac{1}{H_{2,\alpha}},
\end{equation*}
where $\overline{\log} K = \frac{1}{2} + \sum\limits_{i=2}^K \frac{1}{i}$.
Thus,
\begin{align*}
\mathbb{P} (J_n \neq 1) &\le \sum\limits_{k=1}^{K-1} k \exp \left( -\frac{n-K}{\overline{\log} K} \left( \frac{1}{LM}\right)^{2/\alpha} \frac{1}{H_{2,\alpha}} \right) \\
&= \frac{K(K-1)}{2} \exp \left( -\frac{n-K}{\overline{\log} K} \left( \frac{1}{LM}\right)^{2/\alpha} \frac{1}{H_{2,\alpha}} \right),
\end{align*}
which completes the proof.
\hfill$\blacksquare$
\subsubsection{Proof of Theorem~\ref{Thm:FBSBLB1}}
\label{sec:proof-k-armed-bai-fb-lowerbound}
Note that our proof follows the technique from~\cite{carpentier2016tight}. Let $\nu_k$ and $\nu'_k$ denote the Bernoulli distributions with parameters $p_k$ and $(1-p_k),$ respectively. Let $\nu_k^i$ denotes the distribution of arm~$k$ under $MAB_i.$ So, for $i = k,$ $\nu_k^i = \nu'_k,$ and for $i \neq k,$ $\nu^i_k = \nu_k.$
Define
\begin{align*}
d_k := 
\begin{cases}
1/2 - p_2 \hspace{1cm} \text{for $k =1$}\\
1/2 - p_k \hspace{1cm} \text{for $2 \le k \le K$}
\end{cases}
\end{align*}
Let  $\Delta^i_k$ denote the gap between weight-distorted rewards of the optimal arm ($= i$) and arm~$k$ under the problem transformation $MAB_i.$ This quantity is defined as follows: 
\begin{align*}
\Delta^i_k := 
\begin{cases}
(1-p_i) - p_k = (1/2 - p_i) + (1/2 - p_k) = d_i + d_k & \text{for $i \neq k.$}\\
1/2 - p_2 = d_1  & \text{for $i=k=1$} \\
(1 - p_i) -1/2 = d_i &\text{for $2 \le i=k \le K$}
\end{cases}
\end{align*}
Let $\widehat{KL}^i_{k,t}$ be the empirical KL-divergence between distributions of arm~$k$ in $MAB_i$ and $MAB_1,$ i.e.,
\begin{align}
\widehat{KL}^i_{k,t}  = \sum\limits_{s=1}^t \Big[ \mathbb{I} \lbrace X_{k,s} = 1\rbrace \log \left( \frac{\nu^1_k \left( X_{k,s} = 1 \right)}{\nu^i_k \left( X_{k,s} = 1 \right) } \right) + \mathbb{I} \lbrace X_{k,s} = 0 \rbrace \log \left( \frac{\nu^1_k \left( X_{k,s} = 0 \right)}{\nu^i_k \left( X_{k,s} = 0 \right) } \right) \Big].
\end{align}
Note that,
\begin{align*}
\widehat{KL}^i_{k,t} = 
\begin{cases}
0, \hspace{1cm} \text{for $i =1$ and $i \neq k$}\\
\sum\limits_{s=1}^t \Big[ \mathbb{I} \lbrace X_{k,s} = 1\rbrace \log \left( \frac{p_k}{1 - p_k } \right)  + \mathbb{I} \lbrace X_{k,s} = 0 \rbrace \log \left( \frac{1 - p_k}{p_k} \right) \Big], \hspace{0.25cm} \text{for $i = k$}
\end{cases}
\end{align*}
Define
\begin{align}
\label{eq:KL_k}
KL_k := KL(\nu_k, \nu'_k) &= KL (\nu'_k, \nu_k) = (1 - 2p_k) \log \left( \frac{1-p_k}{p_k} \right).
\end{align}
Let $\xi = \lbrace \forall 1 \le k \le K, \forall 1 \le m \le n, \widehat{KL}^k_{k,m} - mKL_k \le 2 \sqrt{m \log (6nK)} \rbrace.$
The following lemma shows that the empirical KL-divergences concentrate, implying $\xi$ is a high-probability event.
\begin{lemma}
\label{Lem:FBLB}
$\mathbb{P}_1 ( \xi ) \ge 5/6.$
\end{lemma}
\begin{proof}
It is easy to see that $\widehat{KL}^k_{k,m} - m KL_k \le 2 \sqrt{m \log (6nK)}$ trivially holds for $k = 1.$  Let $k \ge 2.$
\begin{align*}
\mathbb{E}_1 \left[ \widehat{KL}^k_{k,m} \right] & = \mathbb{E}_1 \left[ \sum\limits_{s=1}^m \left[ \mathbb{I} \lbrace X_{k,s} = 1\rbrace \log \left( \frac{p_k}{1 - p_k } \right) + \mathbb{I} \lbrace X_{k,s} = 0 \rbrace \log \left( \frac{1 - p_k}{p_k} \right) \right] \right]  \\ 
& = \sum\limits_{s=1}^m \left[ p_k \log \left( \frac{p_k}{1 - p_k } \right) + (1 - p_k) \log \left( \frac{1 - p_k}{p_k} \right) \right]\\
& =m KL_k
\end{align*} 
Recall that, $p_k \in [1/4, 1/2),$ for $k \ge 2.$ So, $2 \le k \le K, 1 \le s \le m,$ we get 
\begin{align}
\vert \mathbb{I} \lbrace X_{k,s} = 1\rbrace \log \left( \frac{p_k}{1-p_k} \right) + \mathbb{I} \lbrace X_{k,s}=0 \rbrace \log \left( \frac{1-p_k}{p_k} \right) \vert \le \log 3
\end{align}
Since $\widehat{KL}^k_{k,m}$ is an i.i.d. sum of bounded independent random variables, using Hoeffding's Inequality we obtain
\begin{align}
\mathbb{P}_1 \left( \widehat{KL}^k_{k,m} -m KL_k \le \sqrt{2} \log 3 \sqrt{m \log(6nK)} \right) \ge 1 - \frac{1}{6Kn}.
\end{align}
Since $\sqrt{2}\log 3 < 2,$ we obtain,
\begin{align*}
\mathbb{P}_1 \left( \widehat{KL}^k_{k,m} -m KL_k \le 2 \sqrt{m \log(6nK)} \right) &\ge 1 - \frac{1}{6Kn} \\
\mathbb{P}_1 \left( \exists 1\le k \le K, \exists 1 \le m \le n: \widehat{KL}^k_{k,m} -m KL_k \le 2 \sqrt{m \log(6nK)} \right) &\le \frac{1}{6} \\
\mathbb{P} \left( \forall 1 \le k \le K, \forall 1 \le m \le T, \widehat{KL}^k_{k,m} - mKL_k \le 2 \sqrt{m \log (6TK)} \right) &\ge 5/6,
\end{align*}
which completes the proof.
\end{proof}
Now we prove Theorem~\ref{Thm:FBSBLB1} with the aid of the above lemma.
\begin{proof}
Let $\varepsilon$ be any measurable event. Then, using change of measure inequality from~\cite{lai1985lowerbd}, we have
\begin{align}
\mathbb{P}_i (\varepsilon) = \mathbb{E}_1 \left[ \mathbb{I}_\varepsilon \exp \left( -\widehat{KL}^i_{i, T_i(n)} \right) \right].
\end{align} 
For $2 \le i \le K,$ define
\begin{align}
\varepsilon_i = \lbrace J_n =1 \rbrace \cap \lbrace \xi \rbrace \cap \lbrace T_i(n) \le 6 \mathbb{E}_1 [T_i(n)] \rbrace
\end{align}
\begin{align}
\mathbb{P}_i (\varepsilon_i) = \mathbb{E}_1 \left[ \mathbb{I}_{\varepsilon_i} \exp \left( -\widehat{KL}^i_{i, T_i(n)} \right) \right] &\overset{(a)}{\ge} \mathbb{E}_1 \left[ \mathbb{I}_{\varepsilon_i} \exp \left( -T_i(n) KL_i - 2 \sqrt{T_i \log (6nK)} \right) \right] \nonumber \\
& \overset{(b)}{\ge} \mathbb{E}_1 \left[ \mathbb{I}_{\varepsilon_i} \exp \left( -6 \mathbb{E}_1[T_i(n)] KL_i - 2 \sqrt{n \log (6nK)} \right) \right] \nonumber \\
& = \exp \left( -6 \mathbb{E}_1 [T_i(n)] KL_i - 2 \sqrt{n \log (6nK)} \right) \mathbb{P}_1( {\varepsilon_i}) \label{eq:kol1}
\end{align}
where $(a)$ is due to $\widehat{KL}^i_{i, T_i(n)} \ge -T_i(n) KL_i - 2 \sqrt{T_i(n) \log (6nK)}$ under $\varepsilon_i \subseteq \xi,$ $(b)$ is due to $T_i(n) \le 6 \mathbb{E}_1 [T_i(n)]$ and $T_i(n) \le n.$

Now, we bound $\mathbb{P}_1 (\varepsilon_i)$ as follows. Using the Markov inequality, we get
\begin{align}
\mathbb{P}_1(T_i(n) \ge 6\mathbb{E}_1 [T_i(n)] ) \le \frac{\mathbb{E}_1[T_i(n)]}{6 \mathbb{E}_1[T_i(n)]} = 1/6.
\end{align}
Assume that $\mathbb{P}_1 \left( J_n \neq 1 \right) \le 1/2.$ If this assumption is violated, then the algorithm's mistake probability on one of the $K$-armed bandit problems can be high, which makes the algorithm's mistake probability on $K$ bandit problems non-uniform. 
\begin{align*}
\mathbb{P}_1 \left( \lbrace J_n \neq 1 \rbrace \cup \lbrace \xi^c \rbrace \cup \lbrace T_i(n) \ge 6 \mathbb{E}_1[T_i(n)] \rbrace \right) \le 1/2 + 1/6 + 1/6 \Rightarrow \mathbb{P}_1 (\varepsilon_i) \ge 1/6.
\end{align*}
For $2 \le i \le K,$
\begin{align}
\mathbb{P}_i \left( J_n \neq i \right) = \sum\limits_{j \neq i} \mathbb{P}_i (J_n = j) & \ge \mathbb{P}_i (J_n =1 ) \ge \mathbb{P}_i (\varepsilon_i) \nonumber\\
& \overset{(a)}{\ge} \frac{1}{6} \exp \left( -6 \mathbb{E}_1 [T_i(n)] KL_i - 2 \sqrt{n \log (6nK)} \right) \label{eq:kol3},
\end{align}
where $(a)$ is due to~\eqref{eq:kol1}. Recall from~\eqref{eq:KL_k} and $p_i \in [1/4, 1/2)$ for $2 \le i \le K$,
\begin{align}
\label{eq:kol2}
KL_i = (1-2p_i) \log \left( \frac{1-p_i}{p_i} \right) \le 10 \left( 1/2 - p_i \right)^2.
\end{align}
As in the proof of Theorem \ref{Thm:FCSBLB}, we consider a weight function $w$ that satisfies \eqref{eq:specialw}.
 For this weight function, using~\eqref{eq:kol2}, we obtain
\begin{align*}
KL_i \le 10 \left(  1/2 - p_i \right)^2 \overset{(a)}{\le} \left( \frac{w(1/2) - w(p_i) }{L} \right)^{2/\alpha} \left( \frac{\Delta^1_i}{L} \right)^{2/\alpha},
\end{align*}
where $(a)$ follows from \eqref{eq:specialw}. From~\eqref{eq:kol3}, we have
\begin{align}
\label{eq:kol4}
\mathbb{P}_i (J_n \neq i) \ge  \frac{1}{6} \exp \left( -6 \mathbb{E}_1 [T_i(n)] \left( \frac{\Delta^1_i}{L} \right)^{2/\alpha} - 2 \sqrt{n \log (6nK)} \right).
\end{align}
Since $H_{1, \alpha} = \sum\limits_{2 \le k \le K} \left( \frac{L}{\Delta^1_k} \right)^{2/\alpha}$ and $\sum\limits_{1\le k \le K}\mathbb{E}_1[T_k(n)] = n$,  there exists $2 \le j \le K$ such that $\mathbb{E}_1[T_j(n)] \le \frac{n}{H_{1, \alpha} \left( \frac{\Delta^1_j}{L} \right)^{2/\alpha} }.$  For this $j$,~\eqref{eq:kol4} gives that 
\begin{align*}
\mathbb{P}_j (J_n \neq j) &\ge  \frac{1}{6} \exp \left( -\frac{60 n}{H_{1, \alpha}} - 2 \sqrt{n \log (6nK)} \right) \\
\Rightarrow \max\limits_{1 \le i \le K} \mathbb{P}_i (J_n \neq i) &\ge  \frac{1}{6} \exp \left( -\frac{60 n}{H_{1, \alpha}} - 2 \sqrt{n \log (6nK)} \right).
\end{align*}
This completes the proof.
\end{proof}

\subsection{Proof of Theorem~\ref{thm:bai_karmed_improved_lb}}
\label{proof:bai_karmed_improved_lb}
Proof of this theorem closely follows the proof of Theorem~1 in~\cite{carpentier2016tight}. Note that, the notation or quantities used in this proof are the same as the ones in the proof of Theorem~\ref{Thm:FBSBLB1} in our manuscript. From Theorem~\ref{Thm:FBSBLB1}, we have the following: 

Let $h^* = \sum\limits_{2 \leq  k \leq K} \frac{1}{d^{2/\alpha}_k H_{k,\alpha}}$ and it holds that $\sum\limits_{1\leq k \leq K} t_k = n$ then $\exists$ an $2 \leq i \leq K$ such that $t_i \leq \frac{n}{h^* d^{2/\alpha}_i H_{i, \alpha}}.$ For this $i,$ we get the following from equation~(42): 
\begin{align*}
\mathbb{P}_i \left( J_n \notin i \right) \geq \frac{1}{6} \exp \left( -\frac{60 n}{h^* H_{i, \alpha}} -2 \sqrt{n \log (6nK)} \right).
\end{align*} 

From the above and Theorem~\ref{Thm:FBSBLB1}, we have the following: 

If $n\geq\max\limits\left(H(1,\alpha),H_{i,\alpha}h^*\right)^24\log(6nK)/(60)^2$ then for any MAB algorithm that outputs an arm $J_n$ at time $n,$ it holds that 
\begin{align}
\label{eq:jnjd1}
\max\limits_{1 \leq i \leq K }\mathbb{P}_i \left( J_n \neq i \right)  \geq \frac{1}{6}	\exp \left( - 120 \frac{n}{H_{1, \alpha}} \right) 
\end{align}
and 
\begin{align}
\label{eq:jnjd2}
\max\limits_{1 \leq i \leq K } \left[ \mathbb{P}_i \left( J_n \neq i \right) \times \exp \left( 120 \frac{n}{H_{i, \alpha}h^*} \right) \right] \geq \frac{1}{6}	 
\end{align}

First part of theorem easily follows from equation~\eqref{eq:jnjd1}, since $H_{1, \alpha} = \max_i H_{i, \alpha}$ and $H_{i, \alpha}$ is bounded by $a.$

We establish the second part of the theorem by using the equation~\eqref{eq:jnjd2} as follows. Let $d_k = \frac{1}{4}\left(k/K\right)$ for $2 \leq k \leq K$ which implies that $p_k =1/2 - \frac{1}{4} \left( k/K \right) \in [1/4, 1/2),$ since $p_k = 1/2 - d_k.$ Note that, for this problem $H(i, \alpha) \leq H(1, \alpha) = \sum_{2\leq k \leq K} d_k^{-2/\alpha} \leq 3(4K)^{2/\alpha}$ and hence it belongs to $\mathbb{B}_a$ with $a = 3(4K)^{2/\alpha}.$ For this case, we now first calculate $d_i^{2/\alpha} H(i, \alpha)$ in order to calculate $h^*$. For any $1 \leq i \leq K,$ we have
\begin{align*}
d_i^{2/\alpha} H(i, \alpha) =  d_i^{2/\alpha} \sum\limits_{k \neq i} \frac{1}{(d_i + d_k)^{2/\alpha}} \leq d_i^{2/\alpha} \left( \frac{i}{d_i^{2/\alpha}} + \sum\limits_{k > i} \frac{1}{d_k^{2/\alpha}} \right) & \leq i + i^{2/\alpha}\sum\limits_{i < k \leq K} \frac{1}{k^{2/\alpha}} \\
& \leq i^{\frac{2}{\alpha}} + i^{\frac{2}{\alpha}}\sum\limits_{i < k \leq K} \frac{1}{k^{2}} \leq 4i^{2/\alpha}  
\end{align*}
From the above, we get that
\begin{align*}
h^* \geq \sum\limits_{i=2}^K \frac{1}{4i^{2/\alpha}} \geq \frac{1}{4\left( 1 - 2/\alpha \right)} \left[ (\log K)^{1-2/\alpha} - 2^{1-2/\alpha}  \right].
\end{align*} 
which establishes the theorem. 
\hfill{$\blacksquare$}

\section{Proofs for linear bandits setting}
\label{sec:proofs-linear}
Now, we present the detailed proofs of all technical results presented in Section~\ref{sec:algos-linear}. 

\subsection{Proofs for regret minimization setting}
\label{sec:proofs-regert-linear}
We establish a few technical results in the following lemmas that are necessary for the proof of Theorem \ref{thm:linear-bandit-regret}. 
The first step in the proof of Theorem \ref{thm:linear-bandit-regret} is to bound the instantaneous regret $ir_m$ at instant $m$, which is the difference in the weight-distorted rewards of arm $x_*$ and the arm $x_{m}$ chosen by the algorithm. For bounding the instantaneous regret, it is necessary to relate the difference in weight-distorted rewards of $x_*$ and $x_{m}$ to the difference in their means, i.e., $(x_*\tr\theta_* - x_{m}\tr\theta_*)$, and Lemma \ref{lemma:cptdiff} provides this connection.
\subsubsection{Proof of Lemma \ref{lemma:cptdiff}}
\label{sec:weight-distortion-undert-translation}
Notice that
	\begin{align*}
		\mu_{X+a} &= \intinfinity w(\prob{X+a > z}) dz - \intinfinity w(\prob{-X-a > z}) dz \\
		&= \intinfinity w(\prob{X > z-a}) dz - \intinfinity w(\prob{-X > z+a}) dz \\
		&= \int_{-a}^\infty w(\prob{X > u}) du - \int_{a}^\infty w(\prob{-X > u}) du \\
		&= \int_{-a}^0 w(\prob{X > u}) du - \int_{a}^0 w(\prob{-X > u}) du + \mu_X \\
		&= \int_{-a}^0 w(\prob{X > u}) du + \int_{-a}^0 w(\prob{X < v}) dv + \mu_X.
	\end{align*}
	This proves the main claim. For the case when $w$ is bounded by 1, it is easy to infer that $|\mu_{X+a} - \mu_X| \leq 2|a|$.
	\hfill{$\blacksquare$}
\begin{lemma}
\label{lemma:confidenceellipsoid}
Under the hypotheses of Theorem \ref{thm:linear-bandit-regret}, for any $\delta>0$, the following event occurs w.p. at least $1-\delta$: $\forall m \ge 1$, $\theta_*$ lies in the set $C_m$ defined in Algorithm \ref{alg:WOFUL}, i.e., 
\begin{align*}
	C_{m} &= \left\{\theta \in \mathbb{R}^d: \norm{\theta - \hat{\theta}_m}_{A_m} \leq D_m \right\}, \quad \mbox{where} \\
	D_m & = \sqrt{2 \log \left(\frac{\det(A_m)^{1/2} \det(\lambda I_{d \times d})^{1/2}}{\delta} \right)} + \beta\sqrt{\lambda}, \\
	A_m &= \lambda I_{d \times d} + \sum_{l=1}^{m-1} \frac{x_l x_l\tr}{\norm{x_l}^2}, \\
	\hat{\theta}_m &= A_m^{-1} b_m, \quad \mbox{and} \quad b_m = \sum_{l=1}^{m-1} \frac{r_l x_l}{\norm{x_l}}.
\end{align*}
\end{lemma}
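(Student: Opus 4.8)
The plan is to reduce the arm-dependent-noise model to the standard stochastic linear bandit model of \cite{abbasi2011improved}, and then invoke the self-normalized confidence-ellipsoid bound from that work essentially verbatim. The rescaling by $\norm{x_m}$ built into the updates of $A_m$ and $b_m$ is precisely what makes this reduction go through.

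First, I would rescale each observation by the arm norm. Writing $\hat x_m = x_m / \norm{x_m}$ and dividing the observation $c_m = x_m\tr(\theta + N_m)$ by $\norm{x_m}$ gives $y_m := c_m/\norm{x_m} = \hat x_m\tr\theta + \eta_m$, where $\eta_m := \hat x_m\tr N_m$. With this notation the algorithm's updates read $A_{m+1} = A_m + \hat x_m \hat x_m\tr$ and $b_{m+1} = b_m + y_m \hat x_m$, so $\hat\theta_m = A_m^{-1} b_m$ is exactly the $\lambda$-regularized least-squares estimate of $\theta$ from the rescaled data $\{(\hat x_l, y_l)\}_{l < m}$ with unit-norm regressors. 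This is the standard linear-model form, with the only remaining thing to check being that the rescaled noise $\{\eta_m\}$ behaves like admissible sub-Gaussian noise.

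Second, I would verify that $\{\eta_m\}$ is a conditionally $1$-sub-Gaussian martingale-difference sequence with respect to the natural filtration. Let $\mathcal{F}_{m-1}$ be the $\sigma$-field generated by $N_1, \ldots, N_{m-1}$ and $x_1, \ldots, x_m$. Since $x_m$ (hence $\hat x_m$) is chosen from the history up to round $m-1$, it is $\mathcal{F}_{m-1}$-measurable, while $N_m$ is independent of $\mathcal{F}_{m-1}$. Conditioned on $\mathcal{F}_{m-1}$, the vector $\hat x_m$ is a fixed unit vector and $N_m \sim \mathcal{N}(0, I_d)$, so $\eta_m = \hat x_m\tr N_m \sim \mathcal{N}(0, \norm{\hat x_m}^2) = \mathcal{N}(0,1)$. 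In particular $\E[\eta_m \mid \mathcal{F}_{m-1}] = 0$ and $\E[e^{s\eta_m} \mid \mathcal{F}_{m-1}] \le e^{s^2/2}$ for all $s \in \R$, which is exactly the conditional $1$-sub-Gaussianity hypothesis required by the self-normalized bound.

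With the reduction established, the third and final step is a direct application of the self-normalized tail inequality of \cite{abbasi2011improved} (the method-of-mixtures supermartingale argument of their Theorem 1, specialized in their Theorem 2 to the regularized least-squares estimator), taking sub-Gaussian parameter $R = 1$ and norm bound $S = \beta \ge \norm{\theta}_2$. This yields that, with probability at least $1-\delta$, simultaneously for all $m \ge 1$, $\norm{\hat\theta_m - \theta}_{A_m} \le D_m$, i.e., $\theta \in C_m$. The main obstacle is the second step: correctly pinning down the filtration so that $\hat x_m$ is predictable while $N_m$ is fresh, and observing that projecting the standard Gaussian $N_m$ onto the unit vector $\hat x_m$ restores unit conditional variance. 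This is exactly where the arm-dependent model departs from the standard one and where the $\norm{x_m}$-normalization in $A_m$ and $b_m$ earns its keep: without it, the raw noise $x_m\tr N_m$ has arm-dependent conditional variance $\norm{x_m}^2$, so no uniform sub-Gaussian constant would be available and the confidence-ellipsoid machinery would not apply directly.
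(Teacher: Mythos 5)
Your proposal is correct and follows essentially the same route as the paper: normalize the observation by $\norm{x_m}$ so the noise $\hat x_m\tr N_m$ becomes standard Gaussian (hence $1$-sub-Gaussian), and then invoke Theorem 2 of \cite{abbasi2011improved} with $R=1$. Your write-up is in fact somewhat more careful than the paper's, since you explicitly set up the filtration and verify conditional sub-Gaussianity of the rescaled noise, a point the paper's proof states only informally.
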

\begin{proof}
The proof is a consequence of \cite[Theorem 2]{abbasi2011improved}. Indeed, the hypotheses of the stated theorem are satisfied with the sub-Gaussianity parameter $R = 1$ since by (\ref{eq:linban-depnoise}), the stochastic reward $r_m$ normalised by $\norm{x_m}$ at each instant satisfies
\[ \frac{r_m}{\norm{x_m}} = \frac{x_m}{\norm{x_m}} \tr \theta + \frac{x_m}{\norm{x_m}} \tr N_m, \]
whose distribution is Gaussian with mean $\frac{x_m}{\norm{x_m}} \tr \theta$ and variance $1$, and which is thus sub-Gaussian with parameter $R = 1$. 
\end{proof}

\subsubsection{Proof of Theorem \ref{thm:linear-bandit-regret}}
\label{thm:woful_linear_regret_bound_proof}
We first prove the following lemma which we will need in the proof of Theorem \ref{thm:linear-bandit-regret}. It relates the weight-distorted reward $\mu_x(\theta)$ of the stochastic reward from an arm $x \in \X$ to the weight-distorted reward of a translated standard normal distribution. 

\begin{lemma}[Weight-distorted value under scaling]
\label{lem:CPTscaling}
If $Z$ denotes a standard normal random variable, then for any arm $x$ and $\theta$, \[ \mu_{x}(\theta) = \norm{x} \cdot \mu_{Z + \frac{x\tr\theta}{\norm{x}}}, \]
where $\mu_{Y}$ denotes the weight-distorted reward of a random variable $Y$.
\end{lemma}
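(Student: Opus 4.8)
The plan is to reduce the claim to a single positive-homogeneity (scaling) property of the weight-distorted value, together with the observation that $\mu_Y$ depends only on the law of $Y$. First I would identify the stochastic cost as a scaled and shifted standard normal. Writing $C_x := x\tr(\theta + N)$ with $N$ a vector of i.i.d.\ standard Gaussians, the linear form $x\tr N$ is Gaussian with mean $0$ and variance $\norm{x}^2$, so $x\tr N \stackrel{d}{=} \norm{x}\, Z$ for a standard normal $Z$. Hence, assuming $x \neq 0$ (the degenerate case $x = 0$ gives $\mu_x(\theta) = 0$ on both sides),
\[
C_x = x\tr\theta + x\tr N \;\stackrel{d}{=}\; \norm{x}\left( Z + \frac{x\tr\theta}{\norm{x}} \right).
\]
Since $F_x^\theta$ is the CDF of $C_x$, the definition \eqref{eq:cpt-lb} coincides with $\mu_{C_x}$ in the sense of Lemma~\ref{lemma:cptdiff} (the strict versus non-strict inequalities agree because $C_x$ has a continuous law for $x\neq 0$), and $\mu_{C_x}$ is a functional of the distribution of $C_x$ alone.

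Second I would establish the scaling identity: for every $c > 0$ and every random variable $Y$, one has $\mu_{cY} = c\,\mu_Y$. This follows by the substitution $u = z/c$ in each of the two tail integrals defining $\mu$. For $c > 0$ we have $\prob{cY > z} = \prob{Y > z/c}$ and $\prob{-cY > z} = \prob{-Y > z/c}$, and since the substitution preserves the limits $0$ and $\infty$, it extracts a single factor $c$ from each term:
\[
\intinfinity w(\prob{cY > z})\, dz = c \intinfinity w(\prob{Y > u})\, du,
\]
and analogously for the negative part. Subtracting the two then gives $\mu_{cY} = c\,\mu_Y$ (the identity holding in the extended sense when the integrals are infinite, and finitely under the hypotheses of Theorem~\ref{thm:linear-bandit-regret}).

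Finally I would combine the two steps. Applying the scaling identity with $c = \norm{x}$ and $Y = Z + \frac{x\tr\theta}{\norm{x}}$, and invoking both the distributional equality of the first step and the law-invariance of $\mu$, yields
\[
\mu_x(\theta) = \mu_{C_x} = \mu_{\norm{x}\, Y} = \norm{x}\cdot \mu_{Z + \frac{x\tr\theta}{\norm{x}}},
\]
which is exactly the claim. None of the steps is genuinely deep; the only place needing care is the change-of-variables bookkeeping in the scaling identity, namely keeping the positive- and negative-part integrals separate and checking that $c > 0$ so that the limits $0$ and $\infty$ are preserved and precisely one factor $c$ is pulled out of each term.
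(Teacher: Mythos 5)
Your proposal is correct and follows essentially the same route as the paper's proof: both identify the cost $x\tr(\theta+N)$ as a Gaussian with mean $x\tr\theta$ and variance $\norm{x}^2$ and then pull out the factor $\norm{x}$ via the change of variables $z \mapsto z/\norm{x}$ in the two tail integrals. The only cosmetic difference is that you package the change of variables as a standalone positive-homogeneity identity $\mu_{cY} = c\,\mu_Y$ (plus law-invariance and the $x=0$ edge case), whereas the paper performs the same substitution inline within the probability events.
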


\begin{proof}
Let $X$ denote the stochastic reward from arm $x$ with $\theta$ as the parameter; we have that $X$ is normal with mean $x\tr\theta$ and variance $\norm{x}^2$. With $\stdnormal$ being a standard normal $d$-dimensional vector and $\hat x = \frac{x}{\norm{x}}$, we can write
	\begin{align*}  
	 \mu_X &:= \intinfinity w(\prob{X > z}) dz - \intinfinity w(\prob{-X > z}) dz\\
	 & =  \intinfinity w(\prob{x\tr\theta + x\tr\stdnormal > z}) dz - \intinfinity w(\prob{-x\tr\theta - x\tr\stdnormal > z})dz\\
	 & = \intinfinity w(\prob{\hat x\tr\theta + \hat x\tr\stdnormal > \frac{z}{\norm{x}}}) dz - \intinfinity w(\prob{-\hat x\tr\theta - \hat x\tr\stdnormal > \frac{z}{\norm{x}}})dz\\
	 &= \norm{x}\left(\intinfinity w(\prob{\hat x\tr\theta + \hat x\tr\stdnormal > y}) dy - \intinfinity w(\prob{-\hat x\tr\theta - \hat x\tr\stdnormal > y})dy\right)\\
	 &= \norm{x} \mu_{Z + \hat x\tr\theta};\stepcounter{equation}\tag{\theequation}\label{eq:t121}
	\end{align*}
\end{proof}

\begin{remark}\label{remark:sub-gauss-linear}
Note that the above proof holds even when $\stdnormal$ is a vector of sub-Gaussian random variables with parameter $1$ (cf. \cite[Chapter 2]{wainwright2019high}). Hence, the results for the linear bandit setting can be generalized to the case of sub-Gaussian noise.  
\end{remark}

\begin{proof}[Proof of Theorem \ref{thm:linear-bandit-regret}]
Let $ir_m = \mu_{x_*}(\theta_*) - \mu_{x_{m}}(\theta_*)$ denote the instantaneous regret incurred by the algorithm that chooses arm $x_{m}\in \X$ in round $m$.
Let $\arm_x$ (resp. $P_x$) denote the r.v. (resp. probability function) governing the rewards of the arm $x \in \X$.

Letting $\hat x_m = \frac{x_m}{\norm{x_m}}$ and $W$ to be a standard Gaussian r.v.,  we upper-bound the instantaneous regret $ir_m$ at round $m$ as follows: 
\begin{align}
ir_m =  \mu_{x_*}(\theta_*) - \mu_{x_{m}}(\theta_*) & \le \mu_{x_m}(\tilde\theta_m) - \mu_{x_{m}}(\theta_*) \label{eq:rm1}\\
& = \norm{x_m} \left( \mu_{W+ \hat x_{m}\tr\tilde\theta_m} - \mu_{W+ \hat x_{m}\tr\theta_*} \right) \label{eq:rm2}\\
& \le 2 \norm{x_m} \left| \hat x_{m}\tr(\tilde\theta_m - \theta_*) \right| \label{eq:rm3}\\
& = 2 \left| x_{m}\tr(\tilde\theta_m - \theta_*) \right| \label{eq:rm4}\\
& \le 2 \left(\left| x_{m}\tr(\hat\theta_{m} - \theta_*)\right|  + \left| x_{m}\tr(\tilde\theta_m - \hat\theta_{m})\right|\right)  \label{eq:rm5} \\
& = 2 \left(\norm{\hat\theta_{m} - \theta_*}_{A_m}  \norm{x_{m}}_{A_m^{-1}}  + \norm{\tilde\theta_m - \hat\theta_{m}}_{A_m}   \norm{x_{m}}_{A_m^{-1}} \right)  \label{eq:rm6} \\
& \le 4 w_m \sqrt{D_m}  \quad \text{whenever }\theta_* \in C_m, \label{eq:rm7}
\end{align} 
where $w_m = \sqrt{x_m\tr A_m^{-1}x_m}$ and $D_m$ is as defined in Algorithm \ref{alg:WOFUL}.
The inequalities above are derived as follows: \eqref{eq:rm1} follows from the choice of $x_{m}$ and $\tilde \theta_m$ in Algorithm~\ref{alg:WOFUL}; \eqref{eq:rm2} follows from the scaling lemma (Lemma \ref{lem:CPTscaling}); \eqref{eq:rm3} follows by applying the translation lemma (Lemma \ref{lemma:cptdiff}); \eqref{eq:rm4} follows from the definition of $\hat x_{m}$ \eqref{eq:rm6} is by the Cauchy-Schwarz inequality applied to the pair of dual norms $\norm{\cdot}_{A_m}$ and $\norm{\cdot}_{A_m^{-1}}$ on $\R^d$; \eqref{eq:rm7} follows from the definition of the confidence ellipsoid $C_m$ in Algorithm \ref{alg:WOFUL}, as $\hat\theta_{m}$, $\tilde \theta_m$ and $\theta_*$ belong to $C_m$. Note that Lemma \ref{lemma:confidenceellipsoid} guarantees that this is true w.p. at least $1-\delta$.

The cumulative regret $R_n$ of W-OFUL can now be bounded as follows. With probability at least $1-\delta$, 
\begin{align}
R_n = \sum_{m=1}^n ir_m \le  \sum_{m=1}^n 4 \sqrt{D_m} w_m &\le 4 \sqrt{D_n}  \sum_{m=1}^n w_m \nonumber \\
& \le 4 \sqrt{D_n}  \left(n \sum_{m=1}^n w_m^2\right)^{1/2} \label{eq:Rn3}\\
& \le \sqrt{32 d n D_n \log n}. \label{eq:Rn4}
\end{align} 
Inequality \eqref{eq:Rn3} follows by an application of the Cauchy-Schwarz inequality, while the inequality in \eqref{eq:Rn4} follows from (the standard by now) Lemma 9 in~\cite{dani2008stochastic}, which shows that $\sum_{m=1}^n w_m^2 \le  2d \log n$. 
The claim follows.
\end{proof}

\subsection{Proofs for best arm identification setting}
\subsubsection{Proof of Theorem~\ref{Thm:LinBdt1}.}
\label{sec:proof-W-G-upperbound}
We use the proof technique from~\cite{soare2014best} for proving this result. Since $\vert \mu_{x^*} \left( \hat{\theta}_n \right) - \mu_{x^*} \left( \theta^* \right) \vert \leq 2 \vert x^{*T} \left( \hat{\theta}_n - \theta^* \right) \vert$ (due to Lemma~\ref{lemma:cptdiff}), we get that
\begin{align*}
\left[  \mu_{x^*} \left( \hat{\theta}_n \right) - \mu_{x^*} \left( \theta^* \right) \right] - \left[  \mu_{x} \left(\hat{\theta}_n \right) - \mu_{x} \left( \theta^* \right) \right] &\geq -2 \vert x^{*T} \left( \hat{\theta}_n - \theta^* \right) \vert -2 \vert x^T \left( \hat{\theta}_n - \theta^* \right) \vert \\
& \overset{(a)}{\geq} -2 \Vert x^{*T} \Vert_{A_n^{-1}} \Vert \hat{\theta}_n - \theta^* \Vert_{A_n} -2 \Vert x^T \Vert_{A_n^{-1}} \Vert \hat{\theta}_n - \theta^* \Vert_{A_n} \\
&= -2 \Vert \hat{\theta}_n - \theta^* \Vert_{A_n} \left[ \Vert x^{*T} \Vert_{A_n^{-1}} +  \Vert x^T \Vert_{A_n^{-1}} \right],
\end{align*}
where $(a)$ is due to the Cauchy-Schwartz inequality. Hence, we have
\begin{align}
\hat{\Delta}_n \left( x^*, x \right) \geq \Delta \left( x^*, x \right) -2 \Vert \hat{\theta}_n - \theta^* \Vert_{A_n} \left[ \Vert x^{*T} \Vert_{A_n^{-1}} +  \Vert x^T \Vert_{A_n^{-1}} \right]
\end{align}
From Lemma~\ref{lemma:confidenceellipsoid}, we obtain
\begin{align}
\mathbb{P} \left( \forall n, \Vert \hat{\theta}_n - \theta^* \Vert_{A_n} \leq D_n \right) \geq (1 - \delta).
\end{align} 
Let $\rho^G_n = n \max\limits_{x \in \mathcal{X}} x^T A^{-1}_n x.$ Then, 
\begin{align}
\hat{\Delta}_n \left( x^*, x \right) \geq \Delta \left( x^*, x \right) -2 D_n \left[ \Vert x^{*T} \Vert_{A_n^{-1}} +  \Vert x^T \Vert_{A_n^{-1}} \right] &= \Delta \left( x^*, x \right) -2 D_n \sqrt{\frac{\rho^G_n}{n}}  \nonumber\\
& \geq \Delta_{\min} -2 D_n \sqrt{\frac{\rho^G_n}{n}} \label{eq:111}
\end{align}
From~\eqref{eq:Stop100}, we get the following sufficient condition for stopping:
\begin{align}
\label{eq:112}
\hat{\Delta}_n^2 (x^*, x) \geq \frac{D_n^2 \rho^G_n}{n} .
\end{align}
From~\eqref{eq:111} and \eqref{eq:112}, we get the following sufficient condition:
\begin{align}
\Delta_{\min} -2 D_n \sqrt{\frac{\rho^G_n}{n}} \geq D_n \sqrt{\frac{ \rho^G_n}{n}}.
\end{align}
Or equivalently,
\begin{align}
\frac{ 9 D_n^2 \rho^G_n}{n}\leq \Delta_{\min}^2.
\end{align}
It is easy to see that for $n = \frac{ 9 D_n^2 \rho^G_n}{\Delta_{\min}^2}$, the above inequality is satisfied. Hence, the sample complexity $N^G$ of W-G algorithm satisfies
\begin{align*}
\mathbb{P} \left( N^G \leq \frac{ 9 D_n^2 \rho^G_n}{\Delta_{\min}^2} \,\, \text{and returning optimal arm} \right) \geq (1 - \delta).
\end{align*}
Note that for large $n$ and large number of arms, the optimization problem in \eqref{eq:arm-selection-cpt-g-1} becomes an NP-hard problem~\cite{soare2014best}. However, a wide number of approximate methods have been proposed in the literature to solve the same~\cite{sagnol2013approximation, soare2014best}. Using~\cite{soare2014best}, we say that the performance of any $\epsilon$-approximate method to solve~\eqref{eq:arm-selection-cpt-g-1} is no worse than a factor of $(1+\epsilon)$ of the solution obtained by solving the incremental version of ~\eqref{eq:arm-selection-cpt-g-1} given below:
\begin{align*}
\label{eq:arm-selection-cpt-g-2}
x_m= \arg\min\limits_{x' \in \mathcal{X} }  \max\limits_{x \in \mathcal{X}} \Vert x \Vert_{\left( A_{\bf{a_{m-1}}} + x' x'^T \right)^{-1}} .
\end{align*}
Hence, from the proof of Theorem~1 in~\cite{soare2014best}, we get that $\rho^G_n \leq (1+\epsilon) d,$ where $\epsilon$ appears due to the $\epsilon$-approximation method used to solve the NP-hard optimization problem in~\eqref{eq:arm-selection-cpt-g-1}. Hence, we have that
\begin{align*}
\mathbb{P} \left( N^G \leq \frac{ c D_n^2 d}{\Delta_{\min}^2} \,\, \text{and returning optimal arm} \right) \geq (1 - \delta),
\end{align*}
where $c = 9 \left( 1 + \epsilon \right)$ and this completes the proof.
\hfill{$\blacksquare$}

\section{Numerical experiments for regret minimization under $K$-armed bandits}
\label{sec:experiments-regret-appendix}
We describe two sets of experiments pertaining to the $K$-armed settings. For both problems, we take the $S$-shaped weight function $w(p) = \frac{p^\eta}{\left(p^\eta + (1-p)^\eta \right)^{1/\eta}}$, with $\eta = 0.61$ and H\"{o}lder parameters as $\alpha = \eta$, $L = 1/\eta$ to model perceived distortions in cost/reward. It has been observed in \cite{tversky1992advances} that this distortion weight function is a good
fit to explain distorted value preferences among human beings.

We first study three stylized $2$-armed bandit problems which, in part, draw upon experiments carried out by~\cite{tversky1992advances} on human subjects in their studies of cumulative prospect theory. Figures \ref{fig:karmed} (a) and (b) describe each problem setting in detail (following the convention of this paper of modeling costs, ``\$$x$ w.p. $p$" is taken to mean a loss of \$$x$ suffered with a probability of $p$.)

For the first problem, the weight function $w$ gives the distorted cost of arm 1 as \$$10.55$, much higher than its expected cost of \$$5$ and thus more expensive due to the deterministic Arm 2 with a (distorted and expected) cost of \$$7$. The distortion in costs thus shifts the optimal arm from Arm 1 to Arm 2, and an online learning algorithm must be aware of this effect in order to attain low regret with respect to choosing Arm 2. A similar pattern is true for the other problem involving truly stochastic arms -- weight distortion favors the arm with the higher cost in true expectation. 

We benchmark the cumulative regret of two algorithms -- (a) the well-known UCB algorithm \cite{auer2002finite}, and (b) W-UCB; the results are as in Figure \ref{fig:karmed}. In the experiments, UCB is not aware of the distorted weighting and hence suffers linear regret with respect to playing the optimal distorted arm, due to converging to essentially a `wrong' arm. On the other hand, the W-UCB algorithm, being designed to explicitly account for distorted cost perception, estimates the distortion using sample-based quantiles and exhibits significantly lower regret. 

\begin{figure}
\centering
  \begin{tabular}{cc}
  \hspace{-1em}
  \begin{subfigure}{0.5\textwidth}
  \label{fig:a}
 \tabl{c}{\scalebox{0.75}{\begin{tikzpicture}
      \begin{axis}[
	xlabel={Rounds},
	ylabel={Cumulative regret},
       clip mode=individual,grid,grid style={gray!30}]
      \addplot+[error bars/.cd,y dir=both,y explicit, every nth mark=10000] table [x=X,y=Y,y error=Y_error,col sep=comma] {karmed-UCB-Setting1.txt};
      \addplot+[error bars/.cd,y dir=both,y explicit, every nth mark=10000] table [x=X,y=Y,y error=Y_error,col sep=comma] {karmed-WUCB-Setting1.txt};
      \legend{UCB,W-UCB}
      \end{axis}
      \end{tikzpicture}}\\}
  \end{subfigure}
  &
  \hspace{-1em}
  \begin{subfigure}{0.5\textwidth}
  \label{fig:b}
   \tabl{c}{\scalebox{0.75}{\begin{tikzpicture}
      \begin{axis}[
	xlabel={Rounds},
	ylabel={Cumulative regret},
       clip mode=individual,grid,grid style={gray!30}]
      \addplot+[error bars/.cd,y dir=both,y explicit, every nth mark=10000] table [x=X,y=Y,y error=Y_error,col sep=comma] {karmed-UCB-Setting2.txt};
      \addplot+[error bars/.cd,y dir=both,y explicit, every nth mark=10000] table [x=X,y=Y,y error=Y_error,col sep=comma] {karmed-WUCB-Setting2.txt};
      \legend{UCB,W-UCB}
      \end{axis}
      \end{tikzpicture}}\\}

  \end{subfigure}
  \end{tabular}
\caption{Cumulative regret along with standard error from $100$ replications for the UCB and W-UCB algorithms for 2 stochastic $K$-armed bandit environments: 
(a) {\em Arm 1:} (\$50 w.p. 0.1, \$$0$ w.p. 0.9) {\bf vs.} {\em Arm 2:} \$7 w.p. 1. (b) {\em Arm 1:} (\$500 w.p. 0.01, \$$0$ w.p. 0.99) {\bf vs.} {\em Arm 2:} (\$250 w.p. 0.03, \$$0$ w.p. 0.97).%
}
\label{fig:karmed}
\end{figure}

In the following set up, we use a heuristic estimator for weight-distorted reward as $W$(empirical mean) for distributions with support $\subseteq$ $[0, 1].$  Note that our weight-distorted value estimate turns out to be $W(\textrm{empirical mean})$ for the Bernoulli distributions. We adapt the classic UCB~\cite{auer2002finite} policy to this set up by taking the above \textbf{W}(\textbf{E}mpirical mean) as a surrogate for the weight-distorted reward estimate and call the policy WE-UCB.   

\textbf{Setup(*).} We consider a simple two-armed stochastic bandit with arm distributions' as follows: 
\[
\text{Arm-1} =
\begin{cases}
  1  & \text{w.p.} \quad  0.1 \\
  0.2  & \text{w.p.} \quad 0.1 \\
  0  & \text{w.p.} \quad 0.8
\end{cases}
\quad \quad
\text{Arm-2} =
\begin{cases}
  1  & \text{w.p.} \quad  10^{-3} \\
  0.2  & \text{w.p.} \quad 0.8 \\
  0  & \text{w.p.} \quad 0.199
\end{cases}
\]
For convenience, we treat these arm distributions as corresponding to rewards and hence the optimal arm is defined as the arm with highest weight-distorted reward. In the following, we show the expected and weight-distorted values of both arms: 
\begin{table}[ht]
\centering 
\begin{tabular}{c c c} 
\hline 
Distribution & Expected value & Weight-distorted value \\ [0.5ex] 
\hline 
Arm-1 & 0.12 & 0.2005 \\ 
Arm-2 & 0.161 & 0.1327\\ [1ex] 
\hline 
\end{tabular}
\label{table:new_setup2} 
\end{table}

Thus, arm-2 is optimal with respect to expected value and arm-1 is optimal with respect to weight-distorted reward where the latter is calculated empirically using a very large number of i.i.d. samples. We now present the comparison of UCB, WE-UCB and W-UCB for a fixed time horizon in Figure~\ref{fig:setup_2_regret}. We fix the time horizon as $10^4$ and run the algorithms for 100 runs and averaged them to obtain the results. It is clear from Figure~\ref{fig:setup_2_regret} that our algorithm W-UCB (achieves logarithmic regret) outperforms both UCB and WE-UCB (both achieve linear regrets) in this simple setup. Hence, these results justify the need of our algorithms under the weight-distorted setup considered herein.   

\begin{figure}[htb]
\center{\includegraphics[scale=0.6]{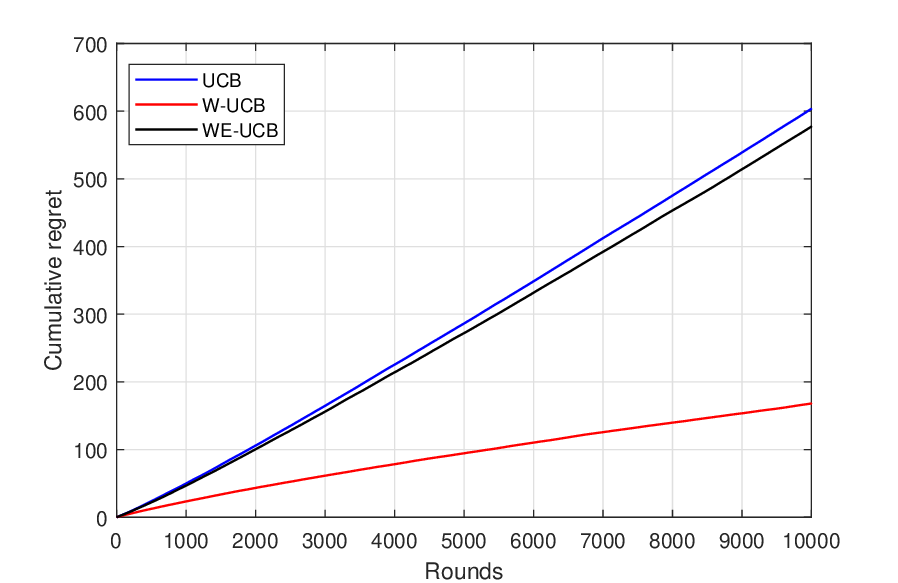}}
\caption{\label{fig:setup_2_regret} Comparison of UCB, WE-UCB and W-UCB for regret minimisation in the stochastic bandits.}
\end{figure}

\section{Numerical experiments for best arm identification setting}
\label{sec:experiments-bestarm}
\label{sec-experiments}
\textit{Basic methodology.} The simulations presented in this section are performed using MATLAB, and all results are averaged over 100 independent runs. The expected value for each arm is obtained analytically, while the weight-distorted reward is estimated numerically through Algorithm \ref{alg:cpt-estimator} using $50000$ samples from the underlying distribution of the individual arms.
\subsection{$K$-armed bandits}
We consider the following three-armed bandit problem to test the fixed confidence and budget algorithms: Arm~1 takes values $50$ and $0$ with probabilities $0.1$ and $0.9$, respectively; Arm~2 takes values $30$ and $0$ with probabilities $0.2$ and $0.8$; and  Arm~3 takes value $7$ with probability $1$. Under the weight function  $w(x) = \frac{x^{0.61}}{\left( x^{0.61} + (1-x)^{0.61} \right)^\frac{1}{0.61}}$, arm~1 has the highest weight-distorted reward. On the other hand, the expected value of arm~3 is the highest. We implement LUCB \cite{kalyanakrishnan2012pac}, SR \cite{audibert2010best} and their variants that incorporate weight-distorted reward. The latter algorithms are W-LUCB and W-SR, described in Section \ref{sec:algos-karmed-bestarm}. 

We ran SR and our W-SR algorithm for simulation budgets ranging from $10^4$ samples to $5 \times 10^4$ samples on the aforementioned three-armed bandit problem. We observed that SR and W-SR converge to arm 3 and arm 1, respectively, whose expected and weight-distorted rewards values are listed in Table~\ref{tab:sr-results}. It is evident that a specialized algorithm such as W-SR is necessary to optimize weight-distortion-based criteria, and the regular SR algorithm that optimizes expected value cannot be used as a surrogate to find a weight-distortion optimal solution.         
\begin{table}
  \caption{Expected and weight-distorted rewards for a three-armed bandit problem. W-SR converges to arm $1$, while SR converges to arm $3$; the expected/weight-distorted rewards of these arms are tabulated below.}
  \label{tab:sr-results}
 \centering
 \begin{tabular}{c|c|c}
  \toprule 
   & \textbf{Expected reward}& \textbf{Weight-distorted reward }\\\midrule
   SR & \textbf{$7$} & $7$ \\\midrule
   W-SR & $5$ & \textbf{$9.304$}\\\bottomrule
  \end{tabular}
  \end{table}
Next, we compare the performance of LUCB and our W-LUCB algorithm for various confidence parameters and present the results in Figure~\ref{Fig:FC-1}. For a given confidence parameter, the empirical probability of incorrect identification of an algorithm is calculated by the fraction of runs on which the algorithm returned a sub-optimal arm. It is easy to see that W-LUCB's empirical probability of incorrect identification is always lower than the given confidence parameters, while that of LUCB is above the given confidence parameters. Thus, it is very likely that LUCB returned a sub-optimal arm, while W-LUCB found the optimal arm. As in the case of fixed confidence algorithms, the results in Figure \ref{Fig:FC-1} justify the need for an algorithm like W-LUCB, as LUCB does not find the weight-distortion optimal solution.
\begin{figure}[h]
\centering
\begin{tikzpicture}
\begin{axis}[ylabel style={align=center}, xlabel={{\small Expected sample complexity}}, ylabel={{\small Confidence/probability of}\\ {\small incorrect identification}},width=7cm,height=6cm,grid,tick scale binop={\times},
legend style={font=\small},
legend style={at={(0,0)},anchor=north,at={(axis description cs:0.5,-0.26)},legend columns=2},
x label style={at={(axis description cs:0.5,-0.0025)},anchor=north},
every x tick scale label/.style={
    at={(1,0)},xshift=-1pt,yshift=-1.2em,anchor=south west,inner sep=0pt
}
]
\addplot+[only marks,mark=o,mark options={scale=1.2}] file {results/expected_sample_complexity_LUCB_delta.txt};
\addlegendentry{Confidence (LUCB)}
\addplot+[only marks,mark=square*,mark options={scale=1.2}] file {results/expected_sample_complexity_LUCB_proberror.txt};
\addlegendentry{Probability of incorrect identification (LUCB)}
\addplot+[only marks,mark=o,mark options={scale=1.2}] file {results/expected_sample_complexity_CPTLUCB_delta.txt};
\addlegendentry{Confidence (W-LUCB)}
\addplot+[only marks,mark=square*,color=green,mark options={scale=1.2}] file {results/expected_sample_complexity_CPTLUCB_proberror.txt};
\addlegendentry{Probability of incorrect identification (W-LUCB)}
\end{axis}
\end{tikzpicture}
\caption{Comparison of LUCB and W-LUCB algorithms in fixed confidence setting using the probability of incorrect identification in identifying the arm with the highest weight-distorted reward.}
\label{Fig:FC-1}
\end{figure}  

We now compare the performance of LUCB, W-LUCB and WE-LUCB, where the latter is an adaptation of classic LUCB to fixed confidence setting by using $W(\textrm{empirical mean})$ as a surrogate for the weight-distorted value estimate. We use the same Setup(*) that is considered in Section~\ref{sec:experiments-regret-appendix} here. Figure~\ref{fig:setup_2} shows the comparison of LUCB, WE-LUCB and our W-LUCB for various confidence ($\delta$) parameters. We have run the algorithms for 100 runs and calculated expected sample complexity as the average sample complexity across all the runs. Similarly, probability of incorrect identification is calculated as the fraction of runs where the algorithm has returned a sub-optimal arm. It is easy to observe from Figure~\ref{fig:setup_2} that our algorithm W-LUCB outperforms both LUCB and WE-LUCB in this very simple set-up.

\begin{figure}[htb]
\center{\includegraphics[scale=0.6]{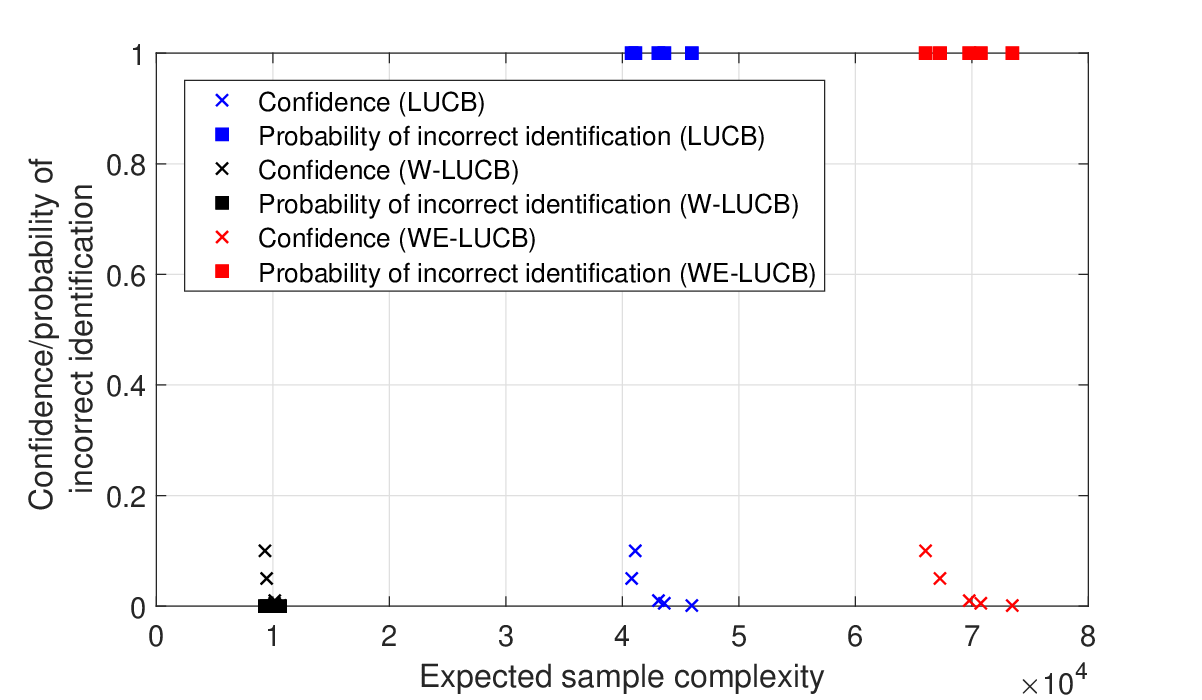}}
\caption{\label{fig:setup_2} Comparison of LUCB, WE-LUCB and W-LUCB for best arm identification in stochastic bandits with fixed confidence setting.}
\end{figure}

\subsection{Linear bandits}  

\begin{figure}
\centering
\scalebox{0.75}{
 \begin{tikzpicture}[font=\sffamily, every matrix/.style={ampersand replacement=\&,column sep=2cm,row sep=2cm}, process/.style={draw,thick,circle,fill=blue!20}]
\matrix{
\node[process] (a1) {1}; \&    \node[process] (a2) {2};
      \& \node[process] (a3) {3}; \& \\

   \node[process] (a4) {4};
      \& \node[process] (a5) {5};   \& \node[process] (a6) {6};\\
   \node[process] (a7) {7};
      \& \node[process] (a8) {8};   \& \node[process] (a9) {9};\\
  };
 \draw[green!40!black] (a1) --node[above]{$l_1$} (a2);
 \draw[green!40!black] (a2) --node[above]{$l_2$} (a3);
 \draw[green!40!black] (a4) --node[above]{$l_6$} (a5);
 \draw[green!40!black] (a5) --node[above]{$l_7$} (a6);
 \draw[green!40!black] (a7) --node[above]{$l_{11}$} (a8);
 \draw[green!40!black] (a8) --node[above]{$l_{12}$} (a9);
 \draw[green!40!black] (a1) --node[right]{$l_3$} (a4);
 \draw[green!40!black] (a2) --node[right]{$l_4$} (a5);
 \draw[green!40!black] (a3) --node[right]{$l_5$} (a6);
 \draw[green!40!black] (a4) --node[right]{$l_8$} (a7);
 \draw[green!40!black] (a5) --node[right]{$l_9$} (a8);
 \draw[green!40!black] (a6) --node[right]{$l_{10}$} (a9);
%
\end{tikzpicture}}
\caption{A $3\times3$-grid network}
\label{Fig:grid-network}
\end{figure}
As shown in Figure~\ref{Fig:grid-network}, we consider a $3\times3$-grid network with nodes labelled $1$ and $9$ as the source and destination, respectively. Each link $l_i$ is associated with a parameter $\theta_i$ that indicates the mean delay for the link. We consider the following six paths as the arms of the linear bandit: $x_1 = (l_1, l_2, l_5, l_{10}),$ $x_2 = (l_1, l_4, l_9, l_{12}),$ $x_3 = (l_3, l_8, l_{11}, l_{12}),$ $x_4 = (l_3, l_6, l_7, l_{10}),$ $x_5 = (l_3, l_6, l_9, l_{12}),$ $x_6 = (l_1, l_4, l_7, l_{10}).$ We set $\theta = \left[ 30\,10\,30\,5\,1\,20\,15\,7\,20\,30\,1\,30 \right]$, leading to mean delays of $71, 85, 69, 95, 100, 80$ for arms $1$ to $6$, respectively. 

A road user traversing any path will experience a sample delay which is the sum of mean overall  delay together with a noise element that captures the stochastic nature of the routing problem.  The noise element uses the following model: arms $1, 2, 4, 5$ and $6$ have standard Gaussian noise, while the noise element for arm 3 takes values $100$ and $0$ with probabilities $0.01$ and $0.99$, respectively. Thus, arm $3$ has the lowest expected delay.

    \begin{table}
	\caption{Expected delay and weight-distorted rewards for the linear bandit problem corresponding to 3$\times3$ grid network shown in Figure~3. W-G converges to arm $1$, while G-allocation converges to arm $3;$ the expected delay and weight-distorted rewards of these arms are tabulated below.}
	\label{tab:cpt-g-results}
	\centering
	\begin{tabular}{c|c|c}
		\toprule 
		& \textbf{Expected delay}& \textbf{Weight-distorted reward}\\\midrule
		G-allocation & \textbf{$69$} & $7.795$ \\\midrule
		W-G & $71$ & \textbf{$8.865$}\\\bottomrule
	\end{tabular}
\end{table} 

For the weight-distorted reward, we used a baseline delay of $80$, corresponding to arm $6$. The rationale is that road users currently take arm $6$ and any routing improvement is relative to the status quo (or) if a new path results in a overall delay of $70$, then it corresponds to a gain of $10$, while a delay of $100$ corresponds to a loss of $20$. The goal is to maximize the weight-distorted reward of the delay gain, where the gain is defined relative to baseline (arm $6$). In this setting, arm $1$ has the highest weight-distorted reward.
Intuitively, arm $3$ is less appealing to a human road user as it has a small probability of resulting in a very high delay, even though its mean delay is the lowest. On the other hand, arm $1$ is slightly sub-optimal in the mean delay, but has no chance of inordinately high delays.
We compare G-allocation \cite{soare2014best} that attempts to find a path with the lowest mean delay against our W-G algorithm that incorporates weight-distortion based criteria to find a path that is more appealing to a human road user.   

 Table \ref{tab:cpt-g-results} presents the results from simulation runs of W-G and G-allocation, with  $\delta = 0.01$. As the reader might expect, W-G algorithm converges to arm $1$, while G-allocation in~\cite{soare2014best} to arm $3$.  Therefore,  W-G algorithm is more conducive in identifying a path that emulates human decision making.

\end{document}